\def\fighome{.}
\def\bibhome{.}
\newcommand*{\mytensor}[1]{\mathcal{#1}}
\newcommand{\diag}{\mathsf{Diag}}
\newcommand\defeq{\mathrel{\overset{\makebox[0pt]{\mbox{\normalfont\tiny\sffamily def}}}{=}}}
\def\Pbb{{\mathbb P}}
\def\Rbb{\mathbb{R}}
\def\Ebb{\mathbb{E}}
\def\Est{\mathbb{{\hat{E}}}}
\def\tha{{\mbox{\tiny th}}}
\def\beq{\begin{equation}}
\def\nn{\nonumber}
\def\eeq{\end{equation}\noindent}
\newcommand{\bp}{\begin{psfrags}}
\newcommand{\ep}{\end{psfrags}}
\newcommand{\bc}{\begin{center}}
\newcommand{\ec}{\end{center}}
\newcommand\independent{\protect\mathpalette{\protect\independenT}{\perp}}
\def\independenT#1#2{\mathrel{\rlap{$#1#2$}\mkern2mu{#1#2}}}
\def\Pr{p}
\def\diag{\mathrm{diag}}
\def\cA{\mathcal{A}}
\newtheorem{theorem}{Theorem}
\newtheorem{lemma}[theorem]{Lemma}
\newtheorem{definition}[theorem]{Definition}
\newtheorem{proposition}[theorem]{Proposition}
\newtheorem{identity}[theorem]{Identity}
\newtheorem{remark}[theorem]{Remark}
\newcommand{\pv}{\frac{c_n}{l_n} }  %
\newcommand{\wordIndTwo}{k}
\newcommand{\wordOne}{x_1}
\newcommand{\wordTwo}{x_2}
\newcommand{\wordThree}{x_3}
\newcommand{\vocabsize}{d}
\newcommand{\numTopic}{k}
\newcommand{\topicInd}{i}
\newcommand{\docInd}{n}
\newcommand{\wordInd}{j}
\newcommand{\tflda}{M_1}
\newcommand{\pmat}{ \frac{1}{2\binom{l_n}{2}} (c_n\otimes c_n - \diag(c_n))}
\newcommand{\flda}{\hat{M}_1}
\newcommand{\slda}{ \hat{M}_2 } 
\newcommand{\tlda}{ \hat{M}_3 } 
\newcommand{\nslda}{ \hat{M'}_2 } 
\newcommand{\sldak}{ \hat{M}_{2,k} } 
\newcommand{\nsldak}{ \hat{M'}_{2,k} } 
\newcommand{\ntlda}{ \hat{M'}_3 } 
\newcommand{\wt}{ \hat{M}_3(\hat{W},\hat{W},\hat{W})} 
\newcommand{\nwt}{ \hat{M'}_3(\hat{W'},\hat{W'},\hat{W'})} 
\newcommand{\swt}{ \Delta_{\wtshort}} 
\newcommand{\svbw}{ \Delta_{\bar{\mu}} } 
\newcommand{\sabw}{ \Delta_{\bar{\alpha}} } 
\newcommand{\sv}{ \Delta_{\mu} } 
\newcommand{\sa}{ \Delta_{\alpha} } 
\newcommand{\w}{ \hat{W} } 
\newcommand{\nw}{ \hat{W'} } 
\newcommand{\tslda}{ M_2 } 
\newcommand{\ttlda}{ M_3 } 
\newcommand{\sslda}{ \Delta_{2} } 
\newcommand{\stlda}{ \Delta_{3} } 
\newcommand{\ds}{ \gamma_s } 
\newcommand{\sgs}{ \sigma_k(\slda)-\sigma_{k+1}(\slda) } 
\newcommand{\norm}[1]{\left\lVert#1\right\rVert}
\newcommand{\sk}{ \sigma_k(\slda)}
\newcommand{\tdp}[1]{\tau_{\epsilon_{#1},\delta_{#1}}}
\newcommand{\sgm}[1]{ \sigma_{#1}(\slda) }
\newcommand{\ari}[1]{ \alpha^r_{#1} }
\newcommand{\mbar}[1]{ \bar{\mu}_{#1} }
\newcommand{\abar}[1]{ \bar{\alpha}_{#1} }
\newcommand{\ax}{ \frac{(\alpha_0+2)}{2\sqrt{(\alpha_0+1)\alpha_0}} }
\newcommand{\wtshort}{ \widehat{\mathcal{T}}}
\newcommand{\mpf}[1]{ \tilde{\tilde{M}}_1^{#1}}
 \newcommand{\mps}[1]{ \tilde{\tilde{M}}_2^{#1}}
 \newcommand{\mpt}[1]{ \tilde{\tilde{M}}_3^{#1}}
\newcommand{\astfootnote}[1]{
\let\oldthefootnote=\thefootnote
\setcounter{footnote}{0}
\renewcommand{\thefootnote}{\fnsymbol{footnote}}
\footnote{#1}
\let\thefootnote=\oldthefootnote
}
\newcommand{\mytitle}{An end-to-end Differentially Private Latent Dirichlet Allocation Using a Spectral Algorithm}
\title{\mytitle}
\date{}
\author{Christopher DeCarolis\\  \emph{Department of Computer Science}\\
\emph{University of Maryland}\\
 \href{mailto:cdguitar817@gmail.com}{cdguitar817@gmail.com} 
 \and Mukul Ram  \\ \emph{Department of Computer Science}\\
\emph{University of Maryland}\\
 \href{mailto:mukul.ram97@gmail.com}{mukul.ram97@gmail.com}
 \and Seyed A. Esmaeili \\ \emph{Department of Computer Science}\\
\emph{University of Maryland}\\
 \href{mailto:esmaeili@cs.umd.edu}{esmaeili@cs.umd.edu} 
\and  Yu-Xiang Wang\\ \emph{Department of Computer Science}\\
\emph{UC Santa Barbara}\\
 \href{mailto:yuxiangw@cs.ucsb.edu}{yuxiangw@cs.ucsb.edu}
   \and Furong Huang \\ \emph{Department of Computer Science}\\
\emph{University of Maryland}\\ \href{mailto:furongh@cs.umd.edu}{furongh@cs.umd.edu} }
\begin{document}
\maketitle
\begin{abstract}
We provide an end-to-end differentially private spectral algorithm for learning LDA, based on matrix/tensor decompositions, and establish theoretical guarantees on utility/consistency of the estimated model parameters. The spectral algorithm consists of multiple algorithmic steps, named as ``{edges}'', to which noise could be injected to obtain differential privacy. We identify \emph{subsets of edges}, named as ``{configurations}'', such that adding noise to all edges in such a subset guarantees differential privacy of the end-to-end spectral algorithm. We characterize the sensitivity of the edges with respect to the input and thus estimate the amount of noise to be added to each edge for any required privacy level. We then characterize the utility loss  for each configuration as a function of injected noise.  Overall, by combining the sensitivity and utility characterization, we obtain an end-to-end differentially private spectral algorithm for LDA and identify the corresponding configuration that outperforms others in any specific regime. We are the first to achieve utility guarantees under the required level of differential privacy for learning in LDA. Overall our method systematically outperforms differentially private variational inference.
\end{abstract}
\section{Introduction}
Topic modeling has been used extensively in document categorization, social sciences, machine translation and so forth.
Learning topic modeling involves projecting high dimensional observations (documents) to a lower dimensional latent structure (topics), and outputting a model parameter estimation that describes the generative process of observed documents.  
In this paper, we focus on a popular topic model, Latent Dirichlet Allocation (LDA)~\cite{blei2003latent}.
Popular methods to learning LDA, such as variational inference~\cite{blei2003latent},  
optimizes over a lower bound of the likelihood and is susceptible to local optima due to the non-convexity and high-dimensionality of the likelihood  bound. 


To provide a guaranteed consistent learning algorithm for LDA, tensor decomposition (spectral method) using method of moments is proposed~\cite{anandkumar2012spectral,anandkumar2014tensor}. 
For linearly independent topics, the tensor decomposition spectral algorithm~\cite{anandkumar2012spectral,anandkumar2014tensor} guarantees consistent recovery of the topic-word distribution (i.e. LDA model parameters), if the \emph{third order data moment tensor}, which denotes the expected co-occurrence of triplets of words in a document, is uniquely decomposed. 
A state-of-the-art tensor decomposition algorithm, called the simultaneous power method~\cite{wang2017tensor}, has been proven to recover the true components of \emph{tensors with orthogonal components}. 
Therefore, a whitening procedure that transforms the \emph{third order data moment tensor} into a \emph{tensor with orthogonal components}, which in turn can be decomposed using the simultaneous power method, is used. The whitening procedure involves matrix decomposition on the \emph{second order data moment matrix}, which denotes the expected co-occurrence of pairs of words in a document.

Overall, we have an \emph{end-to-end spectral learning algorithm} for LDA, based on matrix/tensor decomposition. This  algorithm is guaranteed to consistently recover LDA's model parameters.
We introduce an algorithmic flow graph that illustrates our  spectral learning algorithm in figure~\ref{fig:flow}, where each node corresponds to an intermediate objective required for a final output estimation and each edge $\in \{e_i\}_{i=0}^{9}$ denotes certain operation required as a step of the  spectral learning algorithm.

\begin{figure*}[!htbp]
\begin{center}
\psfrag{e0}[Bl]{$e_0$}\psfrag{e1}[Bl]{$e_1$}\psfrag{e2}[Bl]{$e_2$}\psfrag{e3}[Bl]{$e_3$}\psfrag{e4}[Bl]{$e_4$}\psfrag{e5}[Bl]{$e_5$}\psfrag{e6}[Bl]{$e_6$}\psfrag{e7}[Bl]{$e_7$}\psfrag{e8}[Bl]{$e_8$}\psfrag{e9}[Bl]{$e_9$}
\psfrag{M2}[Bl]{$\widehat{M}_2$}\psfrag{M3}[Bl]{$\widehat{M}_3$}\psfrag{W}[Bl]{$\widehat{W}$}\psfrag{TT}[Bc]{$\widehat{M}_3(\widehat{W},\widehat{W},\widehat{W})$} 
\psfrag{W+}[Bl]{$\widehat{W}^\dag$} \psfrag{vi}[Bl]{$\mbar{\topicInd}$}\psfrag{A}[Bl]{$\widehat{\mu}_\topicInd$}
\psfrag{Second Order Moments}[Bc]{Second Order Moments}
\psfrag{Third Order Moments}[Bc]{Third Order Moments}
\psfrag{Whitened Tensor}[Bc]{Whitened Tensor}
\psfrag{Eigenvectors}[Bc]{Eigenvectors}
\psfrag{Topic Word Prob}[Bl]{Topic Word Prob}
\includegraphics[width=0.9\textwidth]{\fighome/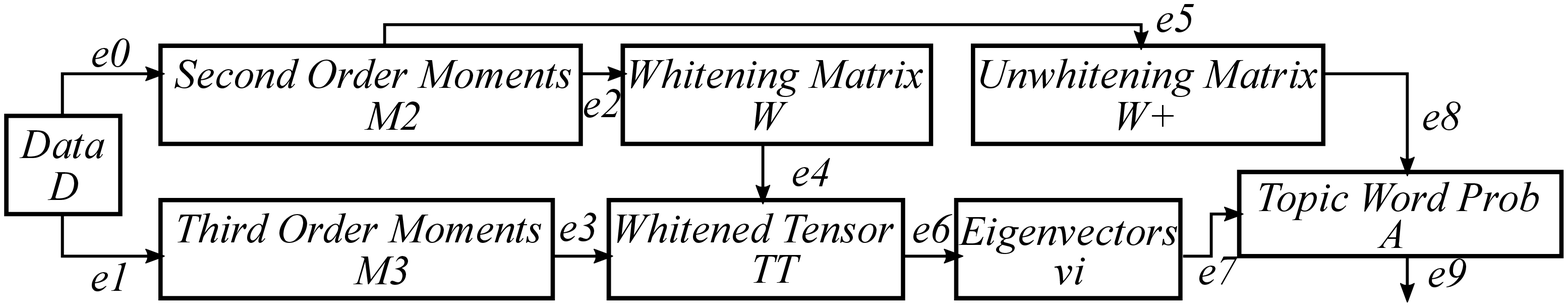}
\caption{Algorithmic flow of end-to-end spectral learning algorithm to learning  LDA topic model.}\label{fig:flow}
\end{center}
\end{figure*}

Although the spectral algorithm enjoys a provable guarantee in learning LDA, the output of this method could leak sensitive information, which limits the applicability of LDA in legal, financial and medical domains.
For instance, consider a situation in which a sensitive document corpus $D$ is kept hidden, but an adversary can obtain the output of the  spectral algorithm on $D$. 
If an additional document $d$ is added to the corpus, and the output changes for one topic $t$, an adversary can then infer that $d$ is related to $t$. 
Differential privacy (DP)~\cite{dwork2006calibrating} is a general framework for quantifying
leakage of private information from an algorithm. 
A generic method to convert an algorithm $A$ to be differentially private is {to add {\em sufficient noise\/} to $A$'s output}.  

 The \textbf{goal} of this work is: 
 \textbf{(1)} to introduce the first  differentially private algorithm that is guaranteed to recover high quality estimates (guarantee consistency) of the LDA model parameters; and 
 \textbf{(2)} to identify a mechanism that suffers least from utility loss under some level of differential privacy. 
 
 To achieve \textbf{goal (1)}, we consider injecting noise to a subset $E$ of edges $\{e_i\}_{i=0}^9$ that separates the input and the output (a cut). When $E$ is a cut, differentially privately releasing all nodes preceding the edges in $E$ guarantees the overall differential privacy according to the composition theorem and the closure to post processing. 
  For instance, adding noise to $E = (e_0,e_2)$ guarantees no privacy as the non-private information could flow to the output through the path below. However adding noise to the output $\{e_9\}$, for example, guarantees overall differential privacy as long as the global sensitivity of  $\{e_9\}$ is bounded\footnote{The global sensitivity is not bounded for $\{e_9\}$, unfortunately.}.  We call such a subset of edges as a ``configuration'' if adding noise to all edges in this configuration guarantees differential privacy of the overall algorithm. Four configurations are identified as shown in Section\ref{sec:utility-section} ``Differentially Private Spectral Algorithm''. 
 
 The amount of noise needed to achieve same level of differential privacy is different across edges as the sensitivities of edges vary, and thus utility losses caused by injecting the noise vary across edges.  Depending on which configuration we choose,  
 we may have quite different utility measures.
To obtain \textbf{goal (2)} and solve the problem of where to add the noise for least utility loss to guarantee ($\epsilon$,$\delta$)-differential privacy, we characterize the sensitivity of all the edges $\{e_i\}_{i=0}^9$ and utility bounds for all configurations with respect to the input. We then identify corresponding configuration that outperforms others in any regime. 

\paragraph{Related Works} In Appendix~\ref{sec:related_work}, we provide context on popular LDA methods, and provide a primer on differential privacy and tensor decomposition.

\paragraph{Comparison of our method against differentially private variational inference.}
The proposed approach is advantageous over differentially private VI as it (1) retains consistency guarantees, 
(2) is computationally more efficient,
(3) achieves higher accuracy in synthetic experiments, moreover, (4) does not require performing composition across multiple iterations.
However, the proposed approach (1) is less data efficient and (2) does not report posterior uncertainty in the parameter estimates.

\medskip

\noindent \textbf{Summary of Contributions}\\
\textbf{(1)} We illustrate the computation graph of our  spectral algorithm for LDA in figure~\ref{fig:flow}, based on which we list subsets of edges as configurations, on which noise could be added to guarantee differential privacy according to the composition theorem.  \\
\textbf{(2)} We bound the sensitivity of releasing various intermediate quantities on the computational graph, which leads to a number of methods for achieving both the pure-$\epsilon$-DP and the approximate $(\epsilon,\delta)$-DP.
In the cases when the global sensitivity is unbounded, we come up with a data-dependent DP that exploits the small local sensitivity.\\
\textbf{(3)} 
We achieve consistency guarantee of the algorithm by characterizing the utility loss guarantees between the \textbf{true parameters} and the differentially private parameters we obtain.
This result is stronger than traditional utility loss between the non-differentially private parameters and the differentially private parameters.\\
\textbf{(4)} Overall, 
we introduce a systematic analysis of differentially private spectral algorithm for LDA and obtain the regimes under which adding noise to each configuration guarantees the best performance.  
Using this framework, we demonstrate multiple mechanisms, which permit differentially private algorithms whose utilities are advantageous in different regimes as listed in Remark~\ref{rm:conf1_2},~\ref{rm:conf2_3} and ~\ref{rm:conf3_4}.\\
\textbf{(5)} Empirical studies confirm that our method systematically outperforms differentially private variational inference.
\section{A. Related Work}\label{sec:related_work}

Wang et al.~\cite{wang2018frequentist} establishes an impressive result of frequentist consistency and asymptotic normality of VB methods, in which the consistency is based on the assumption of achieving the \emph{optimal variational posterior}. 
However the feasibility of achieving such optimal variational posterior (which requires global optimal solution over a non-convex ELBO objective) remains unclear as the global optimality is not necessarily guaranteed.

%
This work focuses on LDA parameter estimation based on spectral
algorithms which, 
unlike EM-based algorithms\cite{park2016dp,park2016private}, guarantee parameter
recovery if a mild set of assumptions are met \cite{anandkumar2012spectral,anandkumar2014guaranteed}. 
The spectral estimation method relies on matrix decomposition and tensor decomposition
methods. 
Thus, differentially private PCA and tensor
decomposition are related to our objective. 

Differentially private PCA is an established topic, and
$(\epsilon,0)$ differentially private PCA was achieved using the
exponential mechanism in
\cite{chaudhuri2012near,kapralov2013differentially}. 
The algorithm in \cite{kapralov2013differentially} provides guarantees
but with complexity $O(d^6)$; in contrast, \cite{chaudhuri2012near}
introduces an algorithm that is near optimal but without an analysis
of convergence time. 
Although $(\epsilon,\delta)$ differential privacy is a more loose
definition of differential privacy, it leads to better utility.
Comparative experimental results show that the $(\epsilon,\delta)$ PCA
algorithm of \cite{imtiaz2016symmetric} outperform $(\epsilon,0)$
significantly, and \cite{dwork2014analyze} introduce a simple input
perturbation algorithm which achieves near optimal utility.
In our work, we follow the $(\epsilon,\delta)$ definition and use
\cite{dwork2014analyze} to obtain a differentially private matrix
decomposition when needed.

Differentially private tensor decomposition is studied in~\cite{wang2016online} with an incoherence basis assumption. It is not
clear the extent to which such an assumption holds in topic
modeling.  Only utility bounds are proved for the top
eigenvector  in ~\cite{wang2016online}. 
The authors exclude the possibility of input perturbation as that causes the privacy parameter to be lower bounded by the dimension
($\epsilon = \Omega(d)$) which is prohibitive. However, the
same analysis on the tensor of a reduced dimension would conclude that
$\epsilon = \Omega(k)$, which is acceptable for a reduced
dimension whitened tensor as $k \ll d$.  


\section{Preliminaries and Notations}
Latent Dirichlet Allocation is characterized by two model parameters: $\bm{\alpha}$, the dirichlet parameter of the topic prior, and $\bm{\mu}$, the topic word matrix.  $\bm{\alpha}$ parameterizes a dirichlet distribution, which determines the topic mixture in each document, $\bm{\mu}$ controls the word distribution per topic. 
We provide a detailed explanation of LDA in Appendix~\ref{sec:prelim_lda}. We use $d$ to denote the number of distinct words in a vocabulary, $N$ to denote the total number of documents, $k$ to denote the number of topics. The topic prior Dirichlet distribution is parameterized by $\bm{\alpha}=(\alpha_1,\ldots,
\alpha_\numTopic)$ and $\alpha_0
  =\sum_{\topicInd=1}^\numTopic \alpha_\topicInd$. For each document $n$, topic proportion is $\theta_n$, document length is $l_n$,  and word frequency vector is denoted as $c_n$. Word tokens are denoted by $x$. 
Let $D, D'$ be two datasets. We say datasets $D$ and $D'$ are adjacent (denoted by $D\sim D'$) if we can form $D'$ by replacing exactly one document from $D$.

\begin{definition}[$(\epsilon,\delta)$-\textbf{Differential Privacy}] Let $\cA:D\rightarrow Y$ be a randomized algorithm. If $\forall D\sim D', \forall S \subseteq Y$
	$\Pbb[\cA(D) \in S] \leq e^{\epsilon} \Pbb[\cA(D^\prime) \in S] + \delta $, then $\cA$ is $(\epsilon,\delta)$-differentially private.  
\end{definition}
\begin{definition}[\textbf{Local / Global Sensitivity}]
	The local sensitivity $\Delta_f(D) := \max_{D' | D'\sim D}\|f(D)-F(D')\|$ and the global sensitivity $\Delta_f := \max_{D} \Delta_f(D)$.  $\ell_p$ norm corresponds to $\ell_p$ sensitivity.
\end{definition}
In Appendix~\ref{sec:dpreview}, we review utility loss \& error, the Gaussian mechanism and the
composition theorem. 
\section{Differentially Private LDA Topic Model}
The commonly used variational inference method, which optimizes over a likelihood lower bound, provides no consistency guarantee due to the non-convexity of the likelihood function.
To achieve a guarantee on the utility for learning LDA, we use a spectral algorithm via matrix/tensor decompositions - the only existing method that provides a guarantee on the performance with enough documents. 
For the LDA model, we define the first, second, and third order LDA moments in Lemma~\ref{lm:mm2mp}. 
Using the properties of LDA, we achieve unbiased estimators of the LDA parameters by decomposing these moments into factors that correspond to each $\mu_i$, formalized in Lemma~\ref{lm:mm2mp}.  
Therefore, as long as we empirically estimate the moments $\tflda$, $\tslda$, and $\ttlda$ without bias, we obtain the model parameters $\alpha$ and $\mu$ via tensor decomposition on the empirically estimated moments. 

\begin{lemma}
[\textbf{LDA moments and Moment Decompositions Recover Model Parameters}]
\label{lm:mm2mp}
Let random variables $x_1$, $x_2$ and $x_3$ denote the first, second and third tokens in a document. Tokens are represented as one-hot encodings, i.e., $x_1=e_v$ if the first token is the v-th word in the dictionary. We define the first, second, and third order moments of LDA $\tflda$, $\tslda$ and $\ttlda$  as 
$\tflda  \defeq \Ebb[x_1]$, 
$\tslda   \defeq  \Ebb[x_1 \otimes x_2] - \frac{\alpha_0 }{\alpha_0+1} \Ebb[x_1] \otimes \Ebb[x_1]$ and 
$\ttlda  \defeq   \Ebb[x_1 \otimes x_2 \otimes x_3]  + \frac{2 \alpha_0^2}{(\alpha_0+1) (\alpha_0+2)} \Ebb[x_1] \otimes 
\Ebb[x_1] \otimes \Ebb[x_1] - \frac{1}{\alpha_0+2} \Big(\Ebb[x_1 \otimes x_2 \otimes \Ebb[x_3]] \nonumber  + \Ebb[x_1 \otimes \Ebb[x_2] \otimes x_3] + \Ebb[\Ebb[x_1] \otimes x_2 \otimes x_3]\Big)$. 
The LDA moments relate to the model parameters $\alpha$ and $\mu$ through matrix/tensor decomposition as follows
\begin{align}
\tflda& = \sum\limits_{\topicInd}^{\numTopic}
\frac{\alpha_\topicInd}{\alpha_0} \mu_\topicInd, 
\ 
\tslda  = \sum\limits_{\topicInd}^{\numTopic} 
\frac{\alpha_\topicInd}{\alpha_0(\alpha_0+1)} \mu_\topicInd\otimes  \mu_\topicInd,
\ \nonumber\\
\ttlda  &= \sum\limits_{\topicInd}^{\numTopic}
\frac{2\alpha_\topicInd}{\alpha_0(\alpha_0+1)(\alpha_0+2)} \mu_\topicInd \otimes  \mu_\topicInd\otimes  \mu_\topicInd.
%
\end{align}
\end{lemma}
Proof is in Appendix~\ref{sec:mm2mp}. Note that $\alpha_0$ is pre-specified and thus data-independent.
Using the properties of LDA, the moments are decomposed as factors shown in Lemma~\ref{lm:mm2mp}, and the factors $\mu_\topicInd$ correspond to the LDA model parameters we aim to estimate. 
According to Lemma~\ref{lm:mm2mp}, decomposing on matrix $\tslda$ only will not result in correct recovery of $\mu_i$ as there are no unique $\mu_i$'s unless $\mu_i \independent \mu_{i^\prime}$ and $\alpha_i \neq  \alpha_{i^\prime}$. The word distributions under different topics are only linearly independent instead of orthogonal. However, tensor decomposition on $\ttlda$ will yield a unique decomposition~\cite{anandkumar2014tensor}. 

\paragraph{Method of Moments \& Tensor Decomposition}
Inspired by Lemma~\ref{lm:mm2mp}, we conclude that  tensor decomposition on $\ttlda$ will result in consistent estimation of the LDA parameters $\alpha$ and $\mu_i$. We have no access to population moments $\tflda$, $\tslda$ and $\ttlda$, but do have access to word
frequency vectors $c_n$.

To solve this problem, we empirically estimate the moments $\tflda$,
$\tslda$, $\ttlda$ as in Equations~\eqref{eq:flda}\eqref{eq:slda}\eqref{eq:tlda} given the observations of word frequency vectors $c_n$, and obtain the model parameters
$\alpha$ and $\mu$ by implementing tensor decomposition on those
empirically estimated moments. 
In Lemma~\ref{m_emc} in Appendix~\ref{app:lda}, we prove that the empirical moment estimators are unbiased.

The method of moments uses the property of data moments of the LDA
model (in Lemma~\ref{lm:mm2mp}) to estimate the parameters of topic
model $\alpha$ and $\mu_\topicInd$, $\forall \topicInd\in \numTopic$.
The algorithm flow is depicted in Figure~\ref{fig:flow} and
consists of the following steps: \textbf{(1)} Using $c_n$ for document
$\forall n\in[N]$, estimate $\slda$ and $\tlda$
using equation~\eqref{eq:slda} ($e_0$ in Figure~\ref{fig:flow}) and
equation~\eqref{eq:tlda} ($e_1$ in Figure~\ref{fig:flow}). \textbf{(2)} Apply SVD on $\slda$ to obtain an estimation of the whitening matrix
$\widehat{W} \defeq \widehat{U}\widehat{\Sigma}^{-\frac{1}{2}}$, where
$\widehat{U}$ and $\widehat{\Sigma}$ are the top $\numTopic$ singular
vectors and singular values of $\slda$ ($e_2$ in
Figure~\ref{fig:flow}). \textbf{(3)} Whiten the tensor $\widehat{\mathcal{T}}
= \tlda(\widehat{W},\widehat{W},\widehat{W})$ using multilinear
operations~\footnote{The $(i,j,k)$-th  entry of the multilinear operation $\tlda(\widehat{W},\widehat{W},\widehat{W})$ is $\sum_{m,n,l}[\tlda]_{m,n,l}W_{m,i}W_{n,j}W_{l,k}$. Since $\widehat{W}$ is a $d\times k$ matrix and $\tlda$ is a $d\times d\times d$ tensor, $\tlda(\widehat{W},\widehat{W},\widehat{W})$ is a $k\times k\times k$ tensor.} on $\tlda$ with $\widehat{W}$ ($e_3$ and $e_4$ in
Figure~\ref{fig:flow}). \textbf{(4)} Implement tensor decomposition
on the whitened tensor $\widehat{\mathcal{T}}$ and denote the resulting
eigenvectors as $\mbar{\topicInd}$, $\forall
\topicInd\in[\numTopic]$ ($e_6$ in Figure~\ref{fig:flow}).
\textbf{(5)} Obtain the un-whitening matrix $\widehat{W}^\dag =
\widehat{\Sigma}^{\frac{1}{2}} \widehat{U}^\top$ ($e_5$ in
Figure~\ref{fig:flow}). \textbf{(6)} Un-whiten the singular vectors
to obtain LDA parameters: ${\widehat{\mu}_\topicInd} \propto
(\widehat{W}^\dag)^\top \mbar{\topicInd}$ and $\widehat{\alpha}_\topicInd$, $\forall
\topicInd\in\numTopic$ ($e_7$ and $e_8$ in
Figure~\ref{fig:flow}).


Our end-to-end spectral algorithm guarantees the correct learning of topic models (see
Lemma~\ref{lm:SampleCompLDA}).

\paragraph{Differentially Private LDA Problem Statement}
%
We assume that the corpus of data is held by
a trusted curator and that an analyst will query for the parameters of the
topic model. The curator has to output the model parameters $\alpha_i,
\mu_i$ in a differentially private manner with respect to the documents. While it is easy to achieve
differential privacy, the challenge is in guaranteeing high utility.
We will use the Gaussian mechanism described in Proposition~\ref{dp_prop} in this paper to achieve
$(\epsilon,\delta)$-differentially private topic modeling for each of the
configurations.  We will compute sensitivities of edges in each configuration in Section~\ref{sec:sensitivity-section} to obtain the noise level that must be added to each edge.
Our derived sensitivity and utility loss results are demonstrated in Section\ref{sec:utility-section} ``Differentially Private Spectral Algorithm''. 

\section{Sensitivity of Nodes in Algorithmic Flow}\label{sec:sensitivity-section}
We will use $\Delta_{x}$  to denote the sensitivity of $\slda$ if $x=2$, $\tlda$ if $x=3$, $\wtshort$ if $x=\wtshort$. 
A key challenge is where to add noise in the data flow shown in Figure~\ref{fig:flow}. 
First, we calculate the sensitivities at the different nodes of the data flow graph. 
Then, we consider various options and establish the utilities for different possible noise addition configurations in Section\ref{sec:utility-section} ``Differentially Private Spectral Algorithm''. 

In the following theorems, the exact forms of the sensitivity and the proofs are in appendix~\ref{app:sensitivity_proofs}. We note that the $\ell_1$ sensitivity bounds the $\ell_2$ sensitivity, similar to how $||x||_2 \leq ||x||_1$ for any given vector $x$. At times we only bound the $\ell_1$ sensitivity which in turn bounds $\ell_2$. 

\begin{theorem}[Global sensitivity of second and third order LDA moments]
\label{m23} 
Let $\sslda$ and $\stlda$ be the $\ell_1$ sensitivities for $\slda$ and $\tlda$ respectively. Both $\sslda$ and $\stlda$ are upper bounded by $\frac{2}{N}$. 
\end{theorem}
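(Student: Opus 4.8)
The plan is to exploit the fact that each empirical moment is an \emph{average} over the $N$ documents, so that passing from $D$ to an adjacent $D'$ (which differs in exactly one document) perturbs only a single summand. Writing $\slda = \frac{1}{N}\sum_{n=1}^{N} S(c_n)$ and $\tlda = \frac{1}{N}\sum_{n=1}^{N} T(c_n)$, where $S(c_n)$ and $T(c_n)$ are the (bias-corrected) second- and third-order tensors built from the word-frequency vector $c_n$ of document $n$ alone, a replacement of document $j$ gives $\slda(D)-\slda(D') = \frac{1}{N}\bigl(S(c_j)-S(c_j')\bigr)$ and similarly for $\tlda$. By the triangle inequality $\sslda \le \frac{1}{N}\bigl(\max_c\|S(c)\|_1 + \max_{c'}\|S(c')\|_1\bigr)$, so the whole problem reduces to the \emph{deterministic} claim that every per-document contribution has $\ell_1$ norm at most $1$; the target $\frac{2}{N}$ (and the same bound for $\stlda$) follows at once.

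The heart of the argument is then a single combinatorial observation: the normalized co-occurrence tensor of one document is a probability distribution. For the raw pairwise term I would note that $c_n\otimes c_n - \mathrm{diag}(c_n)$ has nonnegative entries summing to $l_n^2 - l_n = l_n(l_n-1)$, so after dividing by $l_n(l_n-1)$ it is a distribution over ordered pairs of distinct token positions and has $\ell_1$ norm exactly $1$; normalizing the raw triple-occurrence count by $l_n(l_n-1)(l_n-2)$ makes the third-order analogue a distribution over ordered distinct triples, again of $\ell_1$ norm $1$. Since these are the dominant data-dependent pieces, this immediately yields the $\frac{2}{N}$ bound for the uncorrected cross-moments $\Ebb[x_1\otimes x_2]$ and $\Ebb[x_1\otimes x_2\otimes x_3]$.

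The remaining, and main, obstacle is the bias-correction terms: the $\frac{\alpha_0}{\alpha_0+1}\,\Ebb[x_1]\otimes\Ebb[x_1]$ subtraction in $\slda$, and in $\tlda$ the three first-order contractions together with the $\Ebb[x_1]^{\otimes 3}$ term. Here I would use that $\alpha_0$ is pre-specified and hence data-independent, and that each empirical first moment $c_n/l_n$ is itself a probability vector of $\ell_1$ norm $1$, so its products and contractions also have $\ell_1$ norm $1$. The delicate step is \emph{not} to bound each corrected piece separately and add the norms — that overshoots $\frac{2}{N}$ — but to fold the corrections into the single per-document tensor $S(c_n)$ (resp.\ $T(c_n)$) and bound the \emph{signed} $\ell_1$ norm of the resulting combination by $1$. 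This requires separating diagonal from off-diagonal mass and checking that the positive and negative contributions balance; the coefficients $\frac{\alpha_0}{\alpha_0+1}$, $\frac{1}{\alpha_0+2}$ and $\frac{2\alpha_0^2}{(\alpha_0+1)(\alpha_0+2)}$ appear to be calibrated precisely so that this cancellation holds, and making it explicit is what pins the constant at exactly $\frac{2}{N}$. I expect this signed-norm accounting across the leading term and its $\alpha_0$-weighted corrections to be the crux of the proof. Finally, since $\|\cdot\|_2\le\|\cdot\|_1$, the identical computation bounds the $\ell_2$ sensitivities, as noted before the theorem.
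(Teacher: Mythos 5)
Your proposal rests on a decomposition that does not hold for the LDA moment estimators. You write $\slda = \frac{1}{N}\sum_n S(c_n)$ and $\tlda = \frac{1}{N}\sum_n T(c_n)$ with per-document terms, so that swapping one document perturbs a single summand. But the estimators (Lemma~\ref{df:momentEst}, equations~\eqref{eq:slda} and~\eqref{eq:tlda}) are not per-document averages: the bias corrections are U-statistics over pairs and triples of \emph{distinct} documents, e.g.\ the term $\frac{a}{2\binom{N}{2}}\sum_{m\neq n} \mpf{n}\otimes\mpf{m}$ in $\slda$ and the terms $\mathbf{B}_1,\mathbf{B}_2,\mathbf{B}_3,\mathbf{b}$ in $\tlda$. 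Replacing one document therefore changes $O(N)$ cross terms, not one, and no per-document tensors $S,T$ of the kind you posit exist. The paper's proof handles exactly this point: it writes out $\slda - \nslda$ explicitly, groups the affected cross terms as $(\mpf{N}-{\mpf{N}}')\otimes\bigl(\sum_{n=1}^{N-1}\mpf{n}\bigr)$ plus its transpose, renormalizes $\frac{1}{N-1}\sum_{n=1}^{N-1}\mpf{n}$ into a probability array, and invokes Proposition~\ref{prob_arrays} together with Lemma~\ref{pstruc} (outer products with probability arrays do not increase the $\ell_1$ norm); the $\frac{1}{2\binom{N}{2}}$ prefactor against the $N-1$ affected cross terms is what keeps the total at $O(\frac{1}{N})$. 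The same grouping, with three kinds of terms, is carried out for $\tlda$.

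Second, the cancellation you hope for --- folding the $\alpha_0$-weighted corrections into the leading term so that a signed $\ell_1$ accounting pins the constant at exactly $\frac{2}{N}$ --- is not established in your proposal, and the paper does not achieve it either: its appendix derives $\sslda \le \frac{2}{N} + \frac{4\alpha_0}{(\alpha_0+1)N}$ and $\stlda \le \frac{2}{N}+\frac{4\alpha_0}{(\alpha_0+2)N}+\frac{12\alpha_0^2}{(\alpha_0+1)(\alpha_0+2)}\frac{N-1}{N(N-2)}$, i.e.\ the correction terms \emph{add} via the triangle inequality rather than cancel (so the clean ``$\frac{2}{N}$'' in the theorem statement is, strictly speaking, only supported as $O(\frac{1}{N})$ by the paper's own computation). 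The crux step of your plan --- the signed-norm balancing that you yourself identify as the heart of the matter --- is precisely the part you have not supplied, and there is no indication that the coefficients $\frac{\alpha_0}{\alpha_0+1}$, $\frac{1}{\alpha_0+2}$, $\frac{2\alpha_0^2}{(\alpha_0+1)(\alpha_0+2)}$ are calibrated to make it work. What you do have right is the leading term: your observation that the normalized per-document pair and triple co-occurrence arrays are probability distributions is exactly the paper's Proposition~\ref{prob_arrays}, and it yields the $\frac{2}{N}$ contribution from the first terms of $\slda$ and $\tlda$. To repair the argument, drop the per-document-average premise, compute the difference of the full estimators under a one-document replacement, and bound the grouped cross terms as the paper does.
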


\begin{theorem}[Local sensitivity of the whitened tensor $\wtshort$] \label{swt} 
The $\ell_1$ sensitivity of the whitened tensor $\wtshort$, denoted as $\swt$, is upper bounded by $\swt = O(\frac{k^{1.5}}{N (\sigma_k(\slda))^{1.5}})$.  
\end{theorem}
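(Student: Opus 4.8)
The plan is to control how much the whitened tensor $\widehat{\mathcal{T}} = \widehat{M}_3(\widehat{W},\widehat{W},\widehat{W})$ can move when a single document is replaced, tracking separately the perturbation of the raw moment $\widehat{M}_3$ and of the whitening matrix $\widehat{W}$. Writing $\widehat{M}_3,\widehat{W}$ for the quantities computed on $D$ and $\widehat{M}_3',\widehat{W}'$ for those on an adjacent $D'$, I would first expand the difference by a telescoping identity in the three multilinear slots:
\begin{align}
\widehat{\mathcal{T}} - \widehat{\mathcal{T}}' &= (\widehat{M}_3-\widehat{M}_3')(\widehat{W},\widehat{W},\widehat{W}) + \widehat{M}_3'(\widehat{W}-\widehat{W}',\widehat{W},\widehat{W}) \nonumber\\
&\quad + \widehat{M}_3'(\widehat{W}',\widehat{W}-\widehat{W}',\widehat{W}) + \widehat{M}_3'(\widehat{W}',\widehat{W}',\widehat{W}-\widehat{W}'). \nonumber
\end{align}
This isolates one \emph{moment-change} term and three \emph{whitening-change} terms, each of which I would bound in $\ell_1$ norm.

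For the moment-change term I would push the $\ell_1$ norm through the multilinear map, using $\|(\widehat{M}_3-\widehat{M}_3')(\widehat{W},\widehat{W},\widehat{W})\|_1 \le \|\widehat{M}_3-\widehat{M}_3'\|_1 \cdot (\max_m \|\widehat{W}_{m,:}\|_1)^3$. Theorem~\ref{m23} supplies $\|\widehat{M}_3-\widehat{M}_3'\|_1 \le 2/N$, and since each row of $\widehat{W}=\widehat{U}\widehat{\Sigma}^{-1/2}$ satisfies $\|\widehat{W}_{m,:}\|_1 \le \sqrt{k}\,\|\widehat{W}_{m,:}\|_2 \le \sqrt{k}\,\sigma_k(\widehat{M}_2)^{-1/2}$ (orthonormality of $\widehat{U}$ together with $\widehat{\Sigma}^{-1/2}\preceq \sigma_k^{-1/2}I$), this term is already $O(k^{1.5}/(N\,\sigma_k(\widehat{M}_2)^{1.5}))$: the $k^{1.5}$ comes from three $\sqrt{k}$ factors and the $\sigma_k^{-1.5}$ from three $\sigma_k^{-1/2}$ factors. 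Thus the stated rate is exactly the contribution of the moment perturbation.

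The crux is the three whitening-change terms, for which I need a perturbation bound on $\widehat{W}$ itself. Here I would invoke matrix perturbation theory: Weyl's inequality gives $|\sigma_j(\widehat{M}_2)-\sigma_j(\widehat{M}_2')|\le \|\widehat{M}_2-\widehat{M}_2'\|_2 \le 2/N$, and treating $\widehat{W}$ — equivalently the symmetric whitener $\widehat{M}_2^{-1/2}$ restricted to the top-$k$ range — as the matrix function $x\mapsto x^{-1/2}$, whose derivative is bounded by $\tfrac12\sigma_k^{-3/2}$ on $[\sigma_k,\infty)$, yields $\|\widehat{W}-\widehat{W}'\|_2 = O(\sigma_k^{-3/2}/N)$. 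Converting this spectral bound into the row-$\ell_1$ control needed for the multilinear estimate, and using $\|\widehat{M}_3'\|_1=O(1)$ (it is a moment of a distribution), I would show these terms are of the same order as the moment-change term, completing $\swt = O(k^{1.5}/(N\,\sigma_k(\widehat{M}_2)^{1.5}))$.

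The main obstacle is precisely this whitening-matrix perturbation step, and two difficulties must be handled. First, the SVD of $\widehat{M}_2$ is determined only up to a $k\times k$ orthogonal rotation, so $\widehat{W}-\widehat{W}'$ is not well defined until the two decompositions are aligned; I would either choose the rotation minimizing the discrepancy or work throughout with the rotation-invariant symmetric whitener $\widehat{M}_2^{-1/2}$. Second, the inverse-square-root amplifies perturbations near the $k$-th singular value, so the dependence on $\sigma_k(\widehat{M}_2)$ must be tracked carefully to ensure that combining $\|\widehat{W}-\widehat{W}'\|$ with the two remaining $\|\widehat{W}\|$ factors does not inflate the $\sigma_k^{-1.5}$ exponent. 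Pinning this down cleanly — avoiding a spurious extra factor of $\sigma_k^{-1}$ — is where the argument demands the most care, and it is also why the bound is stated as a \emph{local} (data-dependent) sensitivity in terms of $\sigma_k(\widehat{M}_2)$ evaluated at the realized dataset.
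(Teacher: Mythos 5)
Your high-level split and your treatment of the moment-change term essentially coincide with the paper's proof: the paper likewise separates $\wt - \nwt$ into a part driven by $\tlda - \ntlda$ and a part driven by the whitening perturbation, bounds the former by $\stlda$ times three whitening factors (it uses $\norm{\nw}_F^3 \le (\sqrt{2k}/\sgm{k}^{1/2})^3$ where you use cubed row-$\ell_1$ norms, both giving $k^{1.5}\sgm{k}^{-1.5}$), and proves $\norm{\nw-\w}_F = O(\sqrt{k}/(N\sgm{k}^{3/2}))$ in a standalone lemma. That lemma also resolves the rotation ambiguity you raise, by the explicit alignment $\nw = \w A D^{-1/2}A^\top$ where $ADA^\top$ is the eigendecomposition of $\w^\top \slda' \w$ — the same device you propose as one of your two options.

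The genuine gap is in the three whitening-change terms, exactly where you say the "crux" lies: you assert they come out "of the same order as the moment-change term," but with the bounds you have set up they do not, and no amount of careful norm bookkeeping fixes this. Each such term is linear in $\w-\nw$ and quadratic in $\w$, so your estimates give $O(1)\cdot \sqrt{k}\,\norm{\w-\nw}_2 \cdot (\sqrt{k}\,\sgm{k}^{-1/2})^2 = O(k^{1.5}/(N\sgm{k}^{2.5}))$, which exceeds the target $O(k^{1.5}/(N\sgm{k}^{1.5}))$ by a factor $1/\sgm{k}$; since the paper itself notes $\sgm{k} = O(1/d)$, this is a factor of order $d$, not a constant. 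The paper never confronts these first-order terms: its proof collapses the whitening contribution into $\norm{\tlda}_F\norm{\w-\nw}_F^3$, i.e. third order in $1/N$, namely $O(k^{1.5}/(N^3\sgm{k}^{4.5}))$, which is then negligible under the condition $N \gtrsim d^{3/2}$ stated at the end of its proof. (That collapse replaces the telescoping sum by the all-difference term $\tlda(\w-\nw,\w-\nw,\w-\nw)$, which is not a valid identity — your expansion is the correct one — so completing the argument along your route requires an additional structural idea rather than tighter constants: for instance, using the alignment above to write $\w - \nw = \w(I - AD^{-1/2}A^\top)$, so that the perturbation enters as a small $k\times k$ multiplier of norm $O(1/(N\sgm{k}))$ acting on the already-whitened tensor, rather than as a $\sgm{k}^{-3/2}$-sized matrix paired with two unwhitened $\sgm{k}^{-1/2}$ factors on the raw $\tlda$. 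As written, your proposal does not supply this step.)
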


\begin{theorem}[Local sensitivity of the output of tensor decomposition $\bar{\mu_i},\bar{\alpha}_i$] 
\label{svbw} 
Let $\bar{\mu}_1,\dots,\bar{\mu}_k $ and $\bar{\alpha}_1,\dots,\bar{\alpha}_k$ be the results of tensor decomposition before unwhitening.
The sensitivity of $\bar{\mu}_i$, denoted as $\svbw$, and the sensitivity of $\bar{\alpha}_i$, denoted as $\sabw$, are both upper bounded by $O(\frac{k^2}{\ds N  (\sigma_k(\slda))^{1.5}})$, where $\ds=\min_{i \in [k]} \frac{\sigma_i(\wtshort)-\sigma_{i+1}(\wtshort)}{4}$. 
\end{theorem}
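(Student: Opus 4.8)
The plan is to regard $\bar{\mu}_i$ and $\bar{\alpha}_i$ as functions of the whitened tensor $\wtshort$ and to propagate the already-established tensor sensitivity $\swt$ (Theorem~\ref{swt}) through the decomposition step using a perturbation bound for the simultaneous power method~\cite{wang2017tensor}. For adjacent datasets $D\sim D'$, the whitened tensors $\wtshort$ and $\wtshort'$ differ by a perturbation whose $\ell_1$ norm is at most $\swt$; since $\|T\|_{op}\le\|T\|_F\le\|T\|_1$ for any tensor $T$, the operator-norm perturbation is also at most $\swt$. Because $\wtshort$ is (approximately) orthogonally decomposable with eigenpairs $(\bar{\mu}_i,\bar{\alpha}_i)$, a Davis--Kahan/Wedin-type stability result for the power method controls the displacement of each recovered eigenvector by the perturbation magnitude divided by the spectral gap, which is precisely $\ds=\min_{i\in[k]}\frac{\sigma_i(\wtshort)-\sigma_{i+1}(\wtshort)}{4}$.

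First I would record the eigenvector bound
\[
\|\bar{\mu}_i-\bar{\mu}_i'\|_2 \;\le\; \frac{C\,\|\wtshort-\wtshort'\|_{op}}{\ds}\,,
\]
the tensor analogue of matrix eigenvector stability, guaranteed by the convergence analysis of the simultaneous power method on a well-separated orthogonal tensor. Substituting $\|\wtshort-\wtshort'\|_{op}\le\swt=O\!\big(k^{1.5}/(N(\sigma_k(\slda))^{1.5})\big)$ gives an $\ell_2$ displacement of order $\swt/\ds$. Since the paper reports $\ell_1$ sensitivities and $\bar{\mu}_i$ is a $k$-dimensional vector, the elementary inequality $\|v\|_1\le\sqrt{k}\,\|v\|_2$ contributes an extra factor $\sqrt{k}$, so that
\[
\svbw \;=\; O\!\left(\frac{\sqrt{k}\,\swt}{\ds}\right) \;=\; O\!\left(\frac{k^{2}}{\ds\,N\,(\sigma_k(\slda))^{1.5}}\right),
\]
which is the claimed rate.

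For the eigenvalues I would use $\bar{\alpha}_i=\wtshort(\bar{\mu}_i,\bar{\mu}_i,\bar{\mu}_i)$ and expand $\bar{\alpha}_i-\bar{\alpha}_i'$ into (a) a term in which only the tensor is perturbed, bounded by $O(\swt)$ because the $\bar{\mu}_i$ are unit vectors, and (b) three terms in which one copy of the eigenvector is perturbed, each bounded by $\|\wtshort\|_{op}\,\|\bar{\mu}_i-\bar{\mu}_i'\|_2=O(\|\wtshort\|_{op}\,\swt/\ds)$. With $\|\wtshort\|_{op}=O(\sqrt{k})$ (the whitened eigenvalues scale like $1/\sqrt{\alpha_i}$), the eigenvector term dominates and carries both the $1/\ds$ factor and the extra $\sqrt{k}$, so $\sabw$ inherits the same $O\!\big(k^{2}/(\ds\,N(\sigma_k(\slda))^{1.5})\big)$ order.

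The main obstacle is establishing the tensor eigenvector perturbation bound rigorously: unlike the matrix case there is no single spectral theorem, so I would invoke the robustness guarantee of the simultaneous power method and verify that the orthogonal structure of $\wtshort$ together with the gap $\ds$ keeps the decomposition well-conditioned under a perturbation of size $\swt$. Two subtleties must be handled. First, $\ds$ is defined on the data-dependent tensor $\wtshort$, so I must argue it is essentially preserved between $\wtshort$ and $\wtshort'$ whenever $\swt\ll\ds$, making this a \emph{local} rather than global sensitivity statement. Second, the polynomial-in-$k$ factors --- the $\sqrt{k}$ from the norm conversion, the $\sqrt{k}$ from $\|\wtshort\|_{op}$, and any $k$-dependence hidden in the power-method constant $C$ --- must be tracked carefully so the final exponent is exactly $2$ rather than a larger power.
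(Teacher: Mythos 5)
Your high-level strategy is the same as the paper's: both proofs treat $(\bar{\mu}_i,\bar{\alpha}_i)$ as the output of the simultaneous power method applied to $\wtshort$ and propagate the tensor sensitivity $\swt$ through a robustness guarantee governed by the gap $\ds$. However, the central lemma you invoke is not the one that is actually available, and this creates a genuine gap in your accounting of the $k$ factors. The robustness result for the simultaneous power method (Theorem 1 of the cited tensor-decomposition paper, which is exactly what the paper's proof uses) states that if the spectral-norm perturbation satisfies $\swt \leq \frac{\ds\,\epsilon}{2\sqrt{k}}$ then the recovered eigenpairs move by at most $\epsilon$; equivalently, $\norm{\bar{\mu}_i-\bar{\mu}_i'}_2 \leq \frac{2\sqrt{k}\,\swt}{\ds}$ and $|\bar{\alpha}_i-\bar{\alpha}_i'| \leq \frac{2\sqrt{k}\,\swt}{\ds}$. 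The constant in this bound carries an intrinsic $\sqrt{k}$; a Davis--Kahan/Wedin-type bound with a $k$-independent constant, which is what your displayed inequality asserts, is not established for the tensor power method and cannot simply be assumed. Your final exponent of $2$ is therefore reached only coincidentally: you drop the theorem's $\sqrt{k}$ and reintroduce a $\sqrt{k}$ through the $\ell_1$-conversion $\norm{v}_1\leq\sqrt{k}\norm{v}_2$. That conversion is also unnecessary: the sensitivity the paper bounds (and the one fed to the Gaussian mechanism) is the $\ell_2$ sensitivity, and with the correct power-method constant one lands directly on $\svbw = O\big(\frac{\sqrt{k}\,\swt}{\ds}\big) = O\big(\frac{k^2}{\ds N \sgm{k}^{1.5}}\big)$ with no norm conversion at all. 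Conversely, if one insists on an $\ell_1$ bound and uses the correct theorem, your argument yields $k^{2.5}$, overshooting the claim --- precisely the failure mode you flagged as a risk but did not resolve.

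A secondary issue is your eigenvalue argument. The paper reads $|\bar{\alpha}_i-\bar{\alpha}_i'| \leq \frac{2\sqrt{k}\,\swt}{\ds}$ directly off the same power-method theorem, whereas you derive it from $\bar{\alpha}_i = \wtshort(\bar{\mu}_i,\bar{\mu}_i,\bar{\mu}_i)$ together with the claim $\norm{\wtshort}_{op}=O(\sqrt{k})$. Since the whitened eigenvalues scale as $\bar{\alpha}_i \propto 1/\sqrt{\alpha_i}$, that operator-norm bound holds only under a near-uniformity assumption $\min_i \alpha_i = \Omega(\alpha_0/k)$, which is not among the theorem's hypotheses; without it, $\sigma_1(\wtshort)$ must be carried as a data-dependent quantity (as the paper does elsewhere), and your eigenvalue bound would pick up an extra uncontrolled factor. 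Both issues are repairable by replacing your assumed perturbation lemma with the actual statement of the simultaneous power method's robustness guarantee, at which point your proof collapses into the paper's.
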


\begin{theorem}[Local sensitivity of the final output  $\mu_i,\alpha_i$] \label{svaw}
The sensitivities $\sv$ and $\sa$ of the final output are upper bounded by $O(\frac{k^2 \sqrt{\sigma_1(\slda)}}{\ds N \sigma^{1.5}_k(\slda)})$. 
\end{theorem}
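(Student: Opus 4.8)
The plan is to treat un-whitening as the linear map $\bar\mu_i \mapsto (\widehat{W}^\dag)^\top \bar\mu_i$ followed by a normalization, and to track, for adjacent datasets $D\sim D'$, how both the input $\bar\mu_i$ and the operator $\widehat{W}^\dag=\widehat{\Sigma}^{1/2}\widehat{U}^\top$ move. Writing $\mu_i\propto(\widehat{W}^\dag)^\top\bar\mu_i$, I would first split the difference with the standard add-and-subtract trick,
\begin{align}
(\widehat{W}^\dag(D))^\top\bar\mu_i(D)-(\widehat{W}^\dag(D'))^\top\bar\mu_i(D')
&=(\widehat{W}^\dag(D))^\top\big(\bar\mu_i(D)-\bar\mu_i(D')\big)\nonumber\\
&\quad+\big(\widehat{W}^\dag(D)-\widehat{W}^\dag(D')\big)^\top\bar\mu_i(D'),
\end{align}
so that the sensitivity of the output is controlled by a ``decomposition'' term and an ``operator-perturbation'' term. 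The first is at most $\|\widehat{W}^\dag\|_2\,\svbw$; since $\widehat{U}$ has orthonormal columns, $\|\widehat{W}^\dag\|_2=\|\widehat{\Sigma}^{1/2}\|_2=\sqrt{\sigma_1(\slda)}$, and Theorem~\ref{svbw} gives $\svbw=O(k^2/(\ds N\,\sigma_k(\slda)^{1.5}))$, producing exactly the claimed $O(k^2\sqrt{\sigma_1(\slda)}/(\ds N\,\sigma_k(\slda)^{1.5}))$. The remaining work is to show the operator-perturbation term is dominated by this.

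For that term I would bound $\|\widehat{W}^\dag(D)-\widehat{W}^\dag(D')\|_2$ by matrix perturbation theory, splitting $\widehat{\Sigma}^{1/2}\widehat{U}^\top$ into a singular-value part and a singular-subspace part:
\begin{align}
\widehat{W}^\dag(D)-\widehat{W}^\dag(D')
&=\big(\widehat{\Sigma}^{1/2}(D)-\widehat{\Sigma}^{1/2}(D')\big)\widehat{U}(D)^\top\nonumber\\
&\quad+\widehat{\Sigma}^{1/2}(D')\big(\widehat{U}(D)-\widehat{U}(D')\big)^\top.
\end{align}
For the first summand, Weyl's inequality bounds the singular-value movement by $\sslda\le 2/N$ (Theorem~\ref{m23}), and because the retained singular values are at least $\sigma_k(\slda)$ the square-root map is $O(1/\sqrt{\sigma_k(\slda)})$-Lipschitz there, giving $O(1/(N\sqrt{\sigma_k(\slda)}))$. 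For the second summand, the Davis--Kahan theorem bounds the rotation of the top-$k$ singular subspace by $O(\sslda/\sigma_k(\slda))=O(1/(N\sigma_k(\slda)))$, which times $\|\widehat{\Sigma}^{1/2}\|_2=\sqrt{\sigma_1(\slda)}$ yields $O(\sqrt{\sigma_1(\slda)}/(N\sigma_k(\slda)))$. Since $\|\bar\mu_i\|=O(1)$, the operator-perturbation term is $O(1/(N\sqrt{\sigma_k(\slda)})+\sqrt{\sigma_1(\slda)}/(N\sigma_k(\slda)))$, and using $\sigma_k(\slda)\le\sigma_1(\slda)$ bounded by a constant together with $k^2/\ds\ge 1$ shows this is absorbed into the decomposition term, so the stated bound holds for $\sv$.

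For $\sa$, the recovered $\alpha_i$ is a function of the tensor eigenvalue $\bar\alpha_i$ alone and is never passed through $\widehat{W}^\dag$, so its sensitivity inherits directly from $\sabw=O(k^2/(\ds N\,\sigma_k(\slda)^{1.5}))$ in Theorem~\ref{svbw}, which is no larger than the claimed expression. I would then dispatch the normalization hidden in ``$\propto$'': the un-normalized vectors have norm bounded away from $0$ by an $O(1)$ constant under the model assumptions, so $\ell_2$-normalization is Lipschitz and changes only constants, leaving the sensitivity estimate intact. The most delicate step is the Davis--Kahan bound on $\widehat{U}$: one must verify that the relevant eigengap is $\Theta(\sigma_k(\slda))$ (the retained top-$k$ subspace is genuinely separated from the discarded tail) and that $\sslda$ is small relative to this gap so the bound is applicable; everything else is add-subtract estimation plus Weyl and Lipschitz bookkeeping.
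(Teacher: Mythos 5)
Your overall skeleton matches the paper's proof: the paper also uses an add-and-subtract decomposition, bounds $\norm{(\widehat{W}^\dag)^\top}\le\sqrt{\sgm{1}}$, invokes Theorem~\ref{svbw} (i.e.\ the simultaneous power method perturbation bound $\norm{\mbar{i}-\mbar{i}'}\le 2\sqrt{k}\,\swt/\ds$) for the decomposition term, and bounds the movement of the unwhitening operator. The one methodological difference in that part is legitimate: you bound $\norm{\widehat{W}^\dag(D)-\widehat{W}^\dag(D')}$ by a Weyl plus Davis--Kahan split, whereas the paper reuses the factorization argument of Lemma~\ref{wdist} (writing $W^\top(\slda)_k W=I$ and $W^\top(\slda+E)_k W=ADA^\top$) to get $\norm{W^\dag-(W')^\dag}\le\frac{\sqrt{\sgm{1}}}{\sgm{k}}\sslda$; both give the same order, and both share the same implicit singular-vector alignment caveat, so this substitution is acceptable.

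The genuine gap is your treatment of the ``$\propto$''. In the algorithm analyzed by the paper, the constant of proportionality is not an $\ell_2$ normalization: it is the data-dependent rescaling $\frac{1}{\sqrt{\alpha_i^r}}=\ax\abar{i}$, where $\abar{i}$ is the eigenvalue returned by the tensor decomposition. This scalar moves when one document is replaced, and its movement is \emph{not} lower order: the paper's proof carries it as a third term, $\norm{(W')^\dag}\cdot\bigl|\frac{1}{\sqrt{\alpha_i^r}}-\frac{1}{\sqrt{\alpha_i^{r,\prime}}}\bigr|\le \ax\sqrt{\sgm{1}+\sslda}\,\frac{2\sqrt{k}\,\swt}{\ds}$, which is of exactly the same order as your leading decomposition term; your proposal never bounds this quantity. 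Your substitute argument --- that normalization is Lipschitz because the unnormalized vector has norm bounded below by an $O(1)$ constant --- is unjustified and generally false in the regime the paper targets: the unnormalized vector is $(\widehat{W}^\dag)^\top\mbar{i}=\widehat{U}\widehat{\Sigma}^{1/2}\mbar{i}$, whose norm can be as small as $\sqrt{\sgm{k}}$ (in population it equals $\sqrt{\alpha_i/(\alpha_0(\alpha_0+1))}\,\norm{\mu_i}_2$, and $\norm{\mu_i}_2$ can be of order $1/\sqrt{d}$), while Remark~\ref{rm:conf1_2} notes the singular values of $\slda$ scale like $1/d$. The Lipschitz constant of normalization would then be polynomial in $d$, not $O(1)$, and your absorption step fails. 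The repair is exactly the paper's: bound $|\abar{i}-\abar{i}'|\le 2\sqrt{k}\,\swt/\ds$ by the same power-method result used for $\mbar{i}$ and multiply by $\norm{(W')^\dag}\le\sqrt{\sgm{1}+\sslda}$. The same issue touches your claim that $\sa$ ``inherits directly'' from $\sabw$: the recovered $\alpha_i$ is proportional to $1/\abar{i}^{\,2}$, so passing the sensitivity through this map additionally requires a lower bound on $\abar{i}$, which your argument does not supply.
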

\noindent \textbf{Remark.} The sensitivities before the whitening are $O(\frac{1}{N})$. The whitening step  increases the sensitivity by $\frac{k^{1.5}}{\sgm{k}^{1.5}}$, leading to $O(\frac{k^{1.5}}{N (\sigma_k(\slda))^{1.5}})$. Further, the simultaneous power method for tensor decomposition increases the sensitivity by $\frac{k^{0.5}}{\ds}$, leading to $O(\frac{k^2}{\ds N (\sgm{k})^{1.5}})$. The unwhitening increases the sensitivity by $\sqrt{\sgm{1}}$, leading to $O(\frac{k^2 \sqrt{\sgm{1}}}{\ds N (\sgm{k})^{1.5}})$. 


\begin{algorithm}[!htbp]
	\caption{ $(\epsilon_1+\epsilon_1'+\epsilon, \delta_1+\delta_1'+\delta)$-Differential Privacy (\textsf{DP}) Noise Calibration}
	\label{algo:local-sensitivity}
	\begin{algorithmic}[1]
		\Require   local sensitivity of the configuration: $\mathsf{LS}$, non-\textsf{DP} output of the configuration: $f(\text{DATA})$
		\Ensure  $(\epsilon_1+\epsilon_1'+\epsilon, \delta_1+\delta_1'+\delta)$-\textsf{DP} output 
		\State  $\widehat{\sigma}_k = \sgm{k} + \textsf{Lap}(\frac{1}{N\epsilon_1})$ \Comment{$(\epsilon_1,0)$-\textsf{DP} release of $\sgm{k}$ via Laplacian mechanism}
		\State  $\tilde{\sigma}_k = \max\{0, \widehat{\sigma}_k - \frac{2}{N\epsilon_1}\log(\frac{1}{2\delta_1})\}$ 
		\Comment{high probability lower bound of $\widehat{\sigma}_k$: $\Pr(\tilde{\sigma}_k < \widehat{\sigma}_k) \ge \delta_1$}		
		\If{config \# $> 2$}
		\State  $\widehat{\ds} = \ds + \textsf{Lap}(\frac{1}{N\epsilon_1'})$ \Comment{$(\epsilon_1',0)$-\textsf{DP} release of $\ds$ via Laplacian mechanism}
		\State  $\tilde{\ds} = \max\{0, \widehat{\ds}- \frac{2}{N\epsilon_1'}\log(\frac{1}{2\delta_1'})\}$ 
		\Comment{high probability lower bound of $\widehat{\ds}$: $\Pr(\tilde{\ds} < \widehat{\ds}) \ge \delta_1'$}
		\State obtain $\tilde{\mathsf{LS}}$ by substituting $\sgm{k}$ with $\tilde{\sigma}_k$ and substituting $\ds$ with $\tilde{\ds}$ in $\mathsf{LS}$
		\Else
		\State obtain $\tilde{\mathsf{LS}}$ by replacing $\sgm{k}$ with $\tilde{\sigma}_k$ in $\mathsf{LS}$
		\State $\epsilon_1'=0$, $\delta_1'=0$
		\EndIf
		\State Return $f(\text{DATA}) + \mathcal{N}(0, {\tilde{\mathsf{LS}}^2}\tdp{})$
		\Comment{Gaussian mechanism and $\tdp{}=\frac{\sqrt{2\ln{1.25/\delta}}}{\epsilon}$}
	\end{algorithmic}
\end{algorithm} 

\subsection{Data-dependent Privacy Calibration} 
Theorem~\ref{swt}, \ref{svbw} and \ref{svaw} are local sensitivities, which are functions of the input data set.  
It is well-known that adding noise proportional to the local sensitivity does not guarantee differential privacy as the local sensitivity may be sensitive to adding or removing of individuals from the dataset and lead to the identification of individuals. 

Two seminal solutions to this problem include the smooth sensitivity framework \cite{nissim2007smooth} and the propose-test-release (PTR) framework \cite{dwork2009differential}.  The idea of the smooth sensitivity framework is to construct a smooth upper bound of the local sensitivity that is insensitive and to calibrate noise with a heavier tail that satisfies certain ``dilation'' and ``shift'' properties to achieve pure-\textsf{DP}.  The PTR framework involves proposing bounds of the local sensitivity and testing its validity. If the test is passed, we calibrate the noise according to the proposed test. PTR is often easier to use but can only provide an $(\epsilon,\delta)$-\textsf{DP} with $\delta>0$.  

In our problem, the smooth sensitivity itself is unbounded, thus we cannot apply the smooth sensitivity framework naively.  Instead, we use a variant of propose-test-release framework that releases a confidence bound of the local sensitivity in a differentially private manner, and calibrates noise accordingly, similar to the idea in~\cite{blocki2012johnson} and a more recent example in the context of data-adaptive differentially private linear regression \cite{wang2018revisiting}. We formalize the idea using the following lemma.
\begin{lemma}\label{lm:local-sensitivity}
	Let $\mathsf{LS}$ be the $\ell_p$ local sensitivity of a function $f$ on a fixed data set. Let $\tilde{\mathsf{LS}}$ obeys $(\epsilon_1,0)$-\textsf{DP} and that $\Pr[ \mathsf{LS} \geq  \tilde{\mathsf{LS}}]\leq \delta_1$ (where the probability is only over the randomness in releasing $\tilde{\mathsf{LS}}$). Then the algorithm releases $f(\text{DATA}) + Z(\epsilon,\delta, \tilde{\mathsf{LS}})$ that is $(\epsilon_1+\epsilon,\delta_1+\delta)$-\textsf{DP}, where $Z(\epsilon,\delta, \tilde{\mathsf{LS}})$ is any way of calibrating the noise for privacy (for Gaussian mechanism $Z(\epsilon,\delta, \tilde{\mathsf{LS}}) = \mathcal{N}(0,  \frac{2\tilde{\mathsf{LS}}^2\log (1.25/\delta)}{\epsilon^2})$). 	
\end{lemma}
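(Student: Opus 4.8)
The plan is to argue via a change-of-measure (propose-test-release style) decomposition, conditioning on the privately released bound $\tilde{\mathsf{LS}}$. First I would model the overall mechanism as a two-stage process: stage one draws $\tilde{\mathsf{LS}} \sim P_D$, where $P_D$ denotes its distribution under input $D$ and is $(\epsilon_1,0)$-\textsf{DP} by hypothesis; stage two releases $Y = f(\text{DATA}) + Z(\epsilon,\delta,\tilde{\mathsf{LS}})$, whose noise is calibrated to the value drawn in stage one. Fixing adjacent datasets $D \sim D'$ and a measurable output set $S$, I would write $\Pr[Y_D \in S] = \int \Pr[f(D) + Z(\epsilon,\delta,t) \in S]\, dP_D(t)$, so that the analysis reduces to controlling the integrand for each fixed $t$ and then transferring the integrating measure from $P_D$ to $P_{D'}$.

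Second, I would split the integral at the threshold $t = \mathsf{LS}$ (the true local sensitivity at $D$). On the ``bad'' region $\{t < \mathsf{LS}\}$ I bound the integrand trivially by $1$; its $P_D$-mass is $\Pr[\tilde{\mathsf{LS}} < \mathsf{LS}] \le \Pr[\mathsf{LS} \ge \tilde{\mathsf{LS}}] \le \delta_1$ by assumption, contributing at most $\delta_1$. On the ``good'' region $\{t \ge \mathsf{LS}\}$ I use that $t \ge \mathsf{LS} \ge \|f(D) - f(D')\|$, so the noise scale $Z(\epsilon,\delta,t)$ dominates the actual change in $f$; hence for each such fixed $t$ the standard Gaussian-mechanism guarantee (Proposition~\ref{dp_prop}) gives $\Pr[f(D)+Z(\epsilon,\delta,t) \in S] \le e^{\epsilon}\Pr[f(D')+Z(\epsilon,\delta,t)\in S] + \delta$. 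Integrating, the additive $\delta$ terms contribute at most $\delta$, leaving the main term $e^{\epsilon}\int_{t \ge \mathsf{LS}} \Pr[f(D')+Z(\epsilon,\delta,t)\in S]\, dP_D(t)$.

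Third, I would perform the change of measure from $P_D$ to $P_{D'}$. Since releasing $\tilde{\mathsf{LS}}$ is pure $(\epsilon_1,0)$-\textsf{DP}, we have $P_D \ll P_{D'}$ with likelihood ratio $\frac{dP_D}{dP_{D'}} \le e^{\epsilon_1}$ almost everywhere; therefore for the nonnegative integrand $g(t) := \Pr[f(D')+Z(\epsilon,\delta,t)\in S]$, upper bounding the indicator by $1$ and applying the ratio bound yields $\int_{t\ge\mathsf{LS}} g\, dP_D \le \int g\, dP_D \le e^{\epsilon_1}\int g\, dP_{D'} = e^{\epsilon_1}\Pr[Y_{D'}\in S]$. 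Combining the three pieces gives $\Pr[Y_D\in S] \le e^{\epsilon_1+\epsilon}\Pr[Y_{D'}\in S] + \delta_1 + \delta$, which is the claim; the Gaussian instantiation follows by plugging in the variance $\frac{2\tilde{\mathsf{LS}}^2 \log(1.25/\delta)}{\epsilon^2}$.

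The main obstacle is the delicate interplay between the data-dependence of the noise scale and the privacy accounting: the injected noise is calibrated to the random, data-dependent quantity $\tilde{\mathsf{LS}}$, so one cannot simply invoke the Gaussian mechanism on $Y$ directly. The crux is (i) to ensure the good-event test uses the local sensitivity $\mathsf{LS}$ at $D$ --- which dominates $\|f(D)-f(D')\|$ for every neighbor $D'$ --- rather than at $D'$, and (ii) to justify the change-of-measure step rigorously, i.e.\ that pure $\epsilon_1$-\textsf{DP} of the released bound is exactly a pointwise likelihood-ratio bound that may be integrated against the bounded functional $g$. If one additionally releases $\tilde{\mathsf{LS}}$ itself, I would run the identical argument on the $t$-sections $S_t = \{y : (t,y)\in S\}$ of the joint output, which leaves the accounting unchanged.
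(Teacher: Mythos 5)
Your proposal is correct and follows essentially the same route as the paper's proof: both arguments condition on the good event $\{\tilde{\mathsf{LS}} \geq \mathsf{LS}\}$, whose complement has probability at most $\delta_1$, use that on this event the calibrated noise dominates the true change $\|f(D)-f(D')\|$ so the pairwise Gaussian-mechanism inequality applies, and then account for the $(\epsilon_1,0)$-\textsf{DP} release of $\tilde{\mathsf{LS}}$ to reach $(\epsilon_1+\epsilon,\,\delta_1+\delta)$. The only difference is presentational: the paper works with the joint output $(O,\tilde{\mathsf{LS}})$ on product sets $S_1\times S_2$ and cites the simple composition theorem for the good-event term, whereas you disintegrate over the realized value $t$ of $\tilde{\mathsf{LS}}$ and prove that composition step inline via the likelihood-ratio bound $\frac{dP_D}{dP_{D'}} \leq e^{\epsilon_1}$ implied by pure \textsf{DP} --- which, if anything, renders the paper's somewhat delicate ``conditioning on the $\sigma$-field induced by $E\cap S_2$'' step more explicit and self-contained.
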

The proof is in Appendix~\ref{sec:local-sensitivity}. 
In our problem, the local sensitivities depend on the data only through $\sgm{k}$ and $\ds$. 
A natural idea would be to privately release $\sgm{k}$ and $\ds$ and construct a high-confidence upper bound of the local sensitivity through a high-confidence lower bound of $\sgm{k}$ and $\ds$.  
We will show the global sensitivities of $\sgm{k}$ and $\sigma_i(\wtshort)$ are small, and release $\sgm{k}$ and $\sigma_i(\wtshort)$ differentially privately. 

\begin{lemma} [Global Sensitivity of $\sgm{k}$ and $\ds$] \label{lm:sensitivity-singularvalue-gap}
The sensitivities of $\sgm{k}$ and  $\ds$ are each $2/N$.
\end{lemma}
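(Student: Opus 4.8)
The plan is to bound the global sensitivity of two scalar functions of the dataset, $\sgm{k} := \sigma_k(\slda)$ and $\ds = \min_{i\in[k]}\frac{\sigma_i(\wtshort)-\sigma_{i+1}(\wtshort)}{4}$, under the adjacency relation $D\sim D'$ (replacing one document). The key tool is Weyl's inequality, which controls how singular values move under additive perturbation: for any matrices $A,B$ we have $|\sigma_i(A)-\sigma_i(B)|\le\|A-B\|_2$ for every index $i$. So the entire problem reduces to bounding the spectral-norm perturbation $\|\slda-\slda'\|_2$ induced by swapping one document, which is in turn dominated by the $\ell_1$ perturbation already controlled in Theorem~\ref{m23}.

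For $\sgm{k}$, I would argue as follows. Let $\slda,\slda'$ be the empirical second-order moment matrices on adjacent datasets $D,D'$. By Weyl, $|\sigma_k(\slda)-\sigma_k(\slda')|\le\|\slda-\slda'\|_2$. Since the spectral norm is upper bounded by the entrywise $\ell_1$ norm (indeed $\|M\|_2\le\|M\|_F\le\|M\|_1$ for the vectorized matrix, or more directly $\|M\|_2 \le \sqrt{\|M\|_1\|M\|_\infty}\le \|M\|_1$ for the relevant regime), the change is at most the $\ell_1$ sensitivity $\sslda$, which Theorem~\ref{m23} bounds by $2/N$. Taking the maximum over all datasets gives global sensitivity $2/N$ for $\sgm{k}$, as claimed.

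For $\ds$, the argument is slightly more delicate because $\ds$ depends on singular values of the whitened tensor $\wtshort$, not directly on $\slda$. The cleanest route I expect is to observe that $\ds$ is a $\frac14$-scaled minimum of consecutive-gap differences $\sigma_i(\wtshort)-\sigma_{i+1}(\wtshort)$, and that the map $D\mapsto(\sigma_1(\wtshort),\dots)$ is itself $1$-Lipschitz (in the appropriate norm) with respect to perturbations of the underlying moments. Using again Weyl's inequality on $\wtshort$, each individual singular value $\sigma_i(\wtshort)$ changes by at most $\|\wtshort-\wtshort'\|$. Since a difference of two $1$-Lipschitz quantities is $2$-Lipschitz and the outer minimum does not increase sensitivity (the min of functions each with sensitivity $s$ has sensitivity at most $s$), the factor-$\tfrac14$ combined with the factor-$2$ should collapse back to the bound $2/N$ inherited from the moment sensitivity. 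I would make this precise by showing that the relevant perturbation of $\wtshort$ feeds back to a $2/N$-scale change after the normalization by $4$.

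The main obstacle I anticipate is the $\ds$ computation: one must verify carefully that the perturbation of the \emph{gap} of whitened-tensor singular values does not pick up amplification factors involving $k$ or $\sigma_k(\slda)^{-1}$ — unlike the local sensitivities in Theorems~\ref{swt}--\ref{svaw}, here we claim a clean dimension-free $2/N$ bound. The resolution should be that $\ds$ is a property measured on the \emph{same} scale as the singular values themselves (which are $O(1)$ after whitening), so the Weyl perturbation combined with the division by $4$ exactly cancels the factor of $2$ from differencing consecutive singular values, yielding $2/N$ without any spectral-gap or dimension blowup. I would double-check that the whitening map used to define $\wtshort$ preserves the relevant Lipschitz constant, since any dependence on $\sigma_k(\slda)^{-1}$ entering through whitening would break the claimed bound — ruling this out is the crux.
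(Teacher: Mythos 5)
Your treatment of $\sgm{k}$ is correct and is exactly the paper's argument: Weyl's inequality followed by the chain $\norm{E}_2 \le \norm{E}_F \le \norm{E}_{1,1} \le 2/N$, the last bound being the $\ell_1$ sensitivity from Theorem~\ref{m23}.

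The $\ds$ part, however, has a genuine gap, and it is precisely the one you flagged and then left unresolved. Your route requires that replacing one document perturbs $\wtshort$ by $O(1/N)$ in norm, uniformly over datasets, i.e., that whitening does not amplify the moment-level perturbation. The paper's own Theorem~\ref{swt} says the opposite: the perturbation of the whitened tensor under a single-document change is $O\bigl(k^{3/2}/(N\,\sigma_k(\slda)^{3/2})\bigr)$, because the whitening matrix $\widehat{W}$ itself moves when $\slda$ moves. That quantity depends on the data through $\sigma_k(\slda)$ and is not globally bounded by any multiple of $1/N$; indeed this is exactly why the paper needs the data-dependent calibration of Procedure~\ref{algo:local-sensitivity} for the downstream configurations. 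So the hoped-for statement ``the whitening map preserves the Lipschitz constant'' is not a detail to be double-checked --- it fails --- and your argument for $\ds$ cannot be completed as written. The paper's proof avoids this chain entirely: it views $\wtshort$ in polyadic (decomposition) form, takes the singular values to be the component amplitudes, and applies a Weyl-type inequality $|\sigma_i(\wtshort) - \sigma_i(\wtshort+\mytensor{E})| \le \norm{\mytensor{E}}$ directly to the perturbation tensor attributed to the changed document, pointing to the computation in Section~\ref{appendix:sensitivity-tlda}; differencing consecutive singular values and dividing by $4$ (with the min not increasing sensitivity, as you correctly note) then gives the stated constant. One may fairly object that the paper's own write-up is terse here --- it does not explicitly account for the movement of $\widehat{W}$ either --- but that polyadic-form argument, not a Lipschitz property of whitening, is how the paper resolves the obstacle on which your proposal stalls.
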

The proof is in Appendix~\ref{sec:global-sensitivity-m2}. 

\smallskip

\noindent \textbf{Calibrating Noise} Using Lemma~\ref{lm:local-sensitivity} and Lemma~\ref{lm:sensitivity-singularvalue-gap}, we describe an algorithm that guarantees ($\epsilon_1+\epsilon_1'+\epsilon,\delta_1+\delta_1'+\delta_2$)-\textsf{DP} under local sensitivity $\mathsf{LS}$ in Procedure~\ref{algo:local-sensitivity}.

\newcommand{\diffp}{differentially private\xspace}
\section{Differentially Private Spectral Algorithm}\label{sec:utility-section}
In Figure~\ref{fig:flow}, 
each node corresponds to an intermediate objective required for a final output estimation and each edge denotes certain operation required as a step of the spectral learning algorithm. 
We consider injecting noise to a subset $E$ of edges $\{e_i\}_{i=0}^9$ that separates the input and the output (a cut). When $E$ is a cut, differentially privately releasing all nodes preceding the edges in $E$ guarantees the overall differential privacy according to the composition theorem and the closure to post processing. 
We call such a subset of edges as a ``configuration'' if adding noise to all edges in this configuration guarantees differential privacy of the overall algorithm. 

In this section, we denote $\tdp{i}=\frac{\sqrt{2\ln{1.25/\delta_i}}}{\epsilon_i}$. Further, $\tilde{\sigma}_k$ and $\tilde{\ds}$ are determined by a choice of $(\epsilon_1,\delta_1)$ and $(\epsilon_1',\delta_1')$ according to Procedure~\ref{algo:local-sensitivity}.  In what follows, if noise is added to edge $e_i$, then
$\epsilon_i$ refers to the associated differential privacy parameter.

Four configurations are identified.

\subsection{Config. 1 ($e_3,e_4,e_8$): Perturbation on $\slda$ , $\tlda$ for private $\widehat{M}_3, \widehat{W}, \widehat{W}^\dag$}
In Config. 1, 
since ($e_3,e_4,e_8$) is a cut that separates the input and the output, 
we add Gaussian noise  $\mathcal{N}(0, \frac{1}{N^2}\tdp{3})$ on  $\tlda$ to ensure ($\epsilon_3,\delta_3$)-\textsf{DP} $\tlda$ for edge $e_3$,  
 noise $\mathcal{N}(0, \frac{1}{N^2}\tdp{4})$ on $\slda$ to ensure ($\epsilon_4,\delta_4$)-\textsf{DP} $W$  for edge $e_4$,
 and also noise $\mathcal{N}(0, \frac{1}{N^2}\tdp{8})$ on $\slda$ to ensure ($\epsilon_8,\delta_8$)-\textsf{DP} $W^{\dagger}$ for edge $e_8$. 

Config. 1 has a bounded global sensitivity $O(1/N)$ and allows a  pure-\textsf{DP} if we add Laplace noise using Laplacian mechanism.

\subsection{Config. 2 ($e_6,e_8$): Perturbation on $\wtshort$ and $\slda$  for private $\widehat{\mytensor{T}},\widehat{W}^\dag$}
In Config. 2, since ($e_6,e_8$) is a cut that separates the input and the output, we add Gaussian noise $\mathcal{N}(0, \frac{k^{3}}{N^2\tilde{\sigma}_k^{3}}\tdp{6})$ on $\wtshort$ to ensure  ($\epsilon_1+\epsilon_6,\delta_1+\delta_6$)-\textsf{DP} $\wtshort$ for edge $e_6$, $\mathcal{N}(0, \frac{1}{N^2}\tdp{8})$ on $\slda$ to ensure ($\epsilon_8, \delta_8$)-\textsf{DP} $W^{\dagger}$ for edge $e_8$.

In Config. 2 
the whitening matrix results from a noiseless $\slda$, but the pseudo-inverse results from a noisy $\slda$.  
We add noise to a tensor of a smaller dimension, at the expense of an increased sensitivity by a factor of $\frac{k^{3/2}}{\sigma^{3/2}_k(\slda)}$. 
To guarantee utility, we need $\epsilon_{6} = \Omega(\frac{\ds \sigma_k(\wtshort) k^{3/2}}{N \sigma^{3/2}_k(\slda)})$ and $\epsilon_{8}=\Omega(\frac{\sqrt{d}}{(\sgs) N})$. 
The dependence on $\sqrt{d}$ still remains however, as it originates from adding noise to $\slda$ which is still done for $W^{\dagger}$.

\subsection{Config. 3 ($e_7,e_8$): Perturbation on $\bar{\mu_i}$, $\bar{\alpha_i}$ and $\slda$ for private $\mbar, \widehat{W}^\dag$}
In Config. 3,  since ($e_7,e_8$) is a cut that separates the input and the output, we add Gaussian noise $\mathcal{N}(0, \frac{k^{4}}{N^2\tilde{\ds}^2\tilde{\sigma}_k^{3}}\tdp{7})$
on $\bar{\mu_i}$ and $\bar{\alpha_i}$ to ensure  ($\epsilon_1+\epsilon_1'+\epsilon_7,\delta_1+\delta_1'+\delta_7$)-\textsf{DP} $\bar{\mu_i}$ and $\bar{\alpha_i}$  for edge $e_7$, and noise $\mathcal{N}(0, \frac{1}{N^2}\tdp{8})$ 
on $\slda$ to ensure ($\epsilon_8, \delta_8$)-\textsf{DP} $W^{\dagger}$ for edge $e_8$.

This configuration adds noise to the output of the simultaneous tensor power method and thus the sensitivity after the output of the simultaneous power iteration increases by a factor of $\frac{1}{\ds}$ compared to Config. 2. 
However, according to utility loss guarantees in Theorem~\ref{thm:utility-config2} and \ref{thm:utility-config3}, the dependence on $k$ in the last term drops from $k^{2.5}$ to $k^2$ compared to Config. 2. 
This is because although the previous configuration adds noise before the decomposition at a lower sensitivity, the error in the output grows by a factor of $\frac{\sqrt{k}}{\ds}$.

\subsection{Config. 4 ($e_9$): Perturbation on output $\mu_i$, $\alpha_i$ for private $\widehat{\mu}$}
The last option we consider is to add noise to the final output. In Config. 4, since ($e_9$) is a cut that separates the input and the output, we add Gaussian noise $\mathcal{N}(0, \frac{k^{4}\sigma_1({\hat{M}_2})}{N^2\tilde{\ds}^2\tilde{\sigma}_k^{3}}\tdp{7})$
on $\mu_i$, $\alpha_i$ to ensure   ($\epsilon_1+\epsilon_1'+\epsilon_9,\delta_1+\delta_1'+\delta_9$)-\textsf{DP} $\mu_i$, $\alpha_i$  for edge $e_9$. 

This method is arguably the simplest, as the previous configurations involve the composition of multiple differentially private outputs whereas this method only adds noise to one branch. 
Adding noise to $e_9$ instead of $e_7$ means that the noise vector increases in dimension from $k$ to $d$ which makes the utility loss larger.

Though it is possible to perform input perturbation, we exclude this option because 
the $l_2$ sensitivity is {$\sqrt{2}L$} (where $L$ is the length of the longest document) which does not decay with the number of records. Therefore the utility of input perturbation is poor even with many records.

\begin{figure*}[!htbp]
\centering
\begin{subfigure}[b]{0.32\linewidth}
\centering
	\psfrag{Epsilon}[][][0.6]{Composite $\epsilon$}
	\psfrag{Error}[][][0.6]{Error}
	\psfrag{config1}[][][0.6]{config 1}
	\psfrag{config2}[][][0.6]{config 2}
	\psfrag{config3}[][][0.6]{config 2}
	\psfrag{config4}[][][0.6]{config 3}
	\psfrag{vips}[][c][0.6]{vips}
	\psfrag{vips u0}[][][0.6]{\!\!\!\!\!\!\!\!\!\! vips unnoised}
	\psfrag{0 noise}[][c][0.6]{\!\!\!\!\! unnoised}
	\psfrag{Alpha:0.1,Delta:1e-07}[][][0.6]{}
\begin{tikzpicture}[scale=0.65]

\definecolor{color0}{rgb}{0.192156862745098,0.509803921568627,0.741176470588235}
\definecolor{color1}{rgb}{0.419607843137255,0.682352941176471,0.83921568627451}
\definecolor{color2}{rgb}{0.619607843137255,0.792156862745098,0.882352941176471}
\definecolor{color3}{rgb}{0.776470588235294,0.858823529411765,0.937254901960784}
\definecolor{color4}{rgb}{0.901960784313726,0.333333333333333,0.0509803921568627}
\definecolor{color5}{rgb}{0.992156862745098,0.552941176470588,0.235294117647059}
\definecolor{color6}{rgb}{0.682352941176471,0.792156862745098,0.419607843137255}

\begin{axis}[
axis line style={white!80.0!black},
height=8cm,
legend cell align={left},
legend style={fill=none, at={(0.5,0.81)}, draw=none},
tick pos=both,
width=7cm,
x grid style={white!80.0!black},
xlabel={Composite $\epsilon$},
xmajorgrids,
xmin=0, xmax=3,
xtick style={color=white!15.0!black},
y grid style={white!80.0!black},
ylabel={Error},
ymajorgrids,
ymin=0, ymax=6,
ytick style={color=white!15.0!black}
]
\addplot [very thick, color0, mark=*, mark size=4, mark options={solid}]
table {%
0 1.70594550966198
1 1.28398947011284
2 1.18338165809891
3 1.27255437078678
};
\addlegendentry{config 1}
\addplot [very thick, color1, mark=triangle*, mark size=4, mark options={solid,rotate=180}]
table {%
0 1.33286009184361
1 1.14460181389032
2 1.34773868515684
3 1.35447131943987
};
\addlegendentry{config 2}
\addplot [very thick, color2, mark=diamond*, mark size=4, mark options={solid}]
table {%
0 1.72460899995441
1 1.72707732837965
2 1.70630116946049
3 1.71926649934667
};
\addlegendentry{config 3}
\addplot [very thick, color3, mark=square*, mark size=4, mark options={solid}]
table {%
0 1.72085812921996
1 1.72387905146362
2 1.73804689545208
3 1.7393704973639
};
\addlegendentry{config 4}
\addplot [very thick, color4, mark=triangle*, mark size=4, mark options={solid,rotate=270}]
table {%
0 5.6799999911291
1 5.51999999619819
2 5.45000000792045
3 5.38000000063364
};
\addlegendentry{vi}
\addplot [very thick, color5, mark=triangle*, mark size=4, mark options={solid,rotate=90}]
table {%
0 4.91000004403769
1 4.91000004403769
2 4.91000004403769
3 4.91000004403769
};
\addlegendentry{vi-u}
\addplot [very thick, color6, mark=asterisk, mark size=4, mark options={solid}]
table {%
0 0.216751741255057
1 0.216751741255057
2 0.216751741255057
3 0.216751741255057
};
\addlegendentry{unnoised}
\end{axis}

\end{tikzpicture}
	\caption{\scriptsize $\alpha_0=0.1$, $N=100K$}
	\label{fig:util-epsilon-small-alpha}
\end{subfigure}
\hfill
\begin{subfigure}[b]{0.32\linewidth}
	\psfrag{Epsilon}[][][0.6]{Composite $\epsilon$}
	\psfrag{Error}[][][0.6]{Error}
	\psfrag{config1}[][][0.6]{config 1}
	\psfrag{config2}[][][0.6]{config 2}
	\psfrag{config3}[][][0.6]{config 2}
	\psfrag{config4}[][][0.6]{config 3}
	\psfrag{vips}[][c][0.6]{vips}
	\psfrag{vips unnoised}[][][0.6]{\!\!\!\!\!\!\!\!\!\! vips unnoised}
	\psfrag{unnoised}[][c][0.6]{\!\!\!\!\! unnoised}
	\psfrag{Alpha:1000,Delta:1e-07}[][][0.6]{}
\begin{tikzpicture}[scale=0.65]

\definecolor{color0}{rgb}{0.192156862745098,0.509803921568627,0.741176470588235}
\definecolor{color1}{rgb}{0.419607843137255,0.682352941176471,0.83921568627451}
\definecolor{color2}{rgb}{0.619607843137255,0.792156862745098,0.882352941176471}
\definecolor{color3}{rgb}{0.776470588235294,0.858823529411765,0.937254901960784}
\definecolor{color4}{rgb}{0.682352941176471,0.792156862745098,0.419607843137255}

\begin{axis}[
axis line style={white!80.0!black},
height=8cm,
legend cell align={left},
legend style={fill=none, at={(0.5,0.4)}, draw=none},
tick pos=both,
width=7cm,
x grid style={white!80.0!black},
xlabel={Composite $\epsilon$},
xmajorgrids,
xmin=0, xmax=3,
xtick style={color=white!15.0!black},
y grid style={white!80.0!black},
ylabel={Error},
ymajorgrids,
ymin=0, ymax=2,
ytick style={color=white!15.0!black}
]
\addplot [very thick, color0, mark=*, mark size=4, mark options={solid}]
table {%
1 1.75039638685495
2 1.73373591508456
3 1.75219733663742
};
\addlegendentry{config 1}
\addplot [very thick, color1, mark=triangle*, mark size=4, mark options={solid,rotate=180}]
table {%
0 1.73710173508999
1 1.61703841625879
2 1.76184287292555
3 1.33215783924554
};
\addlegendentry{config 2}
\addplot [very thick, color2, mark=diamond*, mark size=4, mark options={solid}]
table {%
0 1.75032893555224
1 1.74025845605702
2 1.73586737625032
3 1.74163446263238
};
\addlegendentry{config 3}
\addplot [very thick, color3, mark=square*, mark size=4, mark options={solid}]
table {%
0 1.74209313149084
1 1.73758063933926
2 1.76941090909008
3 1.76075016182158
};
\addlegendentry{config 4}
\addplot [very thick, color4, mark=asterisk, mark size=4, mark options={solid}]
table {%
0 0.909406799484555
1 0.909406799484555
2 0.909406799484555
3 0.909406799484555
};
\addlegendentry{unnoised}
\end{axis}

\end{tikzpicture}
	\caption{\scriptsize $\alpha_0=1000$, $N=1000$}
	\label{fig:util-few-doc}
\end{subfigure}
\hfill
\begin{subfigure}[b]{0.32\linewidth}
	\psfrag{Epsilon}[][][0.6]{Composite $\epsilon$}
	\psfrag{Error}[][][0.6]{Error}
	\psfrag{config1}[][][0.6]{$\mathsf{config}$ 1}
	\psfrag{config2}[][][0.6]{$\mathsf{config}$ 2}
	\psfrag{config3}[][][0.6]{$\mathsf{config}$ 3}
	\psfrag{config4}[][][0.6]{$\mathsf{config}$ 4}
	\psfrag{vips}[][][0.6]{$\ \mathsf{vi}$}
	\psfrag{vips 0}[][][0.6]{$\!\!\!\!\!\mathsf{vi}$-$\mathsf{u}$}
	\psfrag{unnoised}[][][0.6]{$\!\!\!\!\!\mathsf{unnoised}$}
	\psfrag{Alpha:1000,Delta:1e-07}[][][0.6]{}
\begin{tikzpicture}[scale=0.65]

\definecolor{color0}{rgb}{0.192156862745098,0.509803921568627,0.741176470588235}
\definecolor{color1}{rgb}{0.419607843137255,0.682352941176471,0.83921568627451}
\definecolor{color2}{rgb}{0.619607843137255,0.792156862745098,0.882352941176471}
\definecolor{color3}{rgb}{0.776470588235294,0.858823529411765,0.937254901960784}
\definecolor{color4}{rgb}{0.901960784313726,0.333333333333333,0.0509803921568627}
\definecolor{color5}{rgb}{0.992156862745098,0.552941176470588,0.235294117647059}
\definecolor{color6}{rgb}{0.682352941176471,0.792156862745098,0.419607843137255}

\begin{axis}[
axis line style={white!80.0!black},
height=8cm,
legend cell align={left},
legend style={fill=none, at={(0.5,0.75)}, draw=none},
tick pos=both,
width=7cm,
x grid style={white!80.0!black},
xlabel={Composite $\epsilon$},
xmajorgrids,
xmin=0, xmax=3,
xtick style={color=white!15.0!black},
y grid style={white!80.0!black},
ylabel={Error},
ymajorgrids,
ymin=0, ymax=6,
ytick style={color=white!15.0!black}
]
\addplot [very thick, color0, mark=*, mark size=4, mark options={solid}]
table {%
0 1.69729885158889
1 0.746910098574411
2 0.70772973314965
3 0.26354904314493
};
\addlegendentry{config 1}
\addplot [very thick, color1, mark=triangle*, mark size=4, mark options={solid,rotate=180}]
table {%
0 0.615495418797017
1 0.265131358571323
2 0.262864336253306
3 0.263961730169281
};
\addlegendentry{config 2}
\addplot [very thick, color2, mark=diamond*, mark size=4, mark options={solid}]
table {%
0 1.7312842268301
1 1.71554243140836
2 1.71677117803453
3 1.7087132310229
};
\addlegendentry{config 3}
\addplot [very thick, color3, mark=square*, mark size=4, mark options={solid}]
table {%
0 1.72436587388157
1 1.6983240110188
2 1.74602937643485
3 1.72221214565222
};
\addlegendentry{config 4}
\addplot [very thick, color4, mark=triangle*, mark size=4, mark options={solid,rotate=270}]
table {%
0 5.59361999673614
1 5.46217999687047
2 5.33382999822804
3 5.23788999771289
};
\addlegendentry{vi}
\addplot [very thick, color5, mark=triangle*, mark size=4, mark options={solid,rotate=90}]
table {%
0 4.56999992174597
1 4.56999992174597
2 4.56999992174597
3 4.56999992174597
};
\addlegendentry{vi-u}
\addplot [very thick, color6, mark=asterisk, mark size=4, mark options={solid}]
table {%
0 0.263745406892583
1 0.263745406892583
2 0.263745406892583
3 0.263745406892583
};
\addlegendentry{unnoised}
\end{axis}

\end{tikzpicture}
	\caption{\scriptsize $\alpha_0=1000$, $N=100K$}
	\label{fig:util-epsilon-big-alpha}
\end{subfigure}
\caption{Error of \textbf{our method under all configurations} vs \textbf{the differentially private VI} over varying composite $\epsilon$ while fixing the composite $\delta=10^{-7}$ using $N=100k$ documents.  $\mathsf{vi}$-$\mathsf{u}$ and $\mathsf{unnoised}$ denote the non-differentially private version of variational inference and our spectral algorithm. Config. 3 overlaps with config. 4 and thus is hardly visible.}
\label{fig:util-epsilon}
\end{figure*}
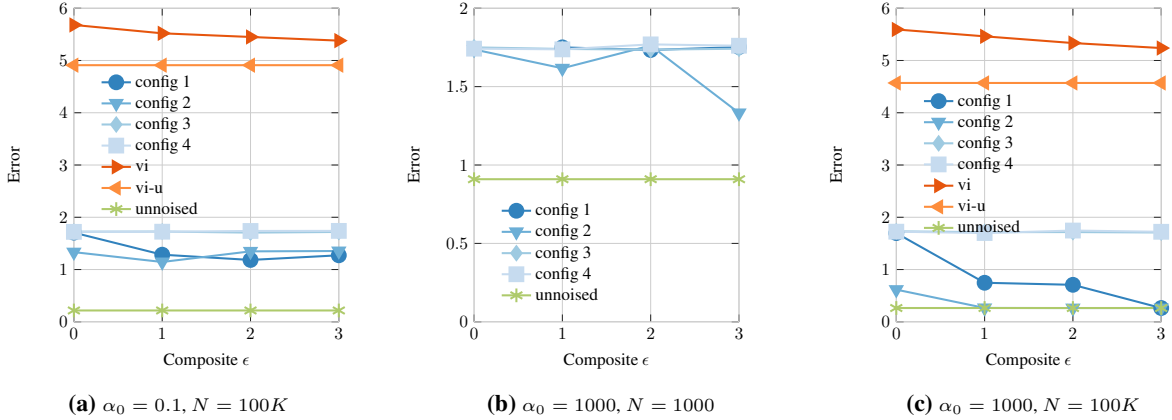

\subsection{Utility Guarantees}
For each configuration, we compute the noise
needed to obtain ($\epsilon,\delta$) differential privacy based on
sensitivity, thereby characterizing the utility with necessary noise. 
The utility of each configuration is listed in
Theorems~\ref{thm:utility-config1},~\ref{thm:utility-config2},~\ref{thm:utility-config3} and~\ref{thm:utility-config4}.
Proofs of all utility derivations are in
Appendix~\ref{app:utility}.


\begin{theorem}[Config. 1 Utility Loss]\label{thm:utility-config1}
The utility loss $\norm{\mu_i -\mu^{\mathsf{DP}}_i}$ using Config. 1  to guarantee ($\epsilon_3+\epsilon_4+\epsilon_5$, $\delta_3+\delta_4+\delta_5$)-\textsf{DP}  is 
$O(\frac{\sqrt{\sgm{1}k}}{\ds} ((\frac{\sqrt{d}}{N \sgm{k}^{3/2}}{\tdp{4}})^3+ \frac{\sqrt{d}}{N \sgm{k}^{3/2}} \tdp{3})  + \frac{\sqrt{\sgm{1} d}}{\sgm{k} N} \tdp{8}+ \sqrt{\sgm{1}+\frac{\sqrt{d}}{N} \tdp{8}} \frac{\sqrt{k}}{\ds} \Big[ (\frac{\sqrt{d}}{N \sgm{k}} \tdp{4})^3 + \frac{\sqrt{d}}{N \sgm{k}^{3/2}} \tdp{3} \Big])
.$
\end{theorem}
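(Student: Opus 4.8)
The plan is to track how the three independent Gaussian perturbations injected in Config.~1 propagate stage by stage down the computation graph of Figure~\ref{fig:flow} to the final estimate $\mu^{\mathsf{DP}}_i$. Write $E_2^{(4)}$ and $E_2^{(8)}$ for the two noise matrices added to $\slda$ (feeding the whitening matrix $\widehat W$ through $e_4$ and the un-whitening matrix $\widehat W^\dagger$ through $e_8$, respectively) and $E_3$ for the noise tensor added to $\tlda$ through $e_3$. The first step is to turn the per-entry variances $\frac{1}{N^2}\tdp{4}$, $\frac{1}{N^2}\tdp{8}$, $\frac{1}{N^2}\tdp{3}$ into high-probability operator-norm bounds: a $d\times d$ Gaussian matrix with these variances has spectral norm $O(\frac{\sqrt d}{N}\tdp{\cdot})$, and $E_3$ admits an analogous tensor-norm bound, all holding simultaneously after a union bound. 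These three quantities are the atoms out of which every term of the stated bound is built; the privacy guarantee then follows by Gaussian-mechanism calibration against the global sensitivity $2/N$ of Theorem~\ref{m23} and simple composition over the cut $(e_3,e_4,e_8)$.

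Next I would push each perturbation through the four operations of the pipeline. (i) \emph{Whitening.} Since $\widehat W = \widehat U\widehat\Sigma^{-1/2}$ is read off the top-$k$ SVD of $\slda+E_2^{(4)}$, combining Weyl's inequality for the singular values with a Davis--Kahan bound for the singular subspace gives $\|\widehat W - W\| = O(\frac{\|E_2^{(4)}\|}{\sgm{k}^{3/2}}) = O(\frac{\sqrt d}{N\sgm{k}^{3/2}}\tdp{4})$, the extra $\sgm{k}^{-3/2}$ arising from differentiating $\Sigma^{-1/2}$. (ii) \emph{Whitened tensor.} Expanding $(\tlda+E_3)(\widehat W,\widehat W,\widehat W)$ around $\tlda(W,W,W)$ isolates the contribution of $E_3$, of size $\|E_3\|\,\|W\|^3 = O(\frac{\sqrt d}{N\sgm{k}^{3/2}}\tdp{3})$, from the contribution of the whitening error, whose leading surviving term is cubic in $\|\widehat W - W\|$ and hence $O((\frac{\sqrt d}{N\sgm{k}^{3/2}}\tdp{4})^3)$. (iii) \emph{Decomposition.} Feeding this whitened-tensor error into the perturbation guarantee of the simultaneous power method~\cite{wang2017tensor,anandkumar2014tensor} multiplies it by $\frac{\sqrt k}{\ds}$, exactly the amplification recorded in Theorem~\ref{svbw}, producing the error on $\bar\mu^{\mathsf{DP}}_i$. (iv) \emph{Un-whitening.} With $\mu_i \propto (\widehat W^\dagger)^\top\bar\mu_i$ and $\widehat W^\dagger = \widehat\Sigma^{1/2}\widehat U^\top$ built from $\slda+E_2^{(8)}$, I would split $\mu^{\mathsf{DP}}_i-\mu_i$ by adding and subtracting $(\widehat W^\dagger)^\top\bar\mu_i$ into a term carrying the exact $\bar\mu_i$ against the un-whitening perturbation $\|\widehat W^\dagger - W^\dagger\| = O(\frac{\sqrt{\sgm{1}}}{\sgm{k}}\|E_2^{(8)}\|) = O(\frac{\sqrt{\sgm{1}d}}{\sgm{k}N}\tdp{8})$ (dominated by the subspace rotation scaled by $\|\widehat\Sigma^{1/2}\|=\sqrt{\sgm{1}}$), and a term carrying the perturbed $\bar\mu^{\mathsf{DP}}_i$ against $\|\widehat W^\dagger\| \le \sqrt{\sgm{1}+\frac{\sqrt d}{N}\tdp{8}}$ from Weyl on $\slda+E_2^{(8)}$.

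Assembling these by the triangle inequality reproduces the three groups of the claimed bound: the $\frac{\sqrt{\sgm{1}k}}{\ds}(\cdots)$ group is the decomposition error further amplified by the un-whitening factor $\sqrt{\sgm{1}}$ (so that $\sqrt{\sgm{1}}\cdot\frac{\sqrt k}{\ds}=\frac{\sqrt{\sgm{1}k}}{\ds}$); the $\frac{\sqrt{\sgm{1}d}}{\sgm{k}N}\tdp{8}$ term is the direct un-whitening perturbation acting on the unit-norm $\bar\mu_i$; and the last group couples the perturbed operator norm $\sqrt{\sgm{1}+\frac{\sqrt d}{N}\tdp{8}}$ with the perturbed $\bar\mu^{\mathsf{DP}}_i$.

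I expect the main obstacle to lie in steps (ii)--(iii). One must first verify that the perturbed whitened tensor remains close enough to an exactly orthogonally decomposable tensor for the power-method perturbation theorem to be invoked, and then explain carefully why the whitening-matrix error enters cubically rather than linearly --- this is the origin of the $(\cdot)^3$ factors --- and why the same whitening error is measured with $\sgm{k}^{3/2}$ in one group but only $\sgm{k}$ in the denominator of the last group (the two reflect error measured in the whitened versus the re-normalized coordinates). Keeping all the high-probability spectral-norm events consistent under a single union bound, and confirming that the data-dependent terms are never evaluated with a larger $\sgm{k}$ or smaller $\ds$ than the true one, are the remaining bookkeeping hurdles.
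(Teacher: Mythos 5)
Your proposal follows essentially the same route as the paper's own proof: the same three Gaussian perturbations with the same $O(\sqrt{d}\,\tau/N)$ spectral-norm bounds, the same whitening-error bound scaling as $\sgm{k}^{-3/2}$, the same decomposition of the whitened-tensor error into a cubic whitening term plus a noise-tensor term, the same $\frac{\sqrt{k}}{\ds}$ amplification from the simultaneous power method, and the same final triangle-inequality split into the three groups (decomposition error scaled by $\sqrt{\sgm{1}}$, direct un-whitening perturbation $\frac{\sqrt{\sgm{1}d}}{\sgm{k}N}\tdp{8}$, and the $\sqrt{\sgm{1}+\frac{\sqrt{d}}{N}\tdp{8}}$ term). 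The concerns you flag — the cubic rather than linear appearance of the whitening error, and the inconsistent $\sgm{k}$ versus $\sgm{k}^{3/2}$ powers between the first and last groups — are genuine features (arguably weaknesses) of the paper's own argument, which your plan faithfully reproduces.
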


\begin{theorem}[Config. 2 Utility Loss]\label{thm:utility-config2}
The utility loss $\norm{\mu_i -\mu^{\textsf{DP}}_i}$ using Config. 2 to guarantee ($\epsilon_1+\epsilon_6+\epsilon_8$, $\delta_1+\delta_6+\delta_8$)-\textsf{DP}  is
$O(\frac{\sqrt{\sigma_1(\slda) k^{2.5}}}{\ds N \tilde{\sigma_k}^{3/2}} \tdp{6} + \frac{\sqrt{\sigma_1(\slda) d}}{\sk N} \tdp{8} + \sqrt{\sgm{1} + \frac{\sqrt{d}}{N} \tdp{8}} \frac{k^{2.5} \tdp{6}}{\ds N \tilde{\sigma_k}^{3/2}}).$
\end{theorem}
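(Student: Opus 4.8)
The plan is to separate the two noise injections of Config.~2 and propagate each through the remaining deterministic steps to the recovered topic vector $\widehat{\mu}_i^{\mathsf{DP}}\propto(\widehat{W}^\dag_{\mathsf{DP}})^\top\mbar{i}^{\mathsf{DP}}$. The eigenvectors $\mbar{i}^{\mathsf{DP}}$ come from running the simultaneous power method on the perturbed whitened tensor $\wtshort+E_6$, where $E_6$ has i.i.d.\ Gaussian entries calibrated by the Gaussian mechanism to the local sensitivity $\swt$ of Theorem~\ref{swt}, with $\sgm{k}$ replaced by its private lower bound $\tilde{\sigma}_k$ (entrywise scale $O(k^{1.5}\tdp{6}/(N\tilde{\sigma}_k^{3/2}))$); the un-whitening matrix $\widehat{W}^\dag_{\mathsf{DP}}=\Sigma_{\mathsf{DP}}^{1/2}U_{\mathsf{DP}}^\top$ is recomputed from $\slda+E_8$ with $E_8$ a $d\times d$ Gaussian matrix of operator norm $O(\sqrt{d}\,\tdp{8}/N)$. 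The privacy level $(\epsilon_1+\epsilon_6+\epsilon_8,\delta_1+\delta_6+\delta_8)$ follows from the Gaussian mechanism, the data-dependent calibration of Lemma~\ref{lm:local-sensitivity} and Procedure~\ref{algo:local-sensitivity} (with $\epsilon_1,\delta_1$ paying for the private release of $\sgm{k}$ used to form $\tilde{\sigma}_k$), composition and post-processing; for utility I condition on the high-probability events $\tilde{\sigma}_k\le\sgm{k}$ and that the noise norms concentrate at their stated scales. I bound the deviation of $\widehat{\mu}_i^{\mathsf{DP}}$ from the non-private spectral estimate $\widehat{\mu}_i$; combining with the consistency guarantee of Lemma~\ref{lm:SampleCompLDA} via the triangle inequality upgrades this to a bound against the true $\mu_i$.

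Next I split the error by inserting $(\widehat{W}^\dag_{\mathsf{DP}})^\top\mbar{i}$:
\[
\norm{\widehat{\mu}_i-\widehat{\mu}_i^{\mathsf{DP}}}\;\le\;\norm{\widehat{W}^\dag_{\mathsf{DP}}-\widehat{W}^\dag}\,\norm{\mbar{i}}\;+\;\norm{\widehat{W}^\dag_{\mathsf{DP}}}\,\norm{\mbar{i}^{\mathsf{DP}}-\mbar{i}},
\]
treating the normalization hidden in $\propto$ as a bounded Lipschitz rescaling. Since $\widehat{W}^\dag=\Sigma^{1/2}U^\top$ with $\norm{\widehat{W}^\dag}=\sqrt{\sgm{1}}$, I control the first matrix factor by standard perturbation theory applied to $\slda+E_8$ (as in the input-perturbation analysis of \cite{dwork2014analyze}): Weyl bounds the $\Sigma^{1/2}$ part by $\norm{E_8}/\sqrt{\sgm{k}}$ and Davis--Kahan bounds the top-$\numTopic$ eigenspace rotation by $\norm{E_8}/(\sgm{k}-\sigma_{k+1})$, with $\sigma_{k+1}\approx 0$ since $\slda$ has rank $\numTopic$; multiplying the rotation by $\norm{\Sigma^{1/2}}=\sqrt{\sgm{1}}$ and using $\norm{\mbar{i}}=1$ gives the second summand $O(\sqrt{\sgm{1}d}\,\tdp{8}/(\sgm{k}N))$. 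For the operator-norm prefactor I use Weyl once more: $\norm{\widehat{W}^\dag_{\mathsf{DP}}}=\sqrt{\sigma_1(\slda+E_8)}\le\sqrt{\sgm{1}+O(\sqrt{d}\,\tdp{8}/N)}$.

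It remains to bound the eigenvector error $\norm{\mbar{i}^{\mathsf{DP}}-\mbar{i}}$, for which I invoke the perturbation guarantee of the simultaneous power method \cite{wang2017tensor} --- the same stability result underlying Theorem~\ref{svbw} --- which moves the recovered eigenvectors by at most the injective norm of the tensor perturbation amplified by $\sqrt{\numTopic}/\ds$. Combining the concentration of the injective norm of the Gaussian tensor $E_6$ with this amplification and its entrywise scale $k^{1.5}\tdp{6}/(N\tilde{\sigma}_k^{3/2})$ yields eigenvector-error factors of the orders appearing in the claim; multiplying them by the two operator-norm bounds for $\widehat{W}^\dag_{\mathsf{DP}}$ from the previous step produces the first (noiseless $\sqrt{\sgm{1}}$) and third (coupled $\sqrt{\sgm{1}+\sqrt{d}\,\tdp{8}/N}$) summands, and adding the un-whitening term gives the stated bound. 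I expect the main obstacle to be this last step: tracking how the Gaussian tensor noise propagates through the non-convex, iterative power method so that the data-dependent gap $\ds$ and the correct $\numTopic$-powers emerge, whereas the matrix-perturbation step is by comparison routine.
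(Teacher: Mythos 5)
Your overall route is the same as the paper's (Appendix~\ref{sec:appendix-utility-config2}): control $\norm{W^\dagger-(W^{DP})^\dagger}$ and $\norm{(W^{DP})^\dagger}$ by matrix perturbation on $\slda+E_{8,G}$, control the tensor-decomposition output by the simultaneous power method stability of \cite{wang2017tensor} applied to $\wtshort+E_{6,G}$, plug in the Gaussian concentration bounds $\norm{E_{8,G}}=O(\sqrt{d}\,\tdp{8}/N)$ and $\norm{E_{6,G}}=O(k^{2}\tdp{6}/(N\tilde{\sigma}_k^{3/2}))$, and account for privacy by composing $(\epsilon_1,\delta_1)$ (private release of $\sgm{k}$), $(\epsilon_6,\delta_6)$ and $(\epsilon_8,\delta_8)$ with post-processing. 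However, your starting decomposition has a genuine gap. The un-whitening step is not $\widehat{\mu}_i=(\widehat{W}^\dag)^\top\mbar{i}$ but $\widehat{\mu}_i=\frac{1}{\sqrt{\alpha^r_i}}(\widehat{W}^\dag)^\top\mbar{i}$, where the rescaling $\frac{1}{\sqrt{\alpha^r_i}}\propto\abar{i}$ is computed from the (noisy) tensor eigenvalues. Your two-term split omits the resulting third term $\norm{(W^{DP})^\dagger}\cdot\bigl|\frac{1}{\sqrt{\alpha^r_i}}-\frac{1}{\sqrt{\alpha^{r,DP}_i}}\bigr|$, and ``bounded Lipschitz rescaling'' is not a valid justification: the scale factor is itself a noise-dependent quantity, not a fixed map. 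In the paper this omitted term is exactly the origin of the third summand of the theorem; it is bounded via the eigenvalue half of the power-method guarantee, $|\abar{i}-\abar{i}^{DP}|\leq \frac{2\sqrt{k}\norm{E_{6,G}}}{\ds}$, paired with $\norm{(W^{DP})^\dagger}\leq\sqrt{\sgm{1}+\norm{E_{8,G}}}$.

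Your final expression still matches the theorem, but only by a coincidence of orders: the eigenvalue error bound equals the eigenvector error bound, so your term $\norm{\widehat{W}^\dag_{\mathsf{DP}}}\,\norm{\mbar{i}^{\mathsf{DP}}-\mbar{i}}$ has the same magnitude as the missing rescaling term (and dominates the theorem's first summand, which your split cannot produce separately --- that is harmless, since a smaller upper bound still proves the statement). The fix is one line: use the paper's three-term telescoping inequality from the preamble of Appendix~\ref{app:utility}, $\norm{\mu_i-\mu^{DP}_i}\leq \frac{1}{\sqrt{\alpha^r_i}}\norm{(W^\top)^\dagger}\norm{\bar{\mu}_i-\bar{\mu}^{DP}_i}+\frac{1}{\sqrt{\alpha^r_i}}\norm{W^\dagger-(W^{DP})^\dagger}+\norm{(W^{DP})^\dagger}\bigl|\frac{1}{\sqrt{\alpha^r_i}}-\frac{1}{\sqrt{\alpha^{r,DP}_i}}\bigr|$, and bound the last term with the eigenvalue stability result you already invoke. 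Two smaller remarks: (i) your Weyl/Davis--Kahan treatment of $\norm{W^\dagger-(W^{DP})^\dagger}$ differs from the paper's argument, which factorizes $W^\top(\slda+E_{8,G})_k W=ADA^\top$ and bounds $\norm{I-D}$ as in Lemma~\ref{wdist}, but both yield $O(\sqrt{\sgm{1}}\norm{E_{8,G}}/\sgm{k})$, so this is a legitimate alternative; (ii) the power-method guarantee only applies when the tensor perturbation is below a threshold of order $\ds/\sqrt{k}$ (cf.\ Theorem~\ref{svbw}), a condition the paper flags as ``$N$ sufficiently large'' and which your writeup should also state; also, the comparison against the true $\mu_i$ via Lemma~\ref{lm:SampleCompLDA} is extra --- the theorem's utility loss is defined between the non-private and private outputs.
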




\begin{theorem}[Config. 3 Utility Loss]\label{thm:utility-config3}
The utility loss $\norm{\mu_i -\mu^{\mathsf{DP}}_i}$ using Config. 3  to guarantee ($\epsilon_1+\epsilon_1'+\epsilon_7+\epsilon_8$, $\delta_1+\delta_1'+\delta_7+\delta_8$)-\textsf{DP}  is $O(\frac{\sqrt{\sigma_1(\slda) k^{2.5}}}{\tilde{\ds} N \tilde{\sigma}_k^{3/2}} \tdp{7} + \frac{\sqrt{\sigma_1(\slda) d}}{\sk N} \tdp{8} + \sqrt{\sgm{1} + \frac{\sqrt{d}}{N} \tdp{8}} \frac{k^{2} \tdp{7}}{\tilde{\ds} N \tilde{\sigma}_k^{3/2}}).$
\end{theorem}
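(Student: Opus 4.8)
The statement is a utility bound, so the privacy side needs only a line: releasing the data-dependent scales $\tilde\sigma_k$ and $\tilde\ds$ costs $(\epsilon_1,\delta_1)$ and $(\epsilon_1',\delta_1')$ by Procedure~\ref{algo:local-sensitivity}, the Gaussian mechanisms on $\bar{\mu}_i$ (edge $e_7$) and on $\slda$ (edge $e_8$) cost $(\epsilon_7,\delta_7)$ and $(\epsilon_8,\delta_8)$ by Lemma~\ref{lm:local-sensitivity} together with Theorem~\ref{svbw}, and the total follows from composition and closure under post-processing since $(e_7,e_8)$ is a cut. The substance of the plan is the error analysis of $\norm{\mu_i - \mu^{\mathsf{DP}}_i}$.

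First I would write the released estimate as $\mu_i^{\mathsf{DP}} = (W^{\dagger,\mathsf{DP}})^\top(\bar{\mu}_i + \eta_7)$ (up to the final normalization), where $\eta_7$ is the Gaussian noise on the eigenvector and $W^{\dagger,\mathsf{DP}} = \Sigma_{\mathsf{DP}}^{1/2}U_{\mathsf{DP}}^\top$ is built from the SVD of the perturbed moment matrix $\slda + \eta_8$. Subtracting the clean $\mu_i = (W^\dagger)^\top\bar{\mu}_i$ and applying the triangle inequality gives two contributions: an $e_8$-term $\norm{(W^\dagger - W^{\dagger,\mathsf{DP}})^\top\bar{\mu}_i}$ from perturbing the un-whitening map, and an $e_7$-term $\norm{(W^{\dagger,\mathsf{DP}})^\top\eta_7}$ from the injected eigenvector noise. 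Since $\norm{\bar{\mu}_i}=1$, the first is at most the operator norm $\norm{W^\dagger - W^{\dagger,\mathsf{DP}}}$, which I would bound by matrix-perturbation theory: $\eta_8$ is a symmetric Gaussian matrix of operator norm $O(\tfrac{\sqrt d}{N})\tdp{8}$ with high probability, and propagating this through the square root of the singular values (non-Lipschitz, hence the $\sigma_k(\slda)^{-1}$-type dependence and the use of the privately lower-bounded $\tilde\sigma_k$) and through the rotation of the top-$k$ singular subspace yields the middle term $\tfrac{\sqrt{\sigma_1(\slda)\,d}}{\sigma_k(\slda)\,N}\tdp{8}$.

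For the $e_7$-term I would use $\norm{(W^{\dagger,\mathsf{DP}})^\top\eta_7}\le \norm{(W^{\dagger,\mathsf{DP}})^\top}\,\norm{\eta_7}$. The noise $\eta_7$ is a $k$-dimensional Gaussian whose per-coordinate scale is the calibrated standard deviation obtained from the local sensitivity of $\bar{\mu}_i$ in Theorem~\ref{svbw}, namely $O\!\big(\tfrac{k^2}{\tilde\ds\,N\,\tilde\sigma_k^{3/2}}\big)$ after substituting the released $\tilde\sigma_k,\tilde\ds$; a $\chi^2$ tail bound then controls $\norm{\eta_7}$ with high probability. Bounding the operator norm of the un-whitening map either by the clean $\norm{(W^\dagger)^\top}\le \sqrt{\sigma_1(\slda)}$ or by the noisy top singular value $\sqrt{\sigma_1(\slda)+\tfrac{\sqrt d}{N}\tdp{8}}$ (Weyl's inequality) produces, respectively, the first and third terms of the claimed bound; collecting the three pieces under a single union bound over the above high-probability events gives the stated $O(\cdot)$. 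This is the same template as the Config.~2 proof (Theorem~\ref{thm:utility-config2}); the key difference is that here the noise enters directly on $\bar{\mu}_i$, so there is no need to propagate tensor-level noise through the simultaneous power method — which removes the extra $\tfrac{\sqrt k}{\ds}$ amplification and is exactly why the last term improves relative to Config.~2, at the price of the $1/\ds$ inflation already present in the $e_7$ sensitivity.

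I expect the main obstacle to be the $e_8$ matrix-perturbation step: the un-whitening map $W^\dagger=\Sigma^{1/2}U^\top$ mixes an eigenvalue perturbation through the non-Lipschitz square root (which degrades as $\sigma_k(\slda)\to0$) with a subspace rotation governed by the spectral gap, and one must show these compose correctly both when multiplied against the unit vector $\bar{\mu}_i$ and when the resulting noisy map is applied to the independent noise $\eta_7$. Making the spectral-norm bound on $\eta_8$, the $\chi^2$ concentration of $\norm{\eta_7}$, and the Procedure~\ref{algo:local-sensitivity} confidence bounds on $\tilde\sigma_k,\tilde\ds$ hold jointly while keeping all constants inside $O(\cdot)$ is the remaining bookkeeping.
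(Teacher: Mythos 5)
Your privacy accounting and your treatment of the $e_8$ branch (which yields the middle term) match the paper, but there is a genuine gap in how you produce the first and third terms: you have dropped the noise that Config.~3 adds to the tensor-decomposition \emph{eigenvalues} $\bar{\alpha}_i$, and that noise is precisely where the paper's third term comes from. The reconstruction is $\mu_i = \frac{1}{\sqrt{\alpha^r_i}}(W^T)^\dagger \bar{\mu}_i$ with $\frac{1}{\sqrt{\alpha^r_i}} = \ax\,\bar{\alpha}_i$, so the ``final normalization'' you set aside is itself a data-dependent quantity that edge $e_7$ perturbs: Config.~3 releases both $\bar{\mu}_i^{DP} = \bar{\mu}_i + Y$ with $Y$ a $k$-dimensional Gaussian \emph{and} $\bar{\alpha}_i^{DP} = \bar{\alpha}_i + n_i$ with $n_i$ a scalar Gaussian, both calibrated to the Theorem~\ref{svbw} sensitivity. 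The paper's error decomposition therefore has three pieces,
\begin{align*}
\norm{\mu_i -\mu^{DP}_i} \;\le\; \frac{1}{\sqrt{\alpha^r_i}} \norm{(W^T)^\dagger}\,\norm{\bar{\mu}_i -\bar{\mu}^{DP}_i}
\;+\; \frac{1}{\sqrt{\alpha^r_i}} \norm{W^\dagger-(W^{DP})^\dagger}
\;+\; \norm{(W^{DP})^\dagger}\,\Bigl|\frac{1}{\sqrt{\alpha^r_i}}-\frac{1}{\sqrt{\alpha^r_{i,DP}}}\Bigr|,
\end{align*}
and the theorem's first and third terms come from the first and third pieces respectively: the eigenvector noise is a $k$-vector, so $\norm{Y} = O\bigl(k^{2.5}\tdp{7}/(\tilde{\ds} N \tilde{\sigma}_k^{3/2})\bigr)$ carries $k^{2.5}$ and is paired with the \emph{clean} factors $\ax\,\sigma_1(\wtshort)\sqrt{\sgm{1}}$, whereas the eigenvalue noise is a scalar, so $|n_i| = O\bigl(k^{2}\tdp{7}/(\tilde{\ds} N \tilde{\sigma}_k^{3/2})\bigr)$ carries only $k^{2}$ and is paired with the \emph{noisy} factor $\sqrt{\sgm{1}+\norm{E_{8,G}}}$.

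Your proposal instead claims both terms arise from the single quantity $\norm{(W^{\dagger,DP})^\top \eta_7}$ by bounding the operator norm ``either by the clean or by the noisy'' value. That step fails: whichever bound you pick, the $\chi^2$ factor $\norm{\eta_7} = \Theta(\sqrt{k})\cdot k^{2}\tdp{7}/(\tilde{\ds} N \tilde{\sigma}_k^{3/2})$ stays attached, so the best you can obtain from this term is $\sqrt{\sgm{1}+\frac{\sqrt{d}}{N}\tdp{8}}\cdot \frac{k^{2.5}\,\tdp{7}}{\tilde{\ds} N \tilde{\sigma}_k^{3/2}}$, i.e.\ the noisy prefactor multiplying $k^{2.5}$ rather than $k^{2}$. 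In the regime $\frac{\sqrt{d}}{N}\tdp{8} \gg \sgm{1}$ this exceeds the sum of the theorem's first and third terms by up to a $\sqrt{k}$ factor, so your argument proves a strictly weaker statement than the one claimed; it also erases exactly the $k^{2.5}\!\to\! k^{2}$ improvement in the last term that Remark~\ref{rm:conf2_3} invokes to argue Config.~3 beats Config.~2 for large $k$. The fix is to restore the three-way splitting above and track the scalar noise on $\bar{\alpha}_i$ (entering through $1/\sqrt{\alpha^r_i}$) separately from the vector noise on $\bar{\mu}_i$; the rest of your machinery (Weyl's bound for $\norm{(W^{DP})^\dagger}$, the perturbation bound for $\norm{W^\dagger-(W^{DP})^\dagger}$, Gaussian concentration, and the union bound with the Procedure~\ref{algo:local-sensitivity} events) then goes through as in the paper.
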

%


\begin{theorem}[Config. 4 Utility Loss]\label{thm:utility-config4}
The utility loss $\norm{\mu_i -\mu^{\mathsf{DP}}_i}$ using Config. 4 to guarantee ($\epsilon_1+\epsilon_1'+\epsilon_9$, $\delta_1+\delta_1'+\delta_9$) is
$O(\frac{\sqrt{\sgm{1}d}k^2}{\tilde{\ds} N \tilde{\sigma}_k^{3/2}} \tdp{9}).$
\end{theorem}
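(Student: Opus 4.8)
The plan is to exploit the fact that Config.~4 is a \emph{pure output perturbation}: noise is injected only at the single edge $e_9$, directly on the final estimate $\mu_i \in \mathbb{R}^d$, so no downstream algorithmic step amplifies the error. Concretely, $\mu_i^{\mathsf{DP}} = \mu_i + Z_i$ with $Z_i \sim \mathcal{N}(0,\,\tilde{\mathsf{LS}}^2(\tdp{9})^2\, I_d)$, where $\tilde{\mathsf{LS}}$ is the calibrated upper bound on the local sensitivity $\sv$ of Theorem~\ref{svaw} and the Gaussian-mechanism per-coordinate standard deviation is $\tilde{\mathsf{LS}}\,\tdp{9}$. Hence the utility loss is exactly $\norm{\mu_i-\mu_i^{\mathsf{DP}}}=\norm{Z_i}$, and the whole theorem reduces to bounding the magnitude of one $d$-dimensional isotropic Gaussian vector — in sharp contrast to Configs.~1--3, where I would have to track how injected noise propagates through whitening, the tensor power iteration, and unwhitening.

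First I would bound $\norm{Z_i}$. Since $Z_i$ is isotropic Gaussian in $\mathbb{R}^d$ with per-coordinate standard deviation $\tilde{\mathsf{LS}}\,\tdp{9}$, we have $\Ebb\norm{Z_i}\le \sqrt{d}\,\tilde{\mathsf{LS}}\,\tdp{9}$, and by Lipschitz concentration of the Euclidean norm of a Gaussian (equivalently a $\chi^2_d$ tail bound) one gets $\norm{Z_i} = O(\sqrt{d}\,\tilde{\mathsf{LS}}\,\tdp{9})$ with high probability. This is the only probabilistic estimate required, and it is routine.

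Next I would substitute the calibrated sensitivity. Theorem~\ref{svaw} gives $\sv = O\!\left(\frac{k^2\sqrt{\sigma_1(\slda)}}{\ds\,N\,\sigma_k(\slda)^{3/2}}\right)$, so plugging $\tilde{\mathsf{LS}}\asymp\sv$ into the previous display yields $\norm{Z_i}=O\!\left(\frac{k^2\sqrt{\sigma_1(\slda)\,d}}{\ds\,N\,\sigma_k(\slda)^{3/2}}\,\tdp{9}\right)$, already the claimed form up to replacing the data-dependent quantities by their privately released surrogates.

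The main subtlety — and essentially the only real obstacle — lies in that last replacement, since Theorem~\ref{svaw} is a \emph{local} sensitivity and calibrating noise to $\sv$ directly is not differentially private. I would invoke Lemma~\ref{lm:local-sensitivity} together with Procedure~\ref{algo:local-sensitivity}: release high-confidence lower bounds $\tilde{\sigma}_k \le \sigma_k(\slda)$ and $\tilde{\ds}\le\ds$ differentially privately (costing the additional $(\epsilon_1,\delta_1)$ and $(\epsilon_1',\delta_1')$ budget that appears in the composite guarantee), and set $\tilde{\mathsf{LS}} = O\!\left(\frac{k^2\sqrt{\sigma_1(\slda)}}{\tilde{\ds}\,N\,\tilde{\sigma}_k^{3/2}}\right)$. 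Because $\tilde{\sigma}_k\le\sigma_k(\slda)$ and $\tilde{\ds}\le\ds$, we have $\tilde{\mathsf{LS}}\ge\sv$, so on the event that both lower bounds are valid — which holds with probability at least $1-\delta_1-\delta_1'$ — the calibrated noise dominates the true sensitivity, the Gaussian-mechanism argument of Lemma~\ref{lm:local-sensitivity} applies, and the bound is correctly expressed in the released $\tilde{\sigma}_k,\tilde{\ds}$, giving exactly $O\!\left(\frac{k^2\sqrt{\sigma_1(\slda)\,d}}{\tilde{\ds}\,N\,\tilde{\sigma}_k^{3/2}}\,\tdp{9}\right)$. I would close by remarking that the factor $\sqrt{d}$ here — as opposed to the $\sqrt{k}$ appearing in Config.~3 — is precisely the price of perturbing in the ambient $d$-dimensional output space rather than in the $k$-dimensional whitened space, which is the qualitative message the theorem is meant to convey.
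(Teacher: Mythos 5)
Your proposal is correct and takes essentially the same approach as the paper: output perturbation $\mu_i^{\mathsf{DP}}=\mu_i+Z$ with Gaussian noise calibrated to the sensitivity $\sv$ of Theorem~\ref{svaw}, a high-probability $O(\sqrt{d}\,\sigma)$ bound on the norm of the Gaussian noise vector, and substitution of the privately released surrogates $\tilde{\sigma}_k,\tilde{\ds}$, which is exactly how the paper obtains $O\bigl(\frac{\sqrt{\sgm{1}d}\,k^2}{\tilde{\ds}\,N\,\tilde{\sigma}_k^{3/2}}\tdp{9}\bigr)$. If anything, you are more explicit than the paper on two points it leaves implicit: you spell out the propose-test-release calibration via Lemma~\ref{lm:local-sensitivity} and Procedure~\ref{algo:local-sensitivity} (the source of the $\epsilon_1+\epsilon_1'$ terms in the composite guarantee), and you correctly treat the noise as $d$-dimensional, whereas the paper's appendix writes $\mathcal{N}(0,\Delta^2_{\epsilon,\delta}I_k)$ --- an apparent typo, since the $\sqrt{d}$ factor in the final bound requires noise in the ambient output space.
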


%

\subsection{Comparison of Configurations} \label{comparison}
The utility loss is non-vacuous if the number of data points is  $\ge \sqrt{d}$, as it depends on $O( \sqrt{d}/N )$. This sub-linear sample complexity dependence on dimension outperforms other methods. 
We present a pairwise comparison between the utilities of
different configurations. 

\begin{remark}\label{rm:conf1_2}
\textbf{Configuration 1 vs. 2:}
The utility loss in config. 1 is high compared to config. 2 as the the singular values of $\slda$ are on the order of $\frac{1}{d}$. 
%
%
Config. 1 has a $\sqrt{d}$ factor higher utility
loss compared to config. 2. 
Therefore, 
config. 2 is preferred over config. 1 in
practice.
\end{remark}

\begin{remark}\label{rm:conf2_3}
\textbf{Configuration 2 vs.  3:}
The utility loss for config. 3 is lower than that of config. 2 by a factor of $k^{0.5}$ in the last term of the utility losses, assuming the same level of differential privacy. 
However, config. 3 has the extra requirement that $\stlda \leq \frac{\ds
  \sigma_k(\wtshort)}{2\sqrt{k}}$.
Therefore the utility of config. 3 outperforms that of config. 2 only if the constraint is met.  
The advantage is enlarged when $\numTopic$ is large. 
\end{remark}

\begin{remark}\label{rm:conf3_4}
\textbf{Configuration 3 vs.  4:}
The first two terms of utility loss in config. 3 are smaller than that in config. 4. 
In the regime of $N>\sqrt{d}$ , config. 3 is preferred as the third term of config. 3 is  smaller than that of config. 4. 
\end{remark}


\section{Experiments}

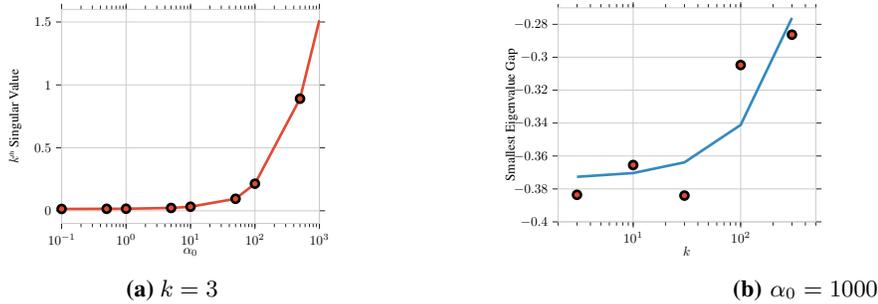
\begin{figure}[!htbp]
\centering
\begin{subfigure}[b]{0.49\linewidth}
\centering
	\psfrag{Alpha}[][][0.6]{$\alpha_0$}
	\psfrag{kth Eigenvalue}[][][0.6]{$k^{\tha}$ Singular Value}
	\psfrag{cp-seq}[][][0.6]{config 1}
	\psfrag{tt-seq}[][][0.6]{config 2}
	\psfrag{tk-seq}[][][0.6]{config 3}
	\psfrag{cp-e2e}[][][0.6]{config 4}
	\psfrag{tk-e2e}[][][0.6]{config 5}
	\psfrag{tt-e2e}[][][0.6]{config 6}
\begin{tikzpicture}[scale=0.5]

\definecolor{color0}{rgb}{0.886274509803922,0.290196078431373,0.2}

\begin{axis}[
axis line style={white!80.0!black},
log basis x={10},
tick align=outside,
tick pos=both,
x grid style={white!80.0!black},
xlabel={$\alpha_0$},
xmajorgrids,
xmin=0.1, xmax=999.999999999999,
xmode=log,
xtick style={color=white!15.0!black},
y grid style={white!80.0!black},
ylabel={$k^\tha$ Singular Value},
ymajorgrids,
ymin=-0.1, ymax=1.6,
ytick style={color=white!15.0!black}
]
\addplot [line width=2pt, color0, mark=*, mark size=3, mark options={solid,draw=black}]
table {%
0.1 0.0145533889337986
0.5 0.0156622436036462
1 0.0162148307340445
5 0.0223537530314083
10 0.032158362743745
50 0.0953294294708755
100 0.215072589496375
500 0.890369815388053
1000 1.51476698410195
};
\end{axis}

\end{tikzpicture}
	\caption{\small $k=3$} 
	\label{fig:singularvalue-alpha}
\end{subfigure}
\hfill
\begin{subfigure}[b]{0.49\linewidth}
	\psfrag{Number of Topics}[][][0.6]{$k$}
	\psfrag{Smallest Eigenvalue Gap}[][][0.6]{Smallest Singular Value Gap}
	\psfrag{cp-seq}[][][0.6]{config 1}
	\psfrag{tt-seq}[][][0.6]{config 2}
	\psfrag{tk-seq}[][][0.6]{config 3}
	\psfrag{cp-e2e}[][][0.6]{config 4}
	\psfrag{tk-e2e}[][][0.6]{config 5}
	\psfrag{tt-e2e}[][][0.6]{config 6}
\begin{tikzpicture}[scale=0.5]

\definecolor{color0}{rgb}{0.886274509803922,0.290196078431373,0.2}
\definecolor{color1}{rgb}{0.203921568627451,0.541176470588235,0.741176470588235}

\begin{axis}[
axis line style={white!80.0!black},
log basis x={10},
tick align=outside,
tick pos=both,
x grid style={white!80.0!black},
xlabel={$k$},
xmajorgrids,
xmin=2, xmax=500,
xmode=log,
xtick style={color=white!15.0!black},
y grid style={white!80.0!black},
ylabel={Smallest Eigenvalue Gap},
ymajorgrids,
ymin=-0.4, ymax=-0.27,
ytick style={color=white!15.0!black}
]
\addplot [line width=2pt, color0, mark=*, mark size=3, mark options={solid,draw=black}, only marks]
table {%
3 -0.383571196488132
10 -0.365480573990455
30 -0.38403853134846
100 -0.304713612985179
300 -0.286291961628103
};
\addplot [line width=2pt, color1]
table {%
3 -0.372635751237949
10 -0.370361031242047
30 -0.363861831253757
100 -0.34111463129474
300 -0.276122631411835
};
\end{axis}

\end{tikzpicture}
	\caption{\small \!  $\alpha_0=1000$}
	\label{fig:gap-k}
\end{subfigure}
\caption{Visualization of \textbf{(a)}  the $k^{\tha}$ singular values of $\slda$  and \textbf{(b)} the smallest singular value gap of $\wtshort$ using $100k$ documents.  }
\label{fig:singular-values}
\end{figure}

The main focus of the paper is on providing the first differentially private topic model with well understood and theoretically guaranteed utility. 
We simulate documents from an LDA model parameterized by varying choice of $\alpha$ and $\mu$ which are randomly sampled to ensure that a bursty use of a single word under certain topic is possible in our experiment.
Therefore our setting covers a wide range of hyper-parameters and captures some common irregularities in distributional properties. 
Our synthetic setting allows for direct calculation of error on parameter recovery, which is not feasible in real data. 
We compare the empirical loss of each configuration in different settings. 
In addition, we compare all configurations of our spectral algorithm against differentially private variational inference~\cite{park2016private} under the same settings.
Our algorithm universally outperforms the state-of-the-art VI quantitatively. 

\begin{figure*}[!htbp]
\begin{subfigure}[b]{0.32\linewidth}
\begin{center}
\begin{tikzpicture}[scale=0.65]

\definecolor{color0}{rgb}{0.192156862745098,0.509803921568627,0.741176470588235}
\definecolor{color1}{rgb}{0.419607843137255,0.682352941176471,0.83921568627451}
\definecolor{color2}{rgb}{0.619607843137255,0.792156862745098,0.882352941176471}
\definecolor{color3}{rgb}{0.776470588235294,0.858823529411765,0.937254901960784}
\definecolor{color4}{rgb}{0.901960784313726,0.333333333333333,0.0509803921568627}

\begin{axis}[
axis line style={white!80.0!black},
height=8cm,
legend cell align={left},
legend style={fill=none, at={(0.5,0.75)}, draw=none},
log basis x={10},
tick align=outside,
tick pos=left,
width=7cm,
x grid style={white!80.0!black},
xlabel={ \(\displaystyle \epsilon\)},
xmajorgrids,
xmin=0.000707945784384139, xmax=1.41253754462275,
xmode=log,
xtick style={color=white!15.0!black},
y grid style={white!80.0!black},
ylabel={Perplexity},
ymajorgrids,
ymin=1864.872, ymax=8549.748,
ytick style={color=white!15.0!black}
]
\addplot [very thick, color0, mark=*, mark size=4, mark options={solid}]
table {%
1 4055.62
0.1 7556.68
0.001 7588.9
};
\addlegendentry{config 1}
\addplot [very thick, color1, mark=triangle*, mark size=4, mark options={solid,rotate=180}]
table {%
1 2168.73
0.1 4153.23
0.001 4387.79
};
\addlegendentry{config 2}
\addplot [very thick, color2, mark=diamond*, mark size=4, mark options={solid}]
table {%
1 4354.71
0.1 7851.27
0.001 7896.12
};
\addlegendentry{config 3}
\addplot [very thick, color3, mark=square*, mark size=4, mark options={solid}]
table {%
1 4053.18
0.1 7646.11
0.001 7874.07
};
\addlegendentry{config 4}
\addplot [very thick, color4, mark=triangle*, mark size=4, mark options={solid,rotate=270}]
table {%
1 3755.42
0.1 8245.89
0.001 8203.59
};
\addlegendentry{vi}
\end{axis}

\end{tikzpicture}
\caption{\scriptsize{$k=25$}} 
\end{center}
\end{subfigure}
\begin{subfigure}[b]{0.32\linewidth}
\begin{center}
\begin{tikzpicture}[scale=0.65]

\definecolor{color0}{rgb}{0.192156862745098,0.509803921568627,0.741176470588235}
\definecolor{color1}{rgb}{0.419607843137255,0.682352941176471,0.83921568627451}
\definecolor{color2}{rgb}{0.619607843137255,0.792156862745098,0.882352941176471}
\definecolor{color3}{rgb}{0.776470588235294,0.858823529411765,0.937254901960784}
\definecolor{color4}{rgb}{0.901960784313726,0.333333333333333,0.0509803921568627}

\begin{axis}[
axis line style={white!80.0!black},
height=8cm,
legend cell align={left},
legend style={fill=none, at={(0.5,0.75)}, draw=none},
log basis x={10},
tick align=outside,
tick pos=left,
width=7cm,
x grid style={white!80.0!black},
xlabel={\(\displaystyle \epsilon\)},
xmajorgrids,
xmin=0.000707945784384139, xmax=1.41253754462275,
xmode=log,
xtick style={color=white!15.0!black},
y grid style={white!80.0!black},
ylabel={Perplexity},
ymajorgrids,
ymin=1033.638, ymax=4960.242,
ytick style={color=white!15.0!black}
]
\addplot [very thick, color0, mark=*, mark size=4, mark options={solid}]
table {%
1 2381.24
0.1 4367.87
0.001 4438.92
};
\addlegendentry{config 1}
\addplot [very thick, color1, mark=triangle*, mark size=4, mark options={solid,rotate=180}]
table {%
1 1212.12
0.1 2458.43
0.001 2616.71
};
\addlegendentry{config 2}
\addplot [very thick, color2, mark=diamond*, mark size=4, mark options={solid}]
table {%
1 2530.77
0.1 4571.86
0.001 4590.93
};
\addlegendentry{config 3}
\addplot [very thick, color3, mark=square*, mark size=4, mark options={solid}]
table {%
1 2271.44
0.1 4468.8
0.001 4650.89
};
\addlegendentry{config 4}
\addplot [very thick, color4, mark=triangle*, mark size=4, mark options={solid,rotate=270}]
table {%
1 2276.69
0.1 4781.76
0.001 4740.83
};
\addlegendentry{vi}
\end{axis}

\end{tikzpicture}
\caption{\scriptsize{$k=100$}}
\end{center}
\end{subfigure}
\begin{subfigure}[b]{0.32\linewidth}
\begin{center}
\begin{tikzpicture}[scale=0.65]

\definecolor{color0}{rgb}{0.192156862745098,0.509803921568627,0.741176470588235}
\definecolor{color1}{rgb}{0.419607843137255,0.682352941176471,0.83921568627451}
\definecolor{color2}{rgb}{0.619607843137255,0.792156862745098,0.882352941176471}
\definecolor{color3}{rgb}{0.776470588235294,0.858823529411765,0.937254901960784}
\definecolor{color4}{rgb}{0.901960784313726,0.333333333333333,0.0509803921568627}

\begin{axis}[
axis line style={white!80.0!black},
height=8cm,
legend cell align={left},
legend style={fill=none, at={(0.5,0.75)}, draw=none},
log basis x={10},
tick align=outside,
tick pos=left,
width=7cm,
x grid style={white!80.0!black},
xlabel={\(\displaystyle \epsilon\)},
xmajorgrids,
xmin=0.000707945784384139, xmax=1.41253754462275,
xmode=log,
xtick style={color=white!15.0!black},
y grid style={white!80.0!black},
ylabel={Perplexity},
ymajorgrids,
ymin=1078.3635, ymax=4675.2865,
ytick style={color=white!15.0!black}
]
\addplot [very thick, color0, mark=*, mark size=4, mark options={solid}]
table {%
1 2332.69
0.1 4217.5
0.001 4192.52
};
\addlegendentry{config 1}
\addplot [very thick, color1, mark=triangle*, mark size=4, mark options={solid,rotate=180}]
table {%
1 1241.86
0.1 2230.05
0.001 2492.8
};
\addlegendentry{config 2}
\addplot [very thick, color2, mark=diamond*, mark size=4, mark options={solid}]
table {%
1 1856.4
0.1 2876.61
0.001 3306.08
};
\addlegendentry{config 3}
\addplot [very thick, color3, mark=square*, mark size=4, mark options={solid}]
table {%
1 2213.43
0.1 4301.96
0.001 4320.48
};
\addlegendentry{config 4}
\addplot [very thick, color4, mark=triangle*, mark size=4, mark options={solid,rotate=270}]
table {%
1 2125.06
0.1 4466.04
0.001 4511.79
};
\addlegendentry{vi}
\end{axis}

\end{tikzpicture}
\caption{\scriptsize{$k=500$}}
\end{center}
\end{subfigure}
\caption{Perplexity scores of \textbf{our method under all configurations} vs \textbf{the differentially private VI} on Wikipedia data.}\label{fig:wiki_perp}
\end{figure*}
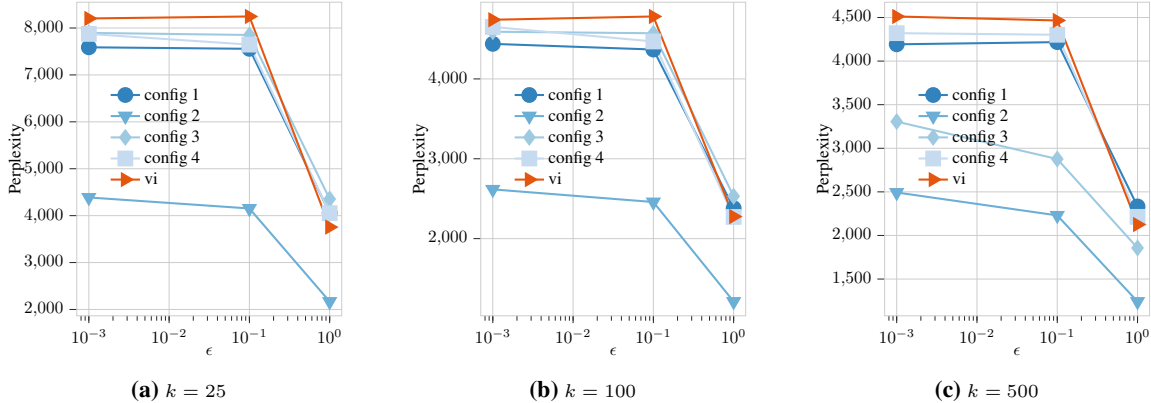

\paragraph{Evaluation Metric:} Our experiments evaluate the loss between the ground-truth $\mu$ and the estimated $\widehat{\mu}$ via a $(\epsilon,\delta)$ differentially private algorithm across varying privacy parameters (composite $\epsilon$). For each edge set and a given composite $\epsilon$, we perform a grid search over the privacy parameters for each edge in the set to select the optimal combination. When working with real data (e.g., Wikipedia), a maximum likelihood criterion could be used to select this optimal configuration. We release only differentially private likelihoods by perturbing sufficient statistics, as described in ~\cite{park2016private}. See Appendix~\ref{sec:eval_app} for a more detailed explanation of this process.

\paragraph{VI vs Spectral:} Figure~\ref{fig:util-epsilon} exhibits error for varying composite $\epsilon$ on different datasets. Under every configuration, our differentially private spectral algorithm universally outperforms differentially private variational inference, and has higher utility under the same level of privacy.

\paragraph{Small $\epsilon$ vs Large $\epsilon$:}
From figure~\ref{fig:util-epsilon-small-alpha}, config 2 performs better for low $\epsilon$, while Config. 1 performs better for larger $\epsilon$. Config. 3 and 4 perform universally worse than the other Configs as the number of topics is small. 

\paragraph{Small $\alpha_0$ vs Large $\alpha_0$:} 
In Figure~\ref{fig:util-epsilon-big-alpha}, config 1 performs the best. 
This is consistent with theoretical findings, as $\alpha_0$ is large and $N$ exceeds $d^2$. Config. 2 performs on par or worse than config. 1; this is partially due to $\slda$ having small singular values for this dataset, driving up the amount of noise added. 

\paragraph{Small Corpus vs Large Corpus:} Figure~\ref{fig:util-few-doc}, considers the limited data setting, $N=1000$. 
Config 2 emerges as better when $\alpha_0$ is large.

\paragraph{$\slda$'s singular values vs $\alpha_0$:}
The singular values of $\slda$ are positively correlated with the $\alpha_0$ parameter. Figure~\ref{fig:singularvalue-alpha} shows this correlation for large $N$. Consequently, we observe in figure~\ref{fig:util-epsilon-big-alpha}, for $N=100k$ and $\alpha_0=1000$, config. 2 has lower error.
Low $\alpha_0$ yields documents polarized to single topics, while as $\alpha_0\rightarrow \infty$, the topics mix. This yields increased variation among the singular values of $\slda$, which drives down config 2's sensitivity.

\paragraph{Singular value gap vs  number of topics:} The theoretical results posit that config 3 would likely be the best performing, but this isn't the case. This is likely due to the small singular value gap of $\gamma_s$. However, as we increase $k$ shown in Figure~\ref{fig:gap-k}, the gap increases (Figure~\ref{fig:gap-k}), which lowers the sensitivity along config. 3, leading to lower noise addition. 

\subsection{Wikipedia Dataset} 
We verified good performance of our method on the wikipedia dataset. 
Preprocessing was minimal - removing all non-alphanumeric characters and lower-casing. 
We experienced results of differing quality as we changed $\epsilon$.

As shown in Figure~\ref{fig:wiki_perp} where the quantitative results (perplexity scores) on Wikipedia are compared with variational inference, our method suffers from less utility loss under the same privacy levels. 
As we observe in the Wiki results in Figure~\ref{fig:wiki_perp}, performance of config.3 is improved under larger number of topics $k$, confirming our theory.



\section{Conclusion}

We have provided an end-to-end analysis of differentially private LDA
model using a spectral algorithm.  The algorithm involves a dataflow
that permits different locations for injecting noise.  We present a
detailed sensitivity and utility analysis for different differentially
private configurations.
%
%

We show that no configuration dominates and recommend configurations for different scenarios. Config 1 is preferable when $N\gg k$, or when the topics are highly polarized in documents (different documents do not have many topics in common). Config 2 is preferable when $N$ is small, and the topics are mixed in the dataset. Config 3 is preferable when $k$ is large. Additionally, we identified an interesting correlation between the singular values of M2 and $\alpha_0$, specifically that increasing the mixture of the topics in the documents leads to higher singular values.


 The analysis that was used can be extended to other latent
 variable models where the parameters are estimated using similar
 spectral methods such as Gaussian mixtures and Hidden Markov Models
 \cite{anandkumar2014tensor}.

\bibliographystyle{plain}\bibliography{\bibhome/supp_bib}
\newpage
\appendix
\begin{center}{\Large \textbf{Appendix: \mytitle}}\end{center}
\section{B. Differential Privacy Review}\label{sec:dpreview}

\begin{definition}[\textbf{Utility Loss \& Error}]

Let $f:D\rightarrow Y$ be a random algorithm and $f^{\mathsf{DP}}(X)$ be the differentially private version of f. For some value $x \in D$, let $y \in Y$ be the ground truth value. Then define $\norm{f(x)- f^{\mathsf{DP}}(X)}_F$ as the \textbf{utility loss} for this input. Additionally, define $\norm{y - f^{\mathsf{DP}}(X)}_F$  as the \textbf{error} for this input.
\end{definition}
The gaussian mechanism proposed in~\cite{dwork2006our} makes a random algorithm differentially private by adding specifically designed Gaussian noise to the output. 
\begin{proposition} \label{dp_prop}
[\textbf{Gaussian mechanism}]  Let $f:D \rightarrow Y$ ($Y \subset \Rbb^{k}$) be a random algorithm with $\ell_2$ sensitivity $\Delta_f$. 
Let $g \in \Rbb^{k}$ and each coordinate $g_i$ be sampled i.i.d. from $\mathcal{N}(0,\Delta^{2}_{f,\epsilon,\delta})$, where $\Delta_{f,\epsilon,\delta} =  \Delta_f  \ \tau_{\epsilon,\delta}=\frac{\Delta_f  \sqrt{2 \ln(1.25 /\delta)}}{\epsilon}$. 
Then the output $f_{\mathsf{DP}} = f + g$ is $(\epsilon,\delta)$ differentially private if $0 < \epsilon \le1$. \end{proposition}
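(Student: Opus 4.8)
The plan is to follow the classical privacy-loss argument of Dwork and Roth, specializing to the constant $1.25$ that appears in $\tau_{\epsilon,\delta}$. First I would exploit the spherical symmetry of the noise to reduce the $k$-dimensional problem to a one-dimensional one. For adjacent datasets $D \sim D'$, the two output densities are isotropic Gaussians $p_0 = \mathcal{N}(f(D),\sigma^2 I)$ and $p_1 = \mathcal{N}(f(D'),\sigma^2 I)$ with common covariance $\sigma^2 I$, where $\sigma = \Delta_{f,\epsilon,\delta}$. Rotating coordinates so that the shift $v := f(D)-f(D')$ lies along the first axis, the likelihood ratio $p_0/p_1$ depends only on that coordinate, so the analysis of the privacy loss $L(x) := \ln\!\big(p_0(x)/p_1(x)\big)$ collapses to a one-dimensional computation in which the effective shift is $\|v\|_2 \le \Delta_f$.

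Next I would record the standard sufficient condition: if $\Pbb_{x\sim p_0}[L(x) > \epsilon] \le \delta$ for every adjacent pair, then $f_{\mathsf{DP}}$ is $(\epsilon,\delta)$-\textsf{DP}. This follows by splitting any event $S \subseteq Y$ along $B := \{x : L(x) > \epsilon\}$: on $B^c$ we have $p_0(x) \le e^{\epsilon} p_1(x)$ pointwise, giving $\Pbb_{p_0}[S\cap B^c] \le e^{\epsilon}\,\Pbb_{p_1}[S]$, while $\Pbb_{p_0}[S\cap B] \le \Pbb_{p_0}[B] \le \delta$; adding these recovers the definition of $(\epsilon,\delta)$-differential privacy.

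Then I would compute $L$ explicitly. Writing $x = f(D) + z$ with $z \sim \mathcal{N}(0,\sigma^2 I)$, a direct expansion of $\|x-\mu_1\|^2 - \|x-\mu_0\|^2$ gives $L = \frac{\|v\|_2^2}{2\sigma^2} + \frac{\langle z, v\rangle}{\sigma^2}$, so $L$ is Gaussian with mean $\|v\|_2^2/(2\sigma^2)$ and variance $\|v\|_2^2/\sigma^2$. Standardizing yields $\Pbb[L > \epsilon] = \bar\Phi\!\left(\frac{\sigma\epsilon}{\|v\|_2} - \frac{\|v\|_2}{2\sigma}\right)$, where $\bar\Phi$ is the standard normal upper tail. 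Since this argument is decreasing in $\|v\|_2$ and $\bar\Phi$ is decreasing, the tail probability is monotone increasing in $\|v\|_2$ and is therefore maximized at the worst case $\|v\|_2 = \Delta_f$, which lets me substitute the sensitivity bound. Plugging in $\sigma = \Delta_f\sqrt{2\ln(1.25/\delta)}/\epsilon$ turns the threshold into $t := \sqrt{2\ln(1.25/\delta)} - \epsilon\big/\big(2\sqrt{2\ln(1.25/\delta)}\big)$.

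It then remains to verify $\bar\Phi(t) \le \delta$. Here I would invoke the Gaussian tail bound $\bar\Phi(s) \le \frac{1}{s\sqrt{2\pi}}e^{-s^2/2}$ and use the hypothesis $0 < \epsilon \le 1$ to control the lower-order correction in $t^2$, showing that the $e^{-t^2/2}$ factor supplies $\delta/1.25$ while the polynomial prefactor stays below $1.25$. This last tail estimate --- matching the exact constant $1.25$ and pinning down precisely where $\epsilon \le 1$ is used --- is the main obstacle; everything preceding it is bookkeeping. Because the bound holds for every adjacent pair, the sufficient condition is satisfied and $f_{\mathsf{DP}} = f + g$ is $(\epsilon,\delta)$-differentially private.
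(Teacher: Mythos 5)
The paper never proves Proposition~\ref{dp_prop}: it is imported by citation from the differential privacy literature (with the explicit remark that a tighter calibration not requiring $\epsilon\le 1$ appears in \cite{balle2018improving}), so there is no internal proof to compare against. Your argument is the canonical textbook one, and its skeleton is sound: the rotation to one dimension, the sufficient condition that $\Pr_{x\sim p_0}[L(x)>\epsilon]\le\delta$ for all adjacent pairs implies $(\epsilon,\delta)$-DP, the computation that $L$ is Gaussian with mean $\|v\|_2^2/(2\sigma^2)$ and variance $\|v\|_2^2/\sigma^2$, and the monotonicity argument that lets you pass to the worst case $\|v\|_2=\Delta_f$ are all correct.

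The gap is exactly in the step you defer, and your sketch of how the constants fall out is not right. Write $c=\sqrt{2\ln(1.25/\delta)}$, so that $t=c-\epsilon/(2c)$ and
\begin{equation*}
e^{-t^2/2}\;=\;e^{-c^2/2}\,e^{\epsilon/2}\,e^{-\epsilon^2/(8c^2)}\;=\;\frac{\delta}{1.25}\,e^{\epsilon/2-\epsilon^2/(8c^2)}.
\end{equation*}
Thus the exponential factor does \emph{not} ``supply $\delta/1.25$''; for $\epsilon$ near $1$ it overshoots by up to $e^{1/2}\approx 1.65$, and what the tail bound $\bar\Phi(t)\le\frac{1}{t\sqrt{2\pi}}e^{-t^2/2}$ actually requires is $1.25\sqrt{2\pi}\,t\ge e^{\epsilon/2-\epsilon^2/(8c^2)}$, i.e.\ $t\ge e^{1/2}/(1.25\sqrt{2\pi})\approx 0.53$ when $\epsilon\le1$. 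This is a genuine extra condition on $\delta$, not bookkeeping: for $\delta=0.9$, $\epsilon=1$ one gets $t\approx 0.19$ and your bound evaluates to about $2.0$, which certifies nothing, and for $\delta\gtrsim 0.97$ one even has $t<0$, where the tail inequality is inapplicable. To close the argument you must either restrict $\delta$ so that $c$ is bounded below (this is precisely the implicit assumption, $c\ge 3/2$, i.e.\ roughly $\delta\lesssim 0.4$, hidden in the classical Dwork--Roth proof that you are following), or handle large $\delta$ by a separate trivial argument (e.g.\ whenever $t\ge 0$, $\bar\Phi(t)\le 1/2\le\delta$ for all $\delta\ge 1/2$). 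With such a case split, or with a stated smallness assumption on $\delta$ (which is the only regime in which this paper ever invokes the proposition), your proof is complete; as written, the final verification does not go through.
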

The above bound is used for theoretical purposes only, a tighter and more general calibration of the Gaussian mechanism that does not require $\epsilon\leq 1$ was proposed in \cite{balle2018improving}.

Composition theorem~\cite{dwork2014algorithmic} provides insights on how the differential privacy is preserved under algorithm composition. 
\begin{proposition} \label{composition}
[\textbf{Composition theorem}]  Let $f^{\mathsf{DP}}_1(X), \ldots, f^{\mathsf{DP}}_n(X)$ be $n$ differentially private algorithms with privacy parameters $(\epsilon_1,\delta_1),  \ldots, (\epsilon_n,\delta_n)$. Then $g^{\mathsf{DP}}(X)=f(f^{\mathsf{DP}}_1(X), \ldots, f^{\mathsf{DP}}_n(X))$ is $(\epsilon_1+ \ldots+
\epsilon_n,\delta_1+ \ldots+\delta_n)$ differentially private.
\end{proposition}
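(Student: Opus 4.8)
The plan is a two-stage reduction followed by an induction. First, note that $g^{\mathsf{DP}}(X) = f(f^{\mathsf{DP}}_1(X),\ldots,f^{\mathsf{DP}}_n(X))$ reads the data $X$ only through the joint release $M(X)\defeq (f^{\mathsf{DP}}_1(X),\ldots,f^{\mathsf{DP}}_n(X))$. Because $(\epsilon,\delta)$-DP is closed under (randomized) post-processing --- the property repeatedly invoked in Section~\ref{sec:utility-section} --- it suffices to show that $M(X)$ is $(\sum_i\epsilon_i,\sum_i\delta_i)$-DP, and the bound for $g^{\mathsf{DP}}$ follows at once. Second, I would induct on $n$: grouping $(f^{\mathsf{DP}}_1,\ldots,f^{\mathsf{DP}}_{n-1})$ into one mechanism that is $(\sum_{i<n}\epsilon_i,\sum_{i<n}\delta_i)$-DP by the inductive hypothesis collapses the problem to the base case $n=2$, i.e.\ two mechanisms $M_1,M_2$ with independent internal randomness that are $(\epsilon_1,\delta_1)$- and $(\epsilon_2,\delta_2)$-DP, whose joint output must be shown $(\epsilon_1+\epsilon_2,\delta_1+\delta_2)$-DP.

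For the base case I would fix neighbors $X\sim X'$ and a measurable set $S$ of output pairs and peel off one coordinate at a time. Writing $P_1,Q_1$ for the laws of $f^{\mathsf{DP}}_1$ on $X,X'$ and slicing $S$ along its first coordinate via $S_{y_1}\defeq\{y_2:(y_1,y_2)\in S\}$, the quantity $\Pbb[M(X)\in S]$ becomes the integral over $y_1\sim P_1$ of $g(y_1)\defeq\Pbb[M_2(X)\in S_{y_1}]$. The natural temptation is to substitute the $M_2$-guarantee $g(y_1)\le e^{\epsilon_2}\Pbb[M_2(X')\in S_{y_1}]+\delta_2$ and then apply the $M_1$-guarantee, but this sends the $\delta_2$ residual through an $e^{\epsilon_1}$ factor and destroys the clean linear bound. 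The fix is to truncate $g$ at $e^{\epsilon_2}\Pbb[M_2(X')\in S_{y_1}]$: the discarded excess is at most $\delta_2$ pointwise and hence integrates against the probability measure $P_1$ to at most $\delta_2$ with no extra factor, while the truncated part is $[0,1]$-valued, so the bounded-function form of $(\epsilon_1,\delta_1)$-DP --- derived from the event-level definition through the layer-cake identity --- contributes the factor $e^{\epsilon_1}$ on the main term and a clean additive $\delta_1$. Reassembling the two pieces and recognizing the resulting main term as $e^{\epsilon_1+\epsilon_2}\Pbb[M(X')\in S]$ gives the base case.

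The main obstacle is exactly this $\delta$ bookkeeping. The pure-DP case ($\delta_i=0$) is trivial, since under independence the densities factorize and the likelihood ratios multiply to $e^{\epsilon_1+\epsilon_2}$; the entire difficulty lies in preventing the approximate-DP residuals from being amplified by an $e^{\epsilon}$ factor, which the truncation step accomplishes by keeping each residual integrated against a probability measure. I note that the same argument goes through if $M_2$ is selected adaptively from the output of $M_1$, by reading the second-coordinate laws as conditional distributions, so basic composition also holds in the adaptive setting.
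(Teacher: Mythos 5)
Your proof is correct. Note, however, that the paper never proves this proposition: it is stated as a quoted, known result with a citation to Dwork and Roth \cite{dwork2014algorithmic}, followed only by a remark distinguishing simple from advanced composition, so there is no in-paper argument to compare against. Your argument is a valid, self-contained proof of basic composition: reduce to the joint release via closure under post-processing, induct down to two mechanisms, and in the base case slice the output set along the first coordinate, truncate the conditional probability $g(y_1)$ at $e^{\epsilon_2}\Pr[M_2(X')\in S_{y_1}]$ so that the $\delta_2$ excess is integrated against a probability measure rather than inflated by an $e^{\epsilon_1}$ factor, and then apply the bounded-function (layer-cake) form of $(\epsilon_1,\delta_1)$-DP to the truncated, $[0,1]$-valued integrand. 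That truncation is precisely the step where naive attempts produce the wrong bound $e^{\epsilon_1}\delta_2+\delta_1$, and you handle it correctly. For what it is worth, the cited reference proves the result by a different route, characterizing $(\epsilon,\delta)$-indistinguishability via approximate max-divergence and a coupling lemma that replaces each mechanism by a nearby pure-DP one; your argument is more elementary and, as you note, extends verbatim to adaptive composition by reading the second-coordinate laws as conditional distributions. Two small points you should make explicit if this were written out in full: measurability of the slice map $y_1 \mapsto \Pr[M_2(X)\in S_{y_1}]$ (Fubini for the product $\sigma$-algebra), and the assumption that the $n$ mechanisms use independent internal randomness, which the paper's statement leaves tacit but which your factorization $\Pr[M(X')\in S]=\int \Pr[M_2(X')\in S_{y_1}]\,dQ_1(y_1)$ relies on.
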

 {This is what we called a simple composition where epsilon increases linearly.  There is an advanced composition where privacy loss for accessing for $k$ times obey that $\sqrt{k}$.}


\section{C. Latent Dirichlet Allocation}\label{sec:prelim_lda}
LDA,
despite being a bag of words model, allows modeling of the
{mixed} topics in a document to account for the more general case
in which a document belongs to several different latent classes (topics)
simultaneously. 
Latent Dirichlet Allocations has two major model parameters: topic prior $\bm{\alpha}$ and topic-word matrix $\bm{\mu}$.  Topic prior $\bm{\alpha}$ determines the topic proportions and the topic word matrix controls the word distribution per topic. 
\paragraph{Topic Proportions} 
The proportion of words in topics,  known
as \emph{topic proportion} (denoted as
$\theta_\docInd$ 
for document $n$), is drawn from a Dirichlet distribution  (topic prior) parameterized by 
$\alpha=(\alpha_1,\ldots,
\alpha_\numTopic)$, 
with density $
\small
P_{\alpha}(\theta = \theta_\docInd) =
  \frac{\Gamma(\alpha_0)}{\prod\limits_{\topicInd=1}^{\numTopic}\Gamma(\alpha_\topicInd)}
  \prod\limits_{\topicInd=1}^\numTopic
  \theta_{\docInd,\topicInd}^{\alpha_k-1}
$,  where
 $\alpha_0
  =\sum\limits_{\topicInd=1}^\numTopic \alpha_\topicInd$. 
  \paragraph{Topic-Word Matrix}
  Under a topic $i$, tokens in the documents are assumed to be generated in a conditionally independent manner through $\mu_{i}$, i.e., token $\wordOne 
 \sim \text{Cat}(\vocabsize,\mu_{i})$ where
Cat$(\vocabsize,\mu_{i})$ denotes the categorical distribution. Under different topics,
 these conditional distributions $\mu_\topicInd$ are linearly independent, $\forall
 \topicInd\in[\numTopic]$.

 With the definition of the two major parameters, we now describe the generative model of LDA topic model. The process involves generating topics first, followed by tokens.
\paragraph{Topic Generation}LDA remains simple {as each token in the corpus belongs to one of the $\numTopic$
topics only}, although tokens in the same document could belong to different topics. We denote the topic of token $j$ in document $\docInd$ as
$z_{\docInd,\wordInd}$. Therefore, topics generated are categorical $z_{\docInd, \wordInd}\in
[\numTopic]$ and distributed according to $\theta_\docInd$, i.e., $z_{\docInd, \wordInd} \sim
\text{Cat}(\numTopic,\theta_\docInd)$ where
Cat$(\numTopic,\theta_\docInd)$ denotes the categorical distribution.

\paragraph{Word Generation} Let $x$ denote the tokens. After determining the topic of the token $j$, $z_{\docInd, \wordInd}$,  token $j$ is generated conditionally independently through $\mu_{z_{\docInd, \wordInd}}$, i.e., token $
 \sim \text{Cat}(\vocabsize,\mu_{z_{\docInd, \wordInd}})$.  In a document $\docInd$, if the ${j^\prime}^{th}$  token $x_{n,j^\prime}$ is the $v$-th word in the dictionary, then  $x_{n,j^\prime} =e_v$ where $e_v$ is a one-hot encoding, i.e.,  $x_{n,j^\prime} (j) = 0$ $\forall j \neq v$ and $x_{n,j^\prime}(j) = 1$ if $j=v$. Let $l_n$ be the length of document $n$, random realizations of token $x$, i.e., $\{x_{n,j^\prime}\}_{j^\prime=1}^{l_n}$,  are i.i.d.

\paragraph{Term-Document Matrix}The term-document matrix $D \in \mathbb{N}_{0}^{d \times N}$. The $n^{th}$ column in $D$ is denoted by $c_n$, where its $j^{th}$ component 
$c_n(j)=$ number of times word $j$ in the vocabulary appeared in document $n$.
This means that $c_n = \sum_{j^\prime=1}^{l_n} x_{n,j^\prime}$ where $l_n$ is the number of words in document $n$. Clearly, $l_n = \sum_{j}^{d} c_n(j) = \norm{c_n}_1$. 
\section{D. Method of Moments for Latent Dirichlet Allocation}\label{app:lda}

\paragraph{Empirical Moment Estimators}
The moments that we obtain are not the population moments but rather empirically estimated moments from the given data set. We list the forms of first, second, and third order empirical moment estimators for the single topic case as shown in \cite{zou2013contrastive}. Given a document $n$, the following quantities are calculated. 
\begin{align}
\mpf{n} & = \pv \\
\mps{n} &= \pmat \\
\mpt{n} &=  \frac{1}{6\binom{l_n}{3}}\Big( c_{n}\otimes c_{n}\otimes c_{n}+ 2 \sum\limits_{i=1}^{d} c_{n}(i) (e_i\otimes e_i\otimes e_i) \nonumber \\
&- \sum\limits_{i=1}^{d}\sum\limits_{j=1}^{d} c_{n}(i) c_{n}(j) (e_i\otimes e_i\otimes e_j+e_i\otimes e_j\otimes e_j+e_j\otimes e_i\otimes e_j)\Big)
\end{align}
The empirically estimated moments are the averages of these quantities over the entire data set. Specifically, 
\begin{lemma} {Single Topic Empirical Moment Estimators(Propositions 3 and 4 in \cite{zou2013contrastive})} \label{single_topic_est}
\begin{align*}
& \Est[x_1] = \frac{1}{N} \sum_{n=1}^{N} \mpf{n} \\
& \Est[x_1 \otimes x_2] = \frac{1}{N} \sum_{n=1}^{N} \mps{n} \\
& \Est[x_1 \otimes x_2 \otimes x_3] = \frac{1}{N} \sum_{n=1}^{N} \mpt{n} \\
\end{align*}
Further these moments are unbiased, i.e.:
\begin{align*}
& \Ebb[\Est[x_1]] = \Ebb[\frac{1}{N} \sum_{n=1}^{N} \mpf{n}] = \Ebb[x_1] \\
& \Ebb[\Est[x_1 \otimes x_2]] = \Ebb[\frac{1}{N} \sum_{n=1}^{N} \mps{n}] = \Ebb[x_1 \otimes x_2] \\
& \Ebb[\Est[x_1 \otimes x_2 \otimes x_3]] = \Ebb[\frac{1}{N} \sum_{n=1}^{N} \mpt{n}] = \Ebb[x_1 \otimes x_2 \otimes x_3] \\
\end{align*}
\end{lemma}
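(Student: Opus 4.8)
The plan is to read the first block of displayed identities as the \emph{definitions} of the empirical estimators $\Est[\cdot]$ (the estimator is declared to be the per-document average), so that nothing needs proving there, and to concentrate the entire argument on the three unbiasedness claims. By linearity of expectation, $\Ebb[\frac{1}{N}\sum_{n=1}^{N}(\cdot)_n] = \frac{1}{N}\sum_{n=1}^{N}\Ebb[(\cdot)_n]$, and since all $N$ documents are generated i.i.d.\ from the same LDA model, every summand shares a common expectation. It therefore suffices to prove, for a single generic document $n$, the three per-document identities $\Ebb[\mpf{n}]=\Ebb[x_1]$, $\Ebb[\mps{n}]=\Ebb[x_1\otimes x_2]$, and $\Ebb[\mpt{n}]=\Ebb[x_1\otimes x_2\otimes x_3]$.

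First I would condition on the document length $l_n$ (taking $l_n\ge 3$ so the normalizers $\binom{l_n}{2}$, $\binom{l_n}{3}$ are well defined), because those normalizers are functions of $l_n$. Two structural facts drive everything. First, within a document the tokens $x_{n,1},\dots,x_{n,l_n}$ are exchangeable (given $\theta_n$ they are i.i.d.\ $\mathrm{Cat}(d,\cdot)$, and their joint law does not depend on $l_n$), so any ordered distinct pair has the same joint distribution as $(x_1,x_2)$ and any ordered distinct triple the same distribution as $(x_1,x_2,x_3)$. Second, each token is a one-hot vector, so $x_{n,j'}\otimes x_{n,j'}=e_v\otimes e_v$ when $x_{n,j'}=e_v$, whence $\sum_{j'}x_{n,j'}\otimes x_{n,j'}=\sum_i c_n(i)\,e_i\otimes e_i$, with the analogous identity for the third-order self-terms.

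The first moment is immediate: $\mpf{n}=c_n/l_n=\frac{1}{l_n}\sum_{j'}x_{n,j'}$, and each token shares the marginal of $x_1$, so the conditional expectation is $\Ebb[x_1]$. For the second moment I would expand $c_n\otimes c_n=\sum_{j',j''}x_{n,j'}\otimes x_{n,j''}$ and split it into the $l_n$ diagonal terms ($j'=j''$) and the $l_n(l_n-1)$ off-diagonal terms ($j'\ne j''$); subtracting $\sum_i c_n(i)\,e_i\otimes e_i$ removes exactly the self-terms, and by exchangeability each of the $l_n(l_n-1)=2\binom{l_n}{2}$ remaining terms contributes $\Ebb[x_1\otimes x_2]$, so dividing by the normalizer recovers it. The third-order identity is the same idea carried through inclusion--exclusion on $c_n\otimes c_n\otimes c_n=\sum_{j',j'',j'''}x_{n,j'}\otimes x_{n,j''}\otimes x_{n,j'''}$: I would isolate the all-distinct triples by subtracting the terms where exactly two indices coincide and adding back the terms where all three coincide, which is precisely what the correction terms $2\sum_i c_n(i)(e_i\otimes e_i\otimes e_i)$ and $\sum_{i,j}c_n(i)c_n(j)(\cdots)$ encode.

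The main obstacle is the combinatorial bookkeeping in this third-order step: one must check that the coefficients appearing in the definition of $\mpt{n}$ --- in particular the factor $2$ on the triple-coincidence term and the three-way symmetric pairwise-coincidence term --- correctly implement the inclusion--exclusion that leaves exactly the $l_n(l_n-1)(l_n-2)=6\binom{l_n}{3}$ ordered all-distinct triples, each of which contributes $\Ebb[x_1\otimes x_2\otimes x_3]$ by exchangeability. Once each conditional expectation is shown to equal the corresponding (length-independent) target, a final application of the tower property over the distribution of $l_n$ removes the conditioning and yields the stated unbiasedness.
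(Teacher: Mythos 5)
Your proposal is correct, but it differs from the paper in a basic way: the paper does not prove this lemma at all. It is imported as Propositions 3 and 4 of \cite{zou2013contrastive}, and when the paper later needs it (in the proof of Lemma~\ref{m_emc}) it explicitly defers to that reference's appendix. What you supply is essentially the argument behind the cited propositions: linearity of expectation plus i.i.d.\ documents reduces everything to one document; conditioning on $l_n$ and $\theta_n$ makes the tokens i.i.d., hence exchangeable, so every ordered pair (resp.\ triple) of distinct positions has the law of $(x_1,x_2)$ (resp.\ $(x_1,x_2,x_3)$); and the one-hot structure turns the coincidence corrections into counts, e.g.\ $\sum_{a} x_{n,a}\otimes x_{n,a} = \sum_i c_n(i)\, e_i\otimes e_i$. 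Your third-order bookkeeping does close, and it is the right thing to single out: the three pairwise-coincidence sums $\sum_{i,j}c_n(i)c_n(j)(\cdot)$ each contain the all-equal triples, so those are subtracted three times and must be restored twice --- exactly the factor $2$ on $\sum_i c_n(i)(e_i\otimes e_i\otimes e_i)$ --- leaving precisely the $6\binom{l_n}{3}$ ordered all-distinct triples. (Your phrasing ``subtracting the terms where exactly two indices coincide and adding back the terms where all three coincide'' is loose --- taken literally it would miscount the all-equal triples --- but you correctly flag this as the verification to perform, and it succeeds.) What your route buys is self-containedness: it certifies the combinatorial coefficients in the definition of $\mpt{n}$ and makes explicit the assumptions used (within-document exchangeability, token law independent of $l_n$, tower property over $l_n$); what the paper's route buys is brevity, at the cost of leaving all of those checks to the reader of \cite{zou2013contrastive}.
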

Note that this lemma implies that: $\Ebb[\mpf{n}] = \Ebb[x_1],  \Ebb[\mps{n}] = \Ebb[x_1\otimes x_2], $ and that $\Ebb[\mpt{n}] = \Ebb[x_1\otimes x_2 \otimes x_3]$ for any sampled document $n$.

We extend the single topic moment estimators of \cite{zou2013contrastive} to the LDA case.
\begin{lemma}{Empirical Moment estimators for LDA}\label{df:momentEst}
{\small{
\begin{align}
\flda & =\frac{1}{N} \sum_{n=1}^{N} \mpf{n} \label{eq:flda}\\
\slda &= \frac{1}{N} \sum_{n=1}^{N} \Bigg[ \mps{n} \Bigg] 
-  \frac{a}{2\binom{N}{2}}\Bigg[\sum_{m,n=1}^{N} \mpf{n} \otimes \mpf{m} - \sum_{n=1}^{N} \mpf{n} \otimes \mpf{n} \Bigg] 
\label{eq:slda}
\\
\tlda & = d \Bigg[ \frac{1}{N}   \sum_{n=1}^{N} \tilde{\tilde{M}}_3^n 
+ \mathbf{B}_1 + \mathbf{B}_2 + \mathbf{B}_3 + \mathbf{b} \label{eq:tlda} \Bigg]
\end{align}
}}
where
\begin{align}
\mathbf{B}_1 &\defeq \frac{b}{2\binom{N}{2}}\Bigg[ \Big(\sum\limits_{n=1}^{N} \tilde{\tilde{M}}_2^n\Big) \otimes \Big(\sum\limits_{n=1}^{N} \tilde{\tilde{M}}_1^n\Big)
\Bigg],\\
\mathbf{b} &\defeq  c \Bigg[  \Big(\sum\limits_{n=1}^{N} \tilde{\tilde{M}}_1^n\Big) \otimes \Big(\sum\limits_{n=1}^{N} \tilde{\tilde{M}}_1^n\Big)  \otimes \Big(\sum\limits_{n=1}^{N} \tilde{\tilde{M}}_1^n\Big)  
\Bigg],
\end{align}
$\mathbf{B}_2$ and $\mathbf{B}_3$ are formed from $\mathbf{B}_1$ by permuting, i.e., $[\mathbf{B}_2]_{ijk}=[\mathbf{B}_1]_{ikj}$ and $[\mathbf{B}_3]_{ijk}=[\mathbf{B}_1]_{kij}$. Further, $a=\frac{\alpha_0}{\alpha_0+1},b=\frac{-\alpha_0}{\alpha_0+2},$ $c =  \frac{2 \alpha_0^2}{(\alpha_0+1)(\alpha_0+2)}$, and $d = \frac{\alpha_0(\alpha_0+1)(\alpha_0+2)}{2} $. 
\end{lemma}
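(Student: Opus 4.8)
The content to establish is that the empirical estimators $\flda$, $\slda$, $\tlda$ defined in \eqref{eq:flda}--\eqref{eq:tlda} are (asymptotically) unbiased for the population LDA moments $\tflda$, $\tslda$, $\ttlda$ of Lemma~\ref{lm:mm2mp}, given the per-document single-topic estimators $\mpf{n},\mps{n},\mpt{n}$ that are unbiased in the sense of Lemma~\ref{single_topic_est}, i.e. $\Ebb[\mpf{n}]=\Ebb[x_1]$, $\Ebb[\mps{n}]=\Ebb[x_1\otimes x_2]$, and $\Ebb[\mpt{n}]=\Ebb[x_1\otimes x_2\otimes x_3]$ for every document $n$. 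The plan is to proceed moment by moment in increasing order, reducing each claim to linearity of expectation together with the key structural fact that documents are sampled independently, so that for $n\neq m$ the expectation of a cross-document tensor product factors, e.g. $\Ebb[\mpf{n}\otimes\mpf{m}]=\Ebb[x_1]\otimes\Ebb[x_1]$. It is precisely these factored products that reproduce the correction terms $\Ebb[x_1]\otimes\Ebb[x_1]$, $\Ebb[x_1\otimes x_2]\otimes\Ebb[x_1]$ and $\Ebb[x_1]^{\otimes 3}$ appearing in $\tslda$ and $\ttlda$.

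The first moment is immediate: by linearity and Lemma~\ref{single_topic_est}, $\Ebb[\flda]=\frac{1}{N}\sum_n\Ebb[\mpf{n}]=\Ebb[x_1]=\tflda$. For the second moment, the only new ingredient is the debiasing of the product-of-means term. I would observe that the double sum $\sum_{m,n}\mpf{n}\otimes\mpf{m}$ contains the diagonal $n=m$, which is \emph{not} a product of independent copies; the estimator subtracts exactly this diagonal $\sum_n \mpf{n}\otimes\mpf{n}$, leaving $N(N-1)=2\binom{N}{2}$ ordered distinct pairs, each with expectation $\Ebb[x_1]\otimes\Ebb[x_1]$. Hence the normalized correction is an exactly unbiased $U$-statistic for $\Ebb[x_1]\otimes\Ebb[x_1]$, and since $a=\tfrac{\alpha_0}{\alpha_0+1}$, we get $\Ebb[\slda]=\Ebb[x_1\otimes x_2]-a\,\Ebb[x_1]\otimes\Ebb[x_1]=\tslda$.

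The third moment is handled by the same template but with more bookkeeping. The leading term $\frac{1}{N}\sum_n\tilde{\tilde{M}}_3^n$ is unbiased for $\Ebb[x_1\otimes x_2\otimes x_3]$; the terms $\mathbf{B}_1,\mathbf{B}_2,\mathbf{B}_3$, built from products of a second-moment estimator with a first-moment estimator across documents, recover the three symmetrized corrections $\Ebb[x_1\otimes x_2\otimes\Ebb[x_3]]$ and its permutations (the index rearrangements $[\mathbf{B}_2]_{ijk}=[\mathbf{B}_1]_{ikj}$, $[\mathbf{B}_3]_{ijk}=[\mathbf{B}_1]_{kij}$ are exactly what generate the three placements of the averaged coordinate), while $\mathbf{b}$, a triple product of first-moment estimators, recovers $\Ebb[x_1]^{\otimes 3}$. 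I would then verify that the constants $b=\tfrac{-\alpha_0}{\alpha_0+2}$, $c=\tfrac{2\alpha_0^2}{(\alpha_0+1)(\alpha_0+2)}$ and the overall scaling $d$ line up coefficient-by-coefficient with $\ttlda$ in Lemma~\ref{lm:mm2mp}.

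\textbf{Main obstacle.} The delicate point is the cross-document diagonal contributions in the third-order terms. Unlike $\slda$, the terms $\mathbf{B}_1$ and $\mathbf{b}$ are written with the full (unrestricted) index sums $\big(\sum_n\tilde{\tilde{M}}_2^n\big)\otimes\big(\sum_m\tilde{\tilde{M}}_1^m\big)$ and $\big(\sum_n\tilde{\tilde{M}}_1^n\big)^{\otimes 3}$, so they include same-document terms ($n=m$, or repeated indices in the triple) that do \emph{not} factor into products of expectations. I expect the bulk of the work to be showing that these undeleted diagonal terms contribute only an $O(1/N)$ relative bias — the number of offending terms is $O(N)$ (resp. $O(N^2)$) against the $\Theta(N^2)$ (resp. $\Theta(N^3)$) normalization — so that $\tlda$ is consistent and asymptotically unbiased, with an explicit bias term one can bound. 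The remaining effort is purely the symmetric-tensor coordinate accounting to confirm that the three $\mathbf{B}$ permutations plus $\mathbf{b}$, with the stated constants and scaling $d$, match $\ttlda$ term for term.
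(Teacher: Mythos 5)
Your handling of the first two moments coincides with the paper's own proof (Lemma~\ref{m_emc}): linearity for $\flda$, and for $\slda$ the observation that subtracting the diagonal from the double sum leaves the $2\binom{N}{2}$ ordered distinct pairs, whose expectations factor into $\Ebb[x_1]\otimes\Ebb[x_1]$ by independence across documents, giving \emph{exact} unbiasedness.

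The divergence is in the third moment, and it is not minor. What you flag as the ``main obstacle'' --- same-document terms left inside $\mathbf{B}_1$ and $\mathbf{b}$, to be tolerated and bounded as an $O(1/N)$ relative bias --- is not how the paper proceeds, and cannot be, because the companion Lemma~\ref{m_emc} asserts exact unbiasedness, $\Ebb[\tlda]=\ttlda$. The paper's proof computes $\Ebb[\mathbf{B}_1]$ starting from
\begin{align*}
\frac{b}{2\binom{N}{2}}\,\Ebb\Bigg[\Big(\sum_{n=1}^{N}\tilde{\tilde{M}}_2^n\Big)\otimes\Big(\sum_{n=1}^{N}\tilde{\tilde{M}}_1^n\Big)-\sum_{n=1}^{N}\Big(\tilde{\tilde{M}}_2^n\otimes\tilde{\tilde{M}}_1^n\Big)\Bigg],
\end{align*}
i.e.\ with the diagonal explicitly removed, and computes $\Ebb[\mathbf{b}]$ with the normalization $\frac{c}{6\binom{N}{3}}$ and with correction terms $-3\sum_{m}\sum_{n}\big(\tilde{\tilde{M}}_1^n\big)^{\otimes 2}\otimes\tilde{\tilde{M}}_1^m+2\sum_{n}\big(\tilde{\tilde{M}}_1^n\big)^{\otimes 3}$, which via the cube-of-sum identity (Identity~\ref{I2}) cancel every repeated-index term. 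After these cancellations only all-distinct-index products survive, and those factor by independence exactly as in your second-moment argument; no asymptotic argument appears anywhere. The same diagonal-corrected forms of $\mathbf{B}_1$ and $\mathbf{b}$ are what the sensitivity analysis expands in Section~\ref{appendix:sensitivity-tlda}, so the paper is internally consistent about this everywhere except in the lemma's displayed definition, which you read literally. Two consequences for your proposal: first, your $O(1/N)$-bias program would prove only asymptotic unbiasedness, strictly weaker than the claim of Lemma~\ref{m_emc} that downstream consistency results invoke; second, taken truly literally your reading does not even give that, since the displayed $\mathbf{b}$ carries no $\Theta(N^3)$ normalization at all --- you silently supplied one. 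The fix is to work with the diagonal-corrected, properly normalized definitions, as the paper's proof implicitly does; then your own distinct-index factoring closes the proof exactly, and there is no bias term to bound.
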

Now we prove that these estimators are unbiased. 
\begin{lemma}[The LDA Moment Estimators are Unbiased]\label{m_emc}
The estimators defined in definition~\ref{df:momentEst} are unbiased, i.e., 
\begin{align}
\Ebb[\flda] & = \tflda \\
\Ebb[\slda] & = \tslda \\
\Ebb[\tlda] & = \ttlda
\end{align}
\end{lemma}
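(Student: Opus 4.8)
The plan is to prove the three identities $\Ebb[\flda]=\tflda$, $\Ebb[\slda]=\tslda$, and $\Ebb[\tlda]=\ttlda$ one at a time, using only three ingredients: linearity of expectation; the per-document unbiasedness recorded in Lemma~\ref{single_topic_est} (namely $\Ebb[\mpf{n}]=\Ebb[x_1]$, $\Ebb[\mps{n}]=\Ebb[x_1\otimes x_2]$, and $\Ebb[\mpt{n}]=\Ebb[x_1\otimes x_2\otimes x_3]$ for \emph{every} document $n$); and the fact that distinct documents are drawn i.i.d., so that for $m\neq n$ the expectation of a product of a function of document $m$ with a function of document $n$ factorizes into a product of expectations. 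The target population quantities $\tflda,\tslda,\ttlda$ are taken from Lemma~\ref{lm:mm2mp}, so the whole computation reduces to checking that, after taking expectations, the data-independent constants $a,b,c,d$ of Lemma~\ref{df:momentEst} reassemble exactly those targets.

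The first-order case is immediate: by linearity and $\Ebb[\mpf{n}]=\Ebb[x_1]$ we get $\Ebb[\flda]=\frac1N\sum_{n}\Ebb[\mpf{n}]=\Ebb[x_1]=\tflda$. For the second order I would take the expectation of $\slda$ term by term. The leading average gives $\Ebb[x_1\otimes x_2]$ by per-document unbiasedness. For the correction, the crucial observation is that $\sum_{m,n}\mpf{n}\otimes\mpf{m}-\sum_{n}\mpf{n}\otimes\mpf{n}=\sum_{m\neq n}\mpf{n}\otimes\mpf{m}$ is a sum over the $2\binom{N}{2}=N(N-1)$ ordered pairs of \emph{distinct} documents; independence then gives $\Ebb[\mpf{n}\otimes\mpf{m}]=\Ebb[x_1]\otimes\Ebb[x_1]$ for each such pair, so the expectation of the correction equals exactly $a\,\Ebb[x_1]\otimes\Ebb[x_1]$. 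Since $a=\alpha_0/(\alpha_0+1)$, this matches the definition of $\tslda$. The subtraction of the diagonal $\sum_n\mpf{n}\otimes\mpf{n}$ is precisely what removes the within-document terms $\Ebb[\mpf{n}\otimes\mpf{n}]$, which do \emph{not} factorize because the two copies share the document's topic proportion $\theta_n$.

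The third-order identity is where the real work lies. I would again expand $\Ebb[\tlda]$ over the five grouped terms $\frac1N\sum_n\mpt{n}$, $\mathbf{B}_1,\mathbf{B}_2,\mathbf{B}_3$, and $\mathbf{b}$. The first gives $\Ebb[x_1\otimes x_2\otimes x_3]$ directly. For $\mathbf{B}_1$ and $\mathbf{b}$, the product sums $(\sum_m\mps{m})\otimes(\sum_n\mpf{n})$ and $(\sum_n\mpf{n})^{\otimes 3}$ must be split into their diagonal and off-diagonal index contributions: on the off-diagonal (distinct documents) independence lets each factor collapse to a population moment, while the diagonal contributions have to be tracked and shown to combine with the normalizations and constants so that any residual bias cancels. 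The main obstacle is the bookkeeping. One must match the permutation structure of $\mathbf{B}_2,\mathbf{B}_3$ (defined by index swaps on $\mathbf{B}_1$) to the three symmetric cross-moment terms $\Ebb[x_1\otimes x_2\otimes\Ebb[x_3]]$, $\Ebb[x_1\otimes\Ebb[x_2]\otimes x_3]$, $\Ebb[\Ebb[x_1]\otimes x_2\otimes x_3]$ of $\ttlda$, and verify that the constants $b=-\alpha_0/(\alpha_0+2)$, $c=2\alpha_0^2/((\alpha_0+1)(\alpha_0+2))$, and $d=\alpha_0(\alpha_0+1)(\alpha_0+2)/2$ reproduce the coefficients $-\frac{1}{\alpha_0+2}$ and $\frac{2\alpha_0^2}{(\alpha_0+1)(\alpha_0+2)}$ appearing in the definition of $\ttlda$ in Lemma~\ref{lm:mm2mp}. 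As in the second-order case, the diagonal corrections embedded in the single-document estimators $\mps{n}$ and $\mpt{n}$ (through their $\binom{l_n}{2},\binom{l_n}{3}$ normalizations and subtracted diagonal tensors) are what guarantee that the surviving off-diagonal products are genuinely unbiased; confirming that every within-document term is accounted for is the delicate step.
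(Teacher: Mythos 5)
Your proposal follows essentially the same route as the paper's proof: linearity of expectation, the per-document unbiasedness of Lemma~\ref{single_topic_est}, and i.i.d.\ factorization across distinct documents, with the counts of ordered pairs ($N(N-1)$) and triples ($N(N-1)(N-2)$) absorbing the $\binom{N}{2}$ and $\binom{N}{3}$ normalizations, and the constants $a,b,c$ reproducing the coefficients of Lemma~\ref{lm:mm2mp}. The ``delicate'' third-order diagonal bookkeeping you flag is handled in the paper exactly as you anticipate: the product sums in $\mathbf{B}_1$ and $\mathbf{b}$ are reduced (via the subtracted diagonal terms and the cube-of-sum identity, Identity~\ref{I2}) to sums over distinct ordered pairs/triples only, which then factorize by independence, so no residual within-document bias survives.
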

\begin{proof}
\textbf{First order moment}:
\begin{align*} 
\Ebb [\flda] &= \Ebb[\frac{1}{N} \sum_{n=1}^{N} \mpf{n}] = \frac{1}{N}  \sum_{n=1}^{N} \Ebb[\mpf{n}] = \frac{1}{N}  \sum_{n=1}^{N} \Ebb[\pv] \\
&= \frac{1}{N}  \sum_{n=1}^{N}  \frac{1}{l_n} \Ebb[c_n] =  \frac{1}{N}  \sum_{n=1}^{N}  \frac{1}{l_n} \Ebb[ \sum_{i=1}^{l_n} x_{n,i}] = \frac{1}{N}  \sum_{n=1}^{N}  \frac{1}{l_n} \sum_{i=1}^{l_n} \Ebb[x_{n,i}] \\
&= \frac{1}{N}  \sum_{n=1}^{N}  \frac{1}{l_n} \sum_{i=1}^{l_n} \Ebb[x_1]  = \frac{1}{N}  \sum_{n=1}^{N}  \frac{1}{l_n} l_n \Ebb[x_1] = \frac{1}{N}  N \Ebb[x_1]  = \Ebb[x_1]  = \tflda
\end{align*}
 \textbf{Second order moment}:
The first term of $\slda$ is actually the estimator the single-topic second order moment and  $\Ebb[\frac{1}{N} \sum_{n=1}^{N} \mps{n}] ] = \Ebb[x_1 \otimes x_2]$ see proposition 3 in \cite{zou2013contrastive} and its appendix for the proof. Now we have: 
\begin{align*}
\Ebb\Bigg[ \frac{a}{2\binom{N}{2}}\Bigg[\sum_{m,n=1}^{N} \mpf{n} \otimes \mpf{m} - \sum_{n=1}^{N} \mpf{n} \otimes \mpf{n} \Bigg] \Bigg] = & \Ebb\Bigg[ \frac{a}{2\binom{N}{2}}\Bigg[ \sum_{\substack{m=1\\n=1\\ m\neq n }}^{N} \mpf{n} \otimes \mpf{m} + \sum_{n=1}^{N} \mpf{n} \otimes \mpf{n} - \sum_{n=1}^{N} \mpf{n} \otimes \mpf{n} \Bigg] \Bigg] \\ 
=&\Ebb\Bigg[ \frac{a}{2\binom{N}{2}} \sum_{\substack{m=1\\n=1\\ m\neq n }}^{N} \mpf{n} \otimes \mpf{m} \Bigg] \\
=& \frac{a}{2\binom{N}{2}}  \sum_{\substack{m=1\\n=1\\ m\neq n }}^{N} \Ebb[\mpf{n}] \otimes \Ebb[\mpf{n}] \\
=& \frac{a}{2\binom{N}{2}}  \sum_{\substack{m=1\\n=1\\ m\neq n }}^{N} \Ebb[x_1] \otimes  \Ebb[x_1] \\
=& \frac{a}{N(N-1)} N(N-1) \Ebb[x_1] \otimes  \Ebb[x_1] \\
=& a_2 \Ebb[x_1] \otimes  \Ebb[x_1]
\end{align*}
Thus, we have that:
\begin{align*}
\Ebb[\slda] = \Ebb[x_1\otimes x_2] - \frac{\alpha_0}{\alpha_0+2}\Ebb[x_1] \otimes \Ebb[x_1] =M_2 
\end{align*}
\textbf{Third order moment}:
Similar to the second order moment, the first term of $\tlda$ is the estimator the single-topic second order moment and  $\Ebb[\frac{1}{N} \sum_{n=1}^{N} \mpt{n}] ] = \Ebb[x_1 \otimes x_2 \otimes x_3]$ as shown in proposition 4 in \cite{zou2013contrastive} and proved in its appendix. We need to prove that (1): $\Ebb[\mathbf{B}_1] = b \Ebb[x_1 \otimes x_2 \otimes \Ebb[x_3] ]$, note that $\Ebb[x_3] = M_1$ and (2): $\Ebb[\mathbf{b}] = c E[x_1] \otimes E[x_1] \otimes E[x_1] = c M_1 \otimes M_1 \otimes M_1 \otimes M_1$. Since $\mathbf{B}_2$ and $\mathbf{B}_3$ are permuted version of $\mathbf{B}_1$ their proofs follow from the proof of $\mathbf{B}_1$. 

For $\mathbf{B}_1$ we simplify the expression and then show that the expectation of the resultant is equal to the desired moment:
\begin{align*}
\Ebb[\mathbf{B}_1 ] = & \frac{b}{2\binom{N}{2}} \Ebb\Bigg[ \Big(\sum\limits_{n=1}^{N} \tilde{\tilde{M}}_2^n\Big) \otimes \Big(\sum\limits_{n=1}^{N} \tilde{\tilde{M}}_1^n\Big) - 
 \sum\limits_{n=1}^{N} \Big(\tilde{\tilde{M}}_2^n \otimes \tilde{\tilde{M}}_1^n\Big) \Bigg] \\ 
= & \frac{b}{2\binom{N}{2}} \Ebb\Bigg[ \sum_{\substack{m=1,n=1\\ m\neq n }}^{N} \Big(\tilde{\tilde{M}}_2^n \otimes  \tilde{\tilde{M}}_1^m\Big) + \sum\limits_{n=1}^{N} \Big(\tilde{\tilde{M}}_2^n \otimes \tilde{\tilde{M}}_1^n\Big) - 
 \sum\limits_{n=1}^{N} \Big(\tilde{\tilde{M}}_2^n \otimes \tilde{\tilde{M}}_1^n\Big) \Bigg]\\
= & \frac{b}{2\binom{N}{2}} \Ebb\Bigg[ \sum_{\substack{m=1,n=1\\ m\neq n }}^{N} \Big(\tilde{\tilde{M}}_2^n \otimes  \tilde{\tilde{M}}_1^m\Big) \Bigg]\\
= & \frac{b}{2\binom{N}{2}} \sum_{\substack{m=1,n=1\\ m\neq n }}^{N}  \Ebb\Big[\tilde{\tilde{M}}_2^n\Big]  \otimes \Ebb\Big[\tilde{\tilde{M}}_1^m\Big] \\
= & \frac{b}{2\binom{N}{2}} \sum_{\substack{m=1,n=1\\ m\neq n }}^{N} \Ebb[x_1\otimes x_2]  \otimes \Ebb[x_3] \\
= & \frac{b}{N(N-1)} N(N-1) \Ebb[x_1\otimes x_2] \otimes \Ebb[x_3]  \\
= & b  \Ebb[x_1\otimes x_2] \otimes \Ebb[x_3] \\
= & b  \Ebb\big[x_1 \otimes x_2 \otimes \Ebb[x_3]\big]
\end{align*}
For $\mathbf{b}$ identity \ref{I2} is applied, this leads to the following 
\begin{align*}
\Ebb[\mathbf{b}] = & \frac{c}{6\binom{N}{3}} \Ebb\Bigg[ \Bigg( \sum_{i=1}^N (\mpf{n})^{\otimes^{3}} + 3\sum_{\substack{n=1,m=1 \\ n\neq m}}^{N,N} (\mpf{n})^{\otimes^2} \mpf{m}  +  \sum_{\substack{n=1,m=1,p=1 \\ n \neq m,m \neq p,p \neq n}}^{N,N,N} \mpf{n} \otimes \mpf{m} \otimes \mpf{p} \\ &- 3 \sum\limits_{m=1}^{N}\Big(\sum\limits_{n=1}^{N} \Big(\tilde{\tilde{M}}_1^n\Big)^{\otimes^2}\otimes \Big(\tilde{\tilde{M}}_1^m\Big)\Big)  + 2 \sum\limits_{n=1}^{N} \Big(\tilde{\tilde{M}}_1^n\Big)^{\otimes^{3}} \Bigg)\Bigg] \\ =& \frac{c}{6\binom{N}{3}} \Ebb\Bigg[ \sum_{\substack{n=1,m=1,p=1 \\ n \neq m,m \neq p,p \neq n}}^{N,N,N} \mpf{n} \otimes \mpf{m} \otimes \mpf{p}\Bigg] \\= &\frac{c}{N(N-1)(N-2)} (N)(N-1)(N-2)\Ebb[\mpf{n}] \otimes \Ebb[\mpf{m}] \otimes \Ebb[\mpf{p}] \\ =& c\Ebb[x_1] \otimes \Ebb[x_1] \otimes \Ebb[x_1] 
\end{align*}
Combing these results and plugging the values for $a,b,$and $c$ we get:
\begin{align*}
\Ebb[\tlda] = & \Ebb[x_1\otimes x_2 \otimes x_3] - \frac{\alpha_0}{\alpha_0+2}\Bigg( \Ebb[x_1\otimes x_2 \otimes \Ebb[x_3]] + \Ebb[x_1\otimes \Ebb[x_2] \otimes x_3] + \Ebb[\Ebb[x_1] \otimes x_2 \otimes x_3] \Bigg) \\ + & \frac{2\alpha_0^2}{(\alpha_0+1)(\alpha_0+2)}  \Ebb[x_1] \otimes \Ebb[x_1] \otimes \Ebb[x_1]=M_3
\end{align*}
\end{proof}

\section{E. Lemmas regarding Dirichlet Moments}\label{sec:dirlemm}
This section introduces two lemmas regarding the moments of the dirichlet distribution that will be useful for the proof of~\ref{lm:mm2mp}.

\subsection{Dirichlet Moments}
\begin{lemma}\label{lm:dirichlet}The first, second and third moments of dirichlet distribution are
\begin{align}
\mathbb{E}[\theta] &= \frac{1}{\alpha_0} \alpha \\
\mathbb{E}[\theta\otimes\theta] &= \frac{1}{\alpha_0(\alpha_0+1)}[\alpha\otimes \alpha + \sum\limits_{t=1}^T \alpha_t e_t \otimes e_t]\\
\mathbb{E}[\theta\otimes\theta\otimes\theta] &= \frac{1}{\alpha_0(\alpha_0+1)(\alpha_0+2)}[\alpha\otimes\alpha\otimes\alpha 
+ \sum\limits_{t=1}^T \alpha_t e_t \otimes e_t\otimes \alpha \nonumber\\
+& \sum\limits_{t=1}^T \alpha_t \alpha \otimes e_t\otimes e_t 
+ \sum\limits_{t=1}^T \alpha_t e_t \otimes \alpha\otimes e_t
+ 2  \sum\limits_{t=1}^T \alpha_t e_t \otimes e_t\otimes e_t]
\end{align}
\end{lemma}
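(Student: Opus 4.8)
The plan is to reduce everything to a single scalar formula for mixed monomial moments of the Dirichlet law, and then assemble those scalars into the three claimed tensor identities by a case analysis on which coordinate indices coincide. Since the $(i)$, $(i,j)$ and $(i,j,k)$ entries of the tensors $\mathbb{E}[\theta]$, $\mathbb{E}[\theta\otimes\theta]$ and $\mathbb{E}[\theta\otimes\theta\otimes\theta]$ are exactly $\mathbb{E}[\theta_i]$, $\mathbb{E}[\theta_i\theta_j]$ and $\mathbb{E}[\theta_i\theta_j\theta_k]$, it suffices to compute these scalars and match them to the right-hand sides entry by entry.

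First I would recall that the density $P_\alpha(\theta)=\frac{\Gamma(\alpha_0)}{\prod_t \Gamma(\alpha_t)}\prod_t \theta_t^{\alpha_t-1}$ integrates to $1$ over the simplex, which is precisely the Dirichlet integral identity. Multiplying the integrand by a monomial $\prod_t \theta_t^{m_t}$ merely shifts each parameter $\alpha_t\mapsto \alpha_t+m_t$, so for nonnegative integers $m_t$ with $M=\sum_t m_t$,
\[
\mathbb{E}\Big[\prod_t \theta_t^{m_t}\Big]=\frac{\Gamma(\alpha_0)}{\Gamma(\alpha_0+M)}\prod_t \frac{\Gamma(\alpha_t+m_t)}{\Gamma(\alpha_t)}.
\]
Applying the recursion $\Gamma(x+1)=x\Gamma(x)$ then yields every scalar moment I need: $\mathbb{E}[\theta_i]=\alpha_i/\alpha_0$; at second order $\mathbb{E}[\theta_i\theta_j]=\alpha_i\alpha_j/(\alpha_0(\alpha_0+1))$ for $i\neq j$ and $\mathbb{E}[\theta_i^2]=\alpha_i(\alpha_i+1)/(\alpha_0(\alpha_0+1))$; and at third order the three patterns all-distinct, exactly-two-equal, and all-three-equal give numerators $\alpha_i\alpha_j\alpha_k$, $\alpha_i(\alpha_i+1)\alpha_k$ and $\alpha_i(\alpha_i+1)(\alpha_i+2)$ respectively, all over $\alpha_0(\alpha_0+1)(\alpha_0+2)$.

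Next I would read off each tensor entry of the right-hand sides and confirm agreement. The first identity is immediate. For the second, the $(i,j)$ entry of $\alpha\otimes\alpha$ is $\alpha_i\alpha_j$, which matches the off-diagonal moments exactly; on the diagonal the moment numerator is $\alpha_i(\alpha_i+1)=\alpha_i^2+\alpha_i$, and the extra $+\alpha_i$ is supplied precisely by the correction term $\sum_t \alpha_t\, e_t\otimes e_t$, which contributes $\alpha_i$ to the $(i,i)$ entry and $0$ off the diagonal. For the third identity the same bookkeeping applies: $\alpha^{\otimes 3}$ supplies $\alpha_i\alpha_j\alpha_k$ and matches the all-distinct case, the three singly-paired terms $\sum_t \alpha_t e_t\otimes e_t\otimes \alpha$, $\sum_t \alpha_t \alpha\otimes e_t\otimes e_t$ and $\sum_t \alpha_t e_t\otimes\alpha\otimes e_t$ account for the three ways two of the three slots can coincide, and the doubled term $2\sum_t \alpha_t e_t\otimes e_t\otimes e_t$ absorbs the remaining discrepancy on fully-diagonal entries.

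The only genuine work is this third-order verification, which I expect to be the main (if elementary) obstacle, since it requires tracking exactly which paired term is active on each coincidence pattern. On a ``two equal'' entry such as $(i,i,k)$ with $i\neq k$, only the term whose two $e_t$-slots sit in positions $1$ and $2$ survives, contributing $\alpha_i\alpha_k$ on top of $\alpha_i^2\alpha_k$ from $\alpha^{\otimes3}$, which gives $\alpha_i(\alpha_i+1)\alpha_k$; the other two paired terms and the doubled term vanish because they demand further coincidences. On the fully-diagonal entry $(i,i,i)$ each of the three paired terms contributes $\alpha_i^2$ (the unpaired $\alpha$-slot evaluates to $\alpha_i$) and the doubled term contributes $2\alpha_i$, so the total is $\alpha_i^3+3\alpha_i^2+2\alpha_i=\alpha_i(\alpha_i+1)(\alpha_i+2)$, exactly as required. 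Confirming these coefficient matches across all coincidence patterns completes the proof.
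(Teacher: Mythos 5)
Your proof is correct and follows essentially the same route as the paper's own derivation (Appendix~H), which likewise reduces the claim to the scalar Gamma-function moments $\mathbb{E}[\theta_i^p\theta_j^q\theta_k^r]$ and then assembles them into the tensor identities. In fact, your entry-by-entry case analysis on coincidence patterns makes explicit the final ``therefore we obtain'' step that the paper leaves unverified, so nothing is missing.
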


\subsection{Raw Moments}
\begin{lemma}\label{lm:rawMoments}
\begin{align}
\mathbb{E}[\wordOne] & = \mu \mathbb{E}[\theta] \\
\mathbb{E}[\wordOne\otimes \wordTwo]  &= \mu \mathbb{E}[\theta\otimes \theta]\mu^\top \\
\mathbb{E}[\wordOne\otimes \wordTwo\otimes \wordThree]  &= \mathbb{E}[\theta\otimes \theta\otimes \theta](\mu,\mu,\mu)
\end{align}
\end{lemma}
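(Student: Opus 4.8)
The plan is to exploit the two structural properties of the LDA generative process that make these moment factorizations work: (i) conditioned on the topic-proportion vector $\theta$, a single token has conditional mean $\mu\theta \defeq \sum_{\topicInd=1}^{\numTopic}\theta_\topicInd\,\mu_\topicInd$, and (ii) conditioned on $\theta$, the tokens $\wordOne,\wordTwo,\wordThree$ are mutually independent. With these in hand, each identity follows by the tower property of conditional expectation together with multilinearity of the tensor product and linearity of expectation.

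First I would compute the conditional mean of a single token. By the generative model the token's topic is $z \sim \text{Cat}(\numTopic,\theta)$, and given $z=\topicInd$ the token is a one-hot vector with $\mathbb{E}[\wordOne \mid z=\topicInd] = \mu_\topicInd$, since the $v$-th coordinate of this expectation equals the probability $\mu_\topicInd(v)$ that word $v$ is emitted under topic $\topicInd$. Marginalizing over $z$ gives
\[
\mathbb{E}[\wordOne \mid \theta] \;=\; \sum_{\topicInd=1}^{\numTopic}\theta_\topicInd\,\mu_\topicInd \;=\; \mu\theta .
\]
Taking expectation over $\theta$ and using linearity then yields $\mathbb{E}[\wordOne] = \mu\,\mathbb{E}[\theta]$, the first identity.

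For the second and third identities I would invoke property (ii). By the tower property and conditional independence,
\[
\mathbb{E}[\wordOne\otimes \wordTwo] \;=\; \mathbb{E}_\theta\big[\mathbb{E}[\wordOne\mid\theta]\otimes \mathbb{E}[\wordTwo\mid\theta]\big] \;=\; \mathbb{E}_\theta\big[(\mu\theta)\otimes(\mu\theta)\big].
\]
Writing $(\mu\theta)\otimes(\mu\theta) = \mu(\theta\otimes\theta)\mu^\top$ and pulling the data-independent matrix $\mu$ outside the expectation by linearity gives $\mathbb{E}[\wordOne\otimes \wordTwo] = \mu\,\mathbb{E}[\theta\otimes\theta]\,\mu^\top$. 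The third identity is structurally identical: conditional independence factors the conditional expectation into $(\mu\theta)^{\otimes 3}$, and the multilinear identity $(\mu\theta)^{\otimes 3} = (\theta\otimes\theta\otimes\theta)(\mu,\mu,\mu)$ followed by linearity of expectation yields the claim.

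The only genuine subtlety is the careful justification of property (ii): although all three tokens are generated from the same shared $\theta$ and may even be emitted from different topics, once $\theta$ is fixed each token's topic is drawn independently and the token is emitted independently through $\mu_z$, so the joint conditional law factorizes across tokens. I expect this conditional-independence argument, together with keeping the multilinear-map conventions consistent when moving $\mu$ in and out of the tensor products, to be the main point to state precisely; the remaining manipulations are routine applications of linearity of expectation.
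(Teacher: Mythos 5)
Your proposal is correct and follows essentially the same route as the paper's proof: both arguments rest on the tower property over the latent variables together with conditional independence of the tokens, followed by pulling $\mu$ out via multilinearity. The only cosmetic difference is that you factorize the conditional expectation at the level of $\theta$ through the conditional mean $\mu\theta$, whereas the paper expands the inner expectation explicitly as a sum over the topic assignments $z$ and then identifies the remaining sum with $\mathbb{E}[\theta\otimes\theta]$; the underlying computation is identical.
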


\begin{proof}
\textbf{First Order Moments} \label{proof:first}
Let us omit $\docInd$ and use $\wordOne$ to denote a token in any document, and we will use $\wordTwo$ and $\wordThree$ to denote other two tokens in the same document.
The the expectation of a token is
\begin{equation}
\mathbb{E}[\wordOne] = \mathbb{E}[\wordTwo] = \mathbb{E}[\wordThree] = \mathbb{E}[\mathbb{E}[\wordOne|\theta]] = \mu \mathbb{E}[\theta]
\end{equation}
This is called the first order moment.

\textbf{Second Order Moments}\label{proof:second}
The second order moment is defined as 
\begin{align}
\mathbb{E}[\wordOne\otimes \wordTwo]  & = \mathbb{E}[\mathbb{E}[\wordOne\otimes \wordTwo|\theta]] \\
& = \sum_{i,i^\prime} \mathbb{E}[\wordOne\otimes \wordTwo | z_{\docInd, \wordInd}=e_i, z_{\docInd, \wordIndTwo}=e_{i^\prime}]P(z_{\docInd, \wordInd}=e_i, z_{\docInd, \wordIndTwo}=e_{i^\prime})\\
& = \sum_{i,i^\prime} \mathbb{E}[\wordOne| z_{\docInd, \wordInd}=e_i]\otimes  \mathbb{E}[\wordTwo|z_{\docInd, \wordIndTwo}=e_{i^\prime}]P(z_{\docInd, \wordInd}=e_i, z_{\docInd, \wordIndTwo}=e_{i^\prime})\\
& = \sum_{i,i^\prime} \mu e_i \otimes (\mu e_{i^\prime}) P(z_{\docInd, \wordInd}=e_i, z_{\docInd, \wordIndTwo}=e_{i^\prime})\\
& = \mu  \sum_{i,i^\prime} e_i \otimes e_{i^\prime} P(z_{\docInd, \wordInd}=e_i, z_{\docInd, \wordIndTwo}=e_{i^\prime}) \mu^\top\\
& = \mu \mathbb{E}[\theta\otimes \theta]\mu^\top 
\end{align}

\textbf{Third Order Moments}\label{proof:third}
The third order moment is defined as 
\begin{equation}
\mathbb{E}[\wordOne\otimes \wordTwo\otimes \wordThree]  = \mathbb{E}[\mathbb{E}[\wordOne\otimes \wordTwo\otimes \wordThree|\theta]] = \mathbb{E}[\theta\otimes \theta\otimes \theta](\mu,\mu,\mu)
\end{equation}
To clarify the notations, $x \otimes y$ is a $length(x)$-by-$length(y)$ matrix which has entries $[x \otimes y]_{i,j} = x_i y_j$. And $ \mathbb{E}[\theta\otimes \theta\otimes \theta](\mu,\mu,\mu)$ is a tucker with core tensor $ \mathbb{E}[\theta\otimes \theta\otimes \theta]$ and projection $\mu$ in all three modes.
\end{proof}
\section{F. Proof of Lemma~\ref{lm:mm2mp}}\label{sec:mm2mp} The lemma relates the LDA moments to the model parameters $\alpha$ and $\mu$.
\begin{proof}

In order to prove this relation, we combine Lemmas~\ref{lm:dirichlet} and Lemma~\ref{lm:rawMoments} to prove the forms of $\tflda$, $\tslda$ and $\ttlda$ in Lemma~\ref{lm:mm2mp} as follows. 
\begin{align}
\tflda= \mathbb{E}[\wordOne]  &= \mu \mathbb{E}[\theta] \\
& =\sum\limits_{\topicInd=1}^\numTopic \frac{\alpha_\topicInd}{\alpha_0} \mu_\topicInd
\end{align}

\begin{align}
\tslda & =  \mathbb{E}[\wordOne \otimes \wordTwo] - \frac{\alpha_0}{\alpha_0+1 } \mathbb{E}[\wordOne] \otimes \mathbb{E}[\wordOne]\\
&=\mathbb{E}[\theta\otimes \theta](\mu,\mu)- \frac{1}{\alpha_0(\alpha_0+1)}M_1\otimes M_1 \\
& = \sum\limits_{\topicInd=1}^\numTopic \frac{\alpha_\topicInd}{\alpha_0(\alpha_0+1)} \mu_\topicInd \otimes \mu_\topicInd
\end{align}

\begin{align}
\ttlda  &=    \Ebb[x_1 \otimes x_2 \otimes x_3] - \frac{1}{\alpha_0+2} (\Ebb[x_1 \otimes x_2 \otimes \Ebb[x_3]] + \Ebb[x_1 \otimes \Ebb[x_2] \otimes x_3] \nonumber\\
&+ \Ebb[\Ebb[x_1] \otimes x_2 \otimes x_3]) + \frac{2}{\alpha_0(\alpha_0+1) (\alpha_0+2)} \Ebb[x_1] \otimes \Ebb[x_1] \otimes \Ebb[x_1]\\
& = \mathbb{E}[\theta\otimes \theta \otimes \theta](\mu,\mu,\mu)\\
& - \frac{1}{\alpha_0+2}\{\mathbb{E}[\theta\otimes \theta \otimes \mathbb{E}[\theta]] - \mathbb{E}[\theta\otimes \mathbb{E}[\theta] \otimes \theta] -\mathbb{E}[\mathbb{E}[\theta] \otimes \theta \otimes \theta]\}(\mu,\mu,\mu)\\
& +\frac{2}{\alpha_0(\alpha_0+1) (\alpha_0+2)} M_1\otimes M_1\otimes M_1\\
 & = \sum\limits_{\topicInd}^{\numTopic}
\frac{2\alpha_\topicInd}{\alpha_0(\alpha_0+1)(\alpha_0+2)} \mu_\topicInd \otimes  \mu_\topicInd\otimes  \mu_\topicInd
\end{align}
\end{proof}
%


\section{G. Computing $\mathbb{E}[\wordOne], \mathbb{E}[\wordOne\otimes \wordTwo],\mathbb{E}[\wordOne\otimes \wordTwo\otimes \wordThree]$}\label{sec:expcomp}
Let $c_n$ be the count vector. 
\begin{align}
\mathbb{E}[\wordOne] & = \frac{1}{N} \sum\limits_{n=1}^N \frac{1}{l_n} c_n\\
\mathbb{E}[\wordOne\otimes \wordTwo] & = \frac{1}{N} \sum\limits_{n=1}^N \frac{1}{2{\binom{l_n}{2}}} [c_n\otimes c_n-\diag(c_n)]\\
\mathbb{E}[\wordOne\otimes \wordTwo\otimes \wordThree] & = \frac{1}{N} \sum\limits_{n=1}^N \frac{1}{6{\binom{l_n}{3}}} \left[c_n\otimes c_n \otimes c_n \right.\nonumber\\
& - \sum\limits_{i,j=1}^{D} c_n(i) c_n(j) e_i\otimes e_i \otimes e_j - \sum\limits_{i,j=1}^{D} c_n(i) c_n(j) e_i\otimes e_j \otimes e_i \nonumber\\
& -\sum\limits_{i,j=1}^{D} c_n(i) c_n(j) e_j\otimes e_i \otimes e_i \nonumber\\
& \left.+2 \sum\limits_{i,j=1}^{D} c_n(i)  e_i\otimes e_i \otimes e_i\right]
\end{align}

\section{H. Dirichlet Moments}\label{sec:dirmom}
We characterize the core tensor $\mathbb{E}[\theta\otimes \theta\otimes \theta]$, where $i,j,k$-th entry of the tensor is $\mathbb{E}[\theta_i \theta_j \theta_k]$.
Now the moments of topic models are reduced to moments of topic proportions. Since topic proportions are dirichlet distributed, we characterize the dirichlet moments.

\paragraph{univariate moments for $i$-th coordinate of dirichlet variable $\theta_i$} We know that $\mathbb{E}[\theta_i^p] = \frac{\Gamma(\alpha_i+p)}{\Gamma(\alpha_i)} \frac{\Gamma(\alpha_0)}{\Gamma(\alpha_0+p)}$, therefore
\begin{align}
\mathbb{E}[\theta_i] & = \frac{\alpha_i}{\alpha_0} \\
\mathbb{E}[\theta_i^2] & = \frac{\alpha_i(\alpha_i+1)}{\alpha_0(\alpha_0+1)}\\
\mathbb{E}[\theta_i^3] & = \frac{\alpha_i(\alpha_i+1)(\alpha_i+2)}{\alpha_0(\alpha_0+1)(\alpha_0+2)}
\end{align}
\paragraph{bivariate moments} We know that $\mathbb{E}[\theta_i^p\theta_j^q] = \frac{\Gamma(\alpha_i+p)}{\Gamma(\alpha_i)}\frac{\Gamma(\alpha_j+q)}{\Gamma(\alpha_j)} \frac{\Gamma(\alpha_0)}{\Gamma(\alpha_0+p+q)}$, therefore
\begin{align}
\mathbb{E}[\theta_i \theta_j] & = \frac{\alpha_i\alpha_j}{\alpha_0(\alpha_0+1)}\\
\mathbb{E}[\theta_i^2 \theta_j] & = \frac{\alpha_i(\alpha_i+1)\alpha_j}{\alpha_0(\alpha_0+1)(\alpha_0+2)}
\end{align}
\paragraph{trivariate moments} We know that $\mathbb{E}[\theta_i\theta_j\theta_k] = \frac{\Gamma(\alpha_i+1)}{\Gamma(\alpha_i)}\frac{\Gamma(\alpha_j+1)}{\Gamma(\alpha_j)}\frac{\Gamma(\alpha_k+1)}{\Gamma(\alpha_k)} \frac{\Gamma(\alpha_0)}{\Gamma(\alpha_0+3)}$, therefore
\begin{align}
\mathbb{E}[\theta_i \theta_j\theta_k] & = \frac{\alpha_i\alpha_j\alpha_k}{\alpha_0(\alpha_0+1)(\alpha_0+2)}
\end{align}
Therefore we obtain 
\begin{align}
\mathbb{E}[\theta] &= \frac{1}{\alpha_0} \alpha \\
\mathbb{E}[\theta\otimes\theta] &= \frac{1}{\alpha_0(\alpha_0+1)}[\alpha\otimes \alpha + \sum\limits_{t=1}^T \alpha_t e_t \otimes e_t]\\
\mathbb{E}[\theta\otimes\theta\otimes\theta] &= \frac{1}{\alpha_0(\alpha_0+1)(\alpha_0+2)}[\alpha\otimes\alpha\otimes\alpha 
+ \sum\limits_{t=1}^T \alpha_t e_t \otimes e_t\otimes \alpha \nonumber\\
+& \sum\limits_{t=1}^T \alpha_t \alpha \otimes e_t\otimes e_t 
+ \sum\limits_{t=1}^T \alpha_t e_t \otimes \alpha\otimes e_t
+ 2  \sum\limits_{t=1}^T \alpha_t e_t \otimes e_t\otimes e_t]
\end{align}

%
%
%
%


\begin{lemma}[Correctness of Method of Moments in Learning LDA~\cite{anandkumar2012spectral}]\label{lm:SampleCompLDA} Applying the method of moments over a corpus of $N$ documents sampled iid. There exist universal constants $C_1, C_2 \ge 0$ such that if $N > C_1 ((\alpha_0+1)/p^2_{\mathsf{min}} \sigma_k(\mu)^2)$, then $\norm{\mu_i-\hat{\mu_i}}_2 \leq C_2 \frac{(\alpha_0+1)^2 k^3}{p^2_{\mathsf{min}} \sigma_k(\mu) \sqrt{N} }$, where $p_{\mathsf{min}}=\min_{i} \frac{\alpha_i}{\alpha_0}$, $\mu$ is a matrix of stacked word-topic vectors, i.e. $\mu=[\mu_1|\dots|\mu_k]$.
\end{lemma}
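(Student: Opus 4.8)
The plan is to follow the standard error-propagation analysis for spectral learning of latent variable models \cite{anandkumar2012spectral,anandkumar2014tensor}, tracking how estimation error in the empirical moments flows through whitening, tensor power iteration, and unwhitening. Write $M_2,M_3$ for the population moments of Lemma~\ref{lm:mm2mp} and $\widehat{M}_2,\widehat{M}_3$ for the empirical estimators, which are unbiased by Lemma~\ref{m_emc}. The whole argument is governed by two quantities: the moment estimation error $\|\widehat{M}_2-M_2\|,\|\widehat{M}_3-M_3\|$, and the signal strength $\sigma_k(M_2)$. By Lemma~\ref{lm:mm2mp}, $M_2=\sum_i w_i\,\mu_i\otimes\mu_i$ with $w_i=\frac{\alpha_i}{\alpha_0(\alpha_0+1)}$, so the smallest weight is at least $\frac{p_{\mathsf{min}}}{\alpha_0+1}$ and $\sigma_k(M_2)$ is at least a constant times $\frac{p_{\mathsf{min}}}{\alpha_0+1}\sigma_k(\mu)^2$; the hypothesis $N>C_1\frac{\alpha_0+1}{p_{\mathsf{min}}^2\sigma_k(\mu)^2}$ is precisely the condition making the noise small relative to this signal.

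First I would establish concentration of the moments. Since each per-document contribution to $\widehat{M}_2,\widehat{M}_3$ is bounded and independent and the averages are unbiased, a matrix Bernstein inequality (and its tensor analogue) gives high-probability bounds $\|\widehat{M}_2-M_2\|_2=O(1/\sqrt{N})$ and $\|\widehat{M}_3-M_3\|=O(1/\sqrt{N})$, with constants depending on $\alpha_0$.

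Next I would handle whitening and its perturbation. The whitening matrix $\widehat{W}$ comes from the top-$k$ SVD of $\widehat{M}_2$; by Weyl's inequality and a Davis--Kahan/Wedin subspace bound, the deviation of $\widehat{W}$ from $W$ is controlled by $\|\widehat{M}_2-M_2\|/\sigma_k(M_2)$, and $\|\widehat{W}\|=O(\sigma_k(M_2)^{-1/2})$. Whitening renders $T=M_3(W,W,W)=\sum_i\lambda_i v_i^{\otimes 3}$ orthogonally decomposable. I would then bound $\|\widehat{T}-T\|$ by splitting it into a term from $\|\widehat{M}_3-M_3\|$ passed through $\widehat{W}^{\otimes 3}$ and a term from $(\widehat{W}-W)$ applied to $M_3$, both scaling with powers of $\|\widehat{W}\|$ and hence with $\sigma_k(M_2)^{-1/2}$. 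Invoking the robust/simultaneous tensor power method guarantee \cite{anandkumar2014tensor,wang2017tensor} recovers each eigenpair $(\bar{\mu}_i,\bar{\alpha}_i)$ up to error controlled by $\|\widehat{T}-T\|$ while accumulating the polynomial $k$-dependence. Finally, unwhitening through $\widehat{W}^\dagger=\widehat{\Sigma}^{1/2}\widehat{U}^\top$ amplifies the error by $O(\sqrt{\sigma_1(M_2)})$; collecting all factors of $\sigma_k(M_2)$, $\|\widehat{W}\|$, the $O(1/\sqrt{N})$ moment error, and the $k$ powers from the power method yields the claimed bound on $\|\mu_i-\widehat{\mu}_i\|_2$.

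The hard part will be the tensor stage: establishing the robust power method convergence guarantee and tracking its condition-number and $k$ dependence so that the final polynomial factors come out exactly as stated. Small eigenvalue gaps and the accumulated whitening noise can destabilize the iteration, so the delicate step is verifying that, under the stated sample-size condition, $\|\widehat{T}-T\|$ stays below the robustness threshold of the power method and that the resulting eigenvector error, once pushed back through unwhitening, scales as $\frac{(\alpha_0+1)^2 k^3}{p_{\mathsf{min}}^2\sigma_k(\mu)\sqrt{N}}$.
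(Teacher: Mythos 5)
First, a structural point: the paper contains no proof of Lemma~\ref{lm:SampleCompLDA}. It is imported wholesale, constants and all, from \cite{anandkumar2012spectral}; nothing in the appendix argues for it. So your proposal cannot be checked against an internal proof --- it must stand on its own as a reconstruction of the cited result, and as such it has a genuine gap. Your outline correctly identifies the family of techniques (moment concentration, Weyl/Wedin perturbation of the whitening matrix, robustness of the tensor decomposition step, unwhitening amplification), and your signal-strength computation $\sigma_k(M_2)\ge p_{\mathsf{min}}\,\sigma_k(\mu)^2/(\alpha_0+1)$ is correct. But the lemma is a purely quantitative statement: its content is exactly the exponents $k^3$, $(\alpha_0+1)^2$, $p_{\mathsf{min}}^{-2}$, $\sigma_k(\mu)^{-1}$. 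That bookkeeping is precisely what you defer to ``the hard part'' and never carry out, so the statement is not established; what you have is a plan whose decisive steps are missing.

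Two concrete symptoms of the gap. (i) The one place where you do tie the hypothesis to your estimates does not cohere: you assert that $N > C_1(\alpha_0+1)/(p_{\mathsf{min}}^2\sigma_k(\mu)^2)$ is ``precisely'' the condition making the moment noise small relative to the signal, but combining your own bounds $\|\widehat{M}_2-M_2\|=O(1/\sqrt{N})$ and $\sigma_k(M_2)\gtrsim p_{\mathsf{min}}\sigma_k(\mu)^2/(\alpha_0+1)$ yields the requirement $N\gtrsim(\alpha_0+1)^2/(p_{\mathsf{min}}^2\sigma_k(\mu)^4)$, whose exponents do not match the lemma's hypothesis; this suggests the route as sketched will not reproduce the stated constants. (ii) There is a mismatch between your pipeline and the cited source: \cite{anandkumar2012spectral} proves its guarantee for Excess Correlation Analysis (whitening followed by eigendecomposition of a randomly projected third moment), a matrix-only method, whereas you follow the whitening/robust-tensor-power-method/unwhitening pipeline of \cite{anandkumar2014tensor,wang2017tensor} --- the algorithm this paper actually runs, but not the one whose analysis produced the lemma's polynomial factors. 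Power-method analyses introduce eigenvalue-gap and eigenvalue-ratio factors (and, incidentally, there is no off-the-shelf ``tensor Bernstein'' for the spectral norm; one must go through Frobenius bounds or covering arguments), so even a completed version of your argument would yield a bound of the same flavor but with no guarantee of landing on $C_2\frac{(\alpha_0+1)^2k^3}{p_{\mathsf{min}}^2\sigma_k(\mu)\sqrt{N}}$ exactly. To justify the lemma as stated you should either follow the ECA analysis of the cited paper, or restate the lemma with whatever exponents the tensor route actually produces.
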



\section{I. Sensitivity Proofs}\label{app:sensitivity_proofs}
\begin{table*}
	\centering
\begin{tabular}{l|c|c}
Config. &&\\
(Edge Set) & Sensitivity & Utility Loss ($\norm{\mu_i - \mu_{i}^{\mathsf{DP}}}$)   \\
\hline
{\bf 1:} {\small $\widehat{M}_3, \widehat{W}, \widehat{W}^\dag$}& $O(\frac{1}{N})$ &
$O(\frac{\sqrt{\sgm{1}k}}{\ds} ((\frac{\sqrt{d}}{N
  \sgm{k}^{1.5}}{\tdp{4}})^3+$  
\\

($e_3, e_4, e_8$) &&$\frac{\sqrt{d}}{N \sgm{k}^{1.5}}
\tdp{3}) + \frac{\sqrt{\sgm{1} d}}{\sgm{k} N} \tdp{8}+$
\\

&&
$\sqrt{\sgm{1}+\frac{\sqrt{d}}{N} \tdp{8}} \frac{\sqrt{k}}{\ds}$ \\
&& $\Big[
  (\frac{\sqrt{d}}{N \sgm{k}} \tdp{4})^3 + \frac{\sqrt{d}}{N
    \sgm{k}^{1.5}} \tdp{3} \Big])$\\

\hline

{\bf 2:} $\widehat{\mytensor{T}},\widehat{W}^\dag$& $O(\frac{k^{1.5}}{N (\sigma_k(\slda))^{1.5}})$ & $O(\frac{\sqrt{\sigma_1(\slda)
    k^{2.5}}}{\ds N \sk^{1.5}} \tdp{6} + \frac{\sqrt{\sigma_1(\slda)
    d}}{\sk N} \tdp{8} $  \\

($e_6, e_8$)&& $+ \sqrt{\sgm{1} + \frac{\sqrt{d}}{N} \tdp{8}}
\frac{k^{2.5} \tdp{6}}{\ds N \sgm{1}^{1.5}})$ \\

\hline

{\bf 3:} $\mbar, \widehat{W}^\dag$& $O(\frac{k^2}{\ds N (\sigma_k(\slda))^{1.5}})$  & $O(\frac{\sqrt{\sigma_1(\slda)
    k^{2.5}}}{\ds N \sk^{1.5}} \tdp{7} + \frac{\sqrt{\sigma_1(\slda)
    d}}{\sk N} \tdp{8} $    \\
    
($e_7,e_8$)& 
  &  $+\sqrt{\sgm{1}+ \frac{\sqrt{d}}{N} \tdp{8}}
\frac{k^{2} \tdp{7}}{\ds N \sgm{1}^{1.5}})$ \\

    

\hline


{\bf 4:} $\widehat{\mu}$ ($e_9$) & $O(\frac{k^2 \sqrt{\sigma_1(\slda)}}{\ds N {(\sigma_k(\slda))}^{1.5}})$
 &
$O(\frac{\sqrt{\sgm{1}d}k^2}{\ds N \sk^{1.5}} \tdp{9})$  \\

\end{tabular}
\caption{Utility of different configurations that guarantee
  differentially private LDA. The table lists edges in
  Figure~\ref{fig:flow} on which to add Gaussian noise in order to
  achieve differentially private topic model using method of
  moments. 
  We use $\tau$ as defined in Proposition~\ref{dp_prop} to decompose the dependence of the noise variance on both the sensitivity and the privacy parameters $\epsilon,\delta$, i.e. $\sigma = \Delta  \tdp{i,i}$, where $\Delta$ is the sensitivity. 
}\label{tab:configs}
\end{table*}

In proving the sensitivities for $\slda$ and $\tlda$ we rely on the fact that frequently in the calculations, we encounter probability vectors, matrices, and tensors where the elements sum to 1. This is identical to the stating that the $l_1$ norm equals 1. Further, we note the following Lemma which essentially states that taking the outer product of a vector with a probability vector or probability matrix does not increase the $\l_q$ norm of the vector and in fact keeps it the same if $q=1$. 

\begin{lemma}[Multiplying by probabilities does not change the norm]\label{pstruc}
Let $v_{p},M_{p}$ be a probability vector, matrix, respectively and let $v,u$ be ordinary vectors, matrices, respectively. Then the following holds:
\begin{align}
\norm{u v^{T}_p}_q  \leq \norm{u}_q \text{, which is equal if } q =1. 
\end{align}
\begin{align}
\text{If } T=M_p \otimes u \text{, then}
\norm{T}_q = \norm{M_p \otimes u}_q \leq \norm{u}_q  \text{, which is equal if } q =1. 
\end{align}
\end{lemma}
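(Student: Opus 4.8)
The plan is to reduce both displayed inequalities to a single observation: the entrywise (vectorized) $\ell_q$ norm factorizes over outer and tensor products, after which the probability structure of $v_p$ and $M_p$ supplies the bound. Throughout I read $\norm{\cdot}_q$ as the $\ell_q$ norm of the flattened array of entries, so that an outer product $u v_p^\top$ is treated as a matrix and $M_p \otimes u$ as a tensor, both vectorized.

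First I would record the factorization identity. For any array $a$ with entries indexed by $I$ and any array $b$ with entries indexed by $J$, the product has entries $(a \otimes b)_{I,J} = a_I b_J$, so
\begin{align}
\norm{a \otimes b}_q^q = \sum_{I,J} |a_I b_J|^q = \Big(\sum_I |a_I|^q\Big)\Big(\sum_J |b_J|^q\Big) = \norm{a}_q^q \, \norm{b}_q^q ,
\end{align}
and hence $\norm{a \otimes b}_q = \norm{a}_q \, \norm{b}_q$. Taking $a = u$, $b = v_p$ gives $\norm{u v_p^\top}_q = \norm{u}_q \norm{v_p}_q$ for the first claim, and taking $a = M_p$, $b = u$ gives $\norm{M_p \otimes u}_q = \norm{M_p}_q \norm{u}_q$ for the second.

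Then I would bound the norm of the probability object. A probability vector or matrix $p$ has nonnegative entries summing to one, so $\norm{p}_1 = \sum_I p_I = 1$ and each $p_I \in [0,1]$. For $q \ge 1$ we have $p_I^q \le p_I$, whence $\norm{p}_q^q = \sum_I p_I^q \le \sum_I p_I = 1$, i.e. $\norm{p}_q \le 1$, with equality exactly when $q=1$ (this is the standard monotonicity $\norm{p}_q \le \norm{p}_1$ specialized to entries in $[0,1]$). Substituting $\norm{v_p}_q \le 1$ and $\norm{M_p}_q \le 1$ into the two factorizations yields $\norm{u v_p^\top}_q \le \norm{u}_q$ and $\norm{M_p \otimes u}_q \le \norm{u}_q$; the $q=1$ cases are equalities because $\norm{v_p}_1 = \norm{M_p}_1 = 1$.

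The argument is elementary, and there is no genuine obstacle; the only point requiring care is fixing the meaning of $\norm{\cdot}_q$ as the vectorized entrywise norm so that the factorization step is exact, together with the direction of the $\ell_q$-norm monotonicity, which is precisely where the hypothesis $q \ge 1$ and the fact that probability entries lie in $[0,1]$ are used.
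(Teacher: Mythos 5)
Your proposal is correct and follows essentially the same route as the paper's proof: factorize the entrywise $\ell_q$ norm over the outer/tensor product, i.e. $\norm{u v_p^\top}_q = \norm{u}_q \norm{v_p}_q$ and $\norm{M_p \otimes u}_q = \norm{M_p}_q \norm{u}_q$, then bound the probability array's $q$-norm by its $1$-norm, which equals one. The only cosmetic difference is that you derive $\norm{p}_q \le 1$ directly from $p_I^q \le p_I$ for entries in $[0,1]$, whereas the paper simply invokes the monotonicity $\norm{x}_q \le \norm{x}_1$; these are the same fact.
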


\begin{proof}
\begin{align}
\norm{u v^{T}_p}_q = (\sum_{i,j} |u_i {v_p}_j|^q)^{1/q} = (\sum_{i} |u_i|^q  \sum_{j} |{v_p}_j|^q)^{1/q} = \norm{v}_q \norm{u}_q \leq \norm{u}_q.
\end{align}
Where we used the fact that $\norm{x}_1 \ge \norm{x}_q$ for any $q \ge 1$ and that $\norm{v_p}_1=1$. Thus the above inequality is tight if $q=1$.
\begin{align}
\norm{T}_q = \norm{M_p \otimes u}_q = \Big(\sum_{i,j,k} |{M_p}_{i,j} {u}_k|^q\Big)^{1/q} =  \Big(\sum_{k} |u_k|^q \sum_{i,j}|{M_p}_{i,j}|^q\Big)^{1/q} = \norm{u}_q \norm{M_p}_q \leq \norm{u}_q.
\end{align}
Where we used the fact that for any matrix $M$, $\norm{M}_1 \ge \norm{M}_q$ for any $q \ge 1$\astfootnote{These norms are obtained by extending the vector definition to matrices or simply vectorizing the matrix and then calculating the norm.} and that $\norm{M_p}_1=1$. Thus the above inequality is tight if $q=1$.
\end{proof}

\begin{proposition} \label{prob_arrays}
\leavevmode 
\makeatletter
\@nobreaktrue
\makeatother
\begin{itemize}
\item $\mpf{n}$ \textup{is a probability vector}.
\item $\mps{n}$ \textup{is a probability matrix}.
\item $\mpt{n}$ \textup{is a probability tensor}.
\end{itemize}
\end{proposition}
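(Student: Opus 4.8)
The plan is to verify, for each of the three arrays, the two properties that define a probability array: every entry is nonnegative, and the entries sum to one (equivalently, the $\ell_1$ norm equals one). Throughout I will use that $c_n$ is a vector of word counts, hence nonnegative and integer-valued, and that $\sum_{i=1}^d c_n(i) = \norm{c_n}_1 = l_n$. The key elementary fact I will invoke repeatedly is that a product of consecutive nonnegative integers, e.g. $c_n(i)(c_n(i)-1)$ or $c_n(i)(c_n(i)-1)(c_n(i)-2)$, is always nonnegative.

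First I would dispatch $\mpf{n} = \frac{1}{l_n} c_n$: nonnegativity is immediate, and the entries sum to $\frac{1}{l_n}\sum_i c_n(i) = 1$. For $\mps{n} = \frac{1}{2\binom{l_n}{2}}\big(c_n \otimes c_n - \diag(c_n)\big)$, the off-diagonal $(i,j)$ entry equals $\frac{c_n(i)c_n(j)}{2\binom{l_n}{2}}\ge 0$, while the diagonal $(i,i)$ entry equals $\frac{c_n(i)(c_n(i)-1)}{2\binom{l_n}{2}}\ge 0$ by the consecutive-integer fact. Summing all entries gives $\big(\sum_i c_n(i)\big)^2 - \sum_i c_n(i) = l_n^2 - l_n = 2\binom{l_n}{2}$, so the normalization yields total mass one.

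The substantive case is $\mpt{n}$. I would compute the $(a,b,c)$ entry of the unnormalized tensor by splitting into cases according to which of the indices coincide, since each correction term $e_i\otimes e_i \otimes e_j$, $e_i\otimes e_j\otimes e_j$, $e_j \otimes e_i \otimes e_j$ and the diagonal term $e_i\otimes e_i\otimes e_i$ is supported only on partially- or fully-diagonal positions. When $a,b,c$ are distinct only $c_n\otimes c_n\otimes c_n$ survives and the entry is $c_n(a)c_n(b)c_n(c)\ge 0$; when exactly two indices agree, exactly one correction term activates and the entry collapses to $c_n(\cdot)\,c_n(\cdot)\,\big(c_n(\cdot)-1\big)\ge 0$; and when $a=b=c$ the entry reduces to the falling factorial $c_n(a)(c_n(a)-1)(c_n(a)-2)\ge 0$. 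This gives entrywise nonnegativity. For the total mass I would sum each constituent tensor separately: $c_n^{\otimes 3}$ contributes $l_n^3$, the diagonal term contributes $2 l_n$, and each of the three correction terms contributes $-l_n^2$, giving $l_n^3 - 3l_n^2 + 2 l_n = l_n(l_n-1)(l_n-2) = 6\binom{l_n}{3}$, which the normalization cancels to one.

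The main obstacle is the entrywise nonnegativity of $\mpt{n}$: unlike the vector and matrix cases, the tensor contains explicit subtractions, so one must carefully track which correction terms are active at a given coordinate $(a,b,c)$ and verify that the surviving combination always collapses to a falling-factorial-type product of consecutive counts. Once the bookkeeping of the coincidence cases is set up correctly, each case is elementary, and the summation identity is a routine counting computation.
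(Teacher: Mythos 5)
Your proof is correct, but it takes a different route from the paper. The paper disposes of the proposition in one line by interpretation: it observes (citing Zou et al.) that $\mpf{n}$, $\mps{n}$, $\mpt{n}$ are exactly the empirical joint distributions of one, two, or three \emph{distinct} tokens drawn without replacement from document $n$ --- e.g.\ $\mps{n}(i,j)=\Pbb[x_1=i,x_2=j]$ under that sampling scheme --- so nonnegativity and unit total mass hold by construction, with no computation. You instead verify the two defining properties directly from the algebraic formulas: nonnegativity via a case analysis on which indices of $(a,b,c)$ coincide, showing the surviving combination is always a falling-factorial product such as $c_n(a)(c_n(a)-1)(c_n(a)-2)\ge 0$, and normalization via the identity $l_n^3-3l_n^2+2l_n=6\binom{l_n}{3}$ (and $l_n^2-l_n=2\binom{l_n}{2}$ in the matrix case). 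Your computations are right, including the subtle point that exactly one correction term activates when exactly two indices agree, and all three activate on the full diagonal. What your approach buys is self-containedness and an explicit explanation of why the subtracted correction terms never drive an entry negative --- which is the only nontrivial thing here and which the paper's citation-based argument leaves implicit; what the paper's approach buys is brevity and a conceptual reading of these arrays as genuine empirical distributions, which is also the fact that underlies their unbiasedness in Lemma~\ref{single_topic_est}. (Both arguments tacitly assume $l_n\ge 3$ so the normalizations $2\binom{l_n}{2}$ and $6\binom{l_n}{3}$ are nonzero; that assumption is inherited from the estimators' definitions and is not a gap in your argument.)
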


\begin{proof}
The proof is immediate as these moments correspond to join probability estimates \cite{zou2013contrastive}, specifically: 
\begin{align*}
\mpf{n}(i) = \Pbb[x_1=i]
\end{align*}
\begin{align*}
\mpf{n}(i,j)= \Pbb[x_1=i,x_2=j]
\end{align*}
\begin{align*}
\mpf{n}(i,j,k)= \Pbb[x_1=i,x_2=j,x_3=k]
\end{align*}
\end{proof}

\subsection{Proof for Theorem \ref{m23} (sensitivity for $\slda$)}\label{sslda_proof}
 Let $\sslda$ be the $l_1$ sensitivity for $\slda$, then $\sslda$ is $\frac{2}{N}  + \frac{\alpha_0}{\alpha_0+1} \frac{4}{N}$=$O(\frac{1}{N})$. 
\begin{proof}
Let $\slda$ and $\nslda$ be two second order LDA moments generated from two neighboring corpora, WLOG assume the difference is in the $n^{th}$ record, i.e. $D=[c_1|\dots|c_{N-1}|c_N]$ and $D'=[c_1|\dots|c_{N-1}|c'_{N}]$ then:
\begin{align*}
\slda - \nslda =  &\frac{1}{N} (\mps{N} - {\mps{N}}') - \frac{a}{2\binom{N}{2}}\ \Bigg(\bigg[ \mpf{N} \otimes \bigg( \sum_{n=1}^{N-1}  \mpf{n} \bigg) + \bigg( \sum_{n=1}^{N-1} \mpf{n} \bigg) \otimes \mpf{N}  \bigg]   \\
& - \bigg[ {\mpf{N}}' \otimes \bigg( \sum_{n=1}^{N-1}  \mpf{n} \bigg) + \bigg( \sum_{n=1}^{N-1} \mpf{n} \bigg) \otimes {\mpf{N}}'  \bigg] \Bigg) \\
=& \frac{1}{N} (\mps{N} - {\mps{N}}') -  \frac{a}{2\binom{N}{2}}\ \Bigg( ( \mpf{N} - {\mpf{N}}') \otimes \bigg( \sum_{n=1}^{N-1}  \mpf{n} \bigg) + \bigg( \sum_{n=1}^{N-1}  \mpf{n} \bigg) \otimes ( \mpf{N} - {\mpf{N}}')   \Bigg)  \\
=& \frac{1}{N} (\mps{N} - {\mps{N}}') - \frac{a}{N}\ \Bigg( ( \mpf{N} - {\mpf{N}}') \otimes \bigg( \frac{1}{N-1}\sum_{n=1}^{N-1}  \mpf{n} \bigg) + \bigg( \frac{1}{N-1} \sum_{n=1}^{N-1}  \mpf{n} \bigg) \otimes ( \mpf{N} - {\mpf{N}}')  \Bigg) 
\end{align*}  
Note that according to proposition (\ref{prob_arrays}) ${\mpf{N}}$ and ${\mpf{N}}'$ are probability vectors and ${\mps{N}}$ and ${\mps{N}}'$ are probability matrices. Further, $\bigg(\frac{1}{N-1}\sum_{n=1}^{N-1}  \mps{n}\bigg)$ is also a probability matrix since it's the normalized sum of probability matrices. We upper bound the $l_1$ norm of the expression by applying the triangular inequality and using lemma (\ref{pstruc}) for the terms involving a tensor product. This leads to the following:
\begin{align*}
&\norm{\slda - \nslda}_1 \leq \frac{2}{N}  + \frac{4a}{N}  = \frac{2}{N}  + \frac{\alpha_0}{\alpha_0+1} \frac{4}{N}  = O(\frac{1}{N}) \\
\end{align*}  
 $a$ was replaced by its expression as in the above $a =\frac{\alpha_0}{\alpha_0+1}$  in the above. 
\end{proof}

\subsection{Proof for Theorem \ref{m23} (sensitivity for $\tlda$)}\label{appendix:sensitivity-tlda} Let $\stlda$ be the $l_1$ sensitivity for $\tlda$, then $\stlda$ is $\frac{2}{N} + \frac{4\alpha_0}{\alpha_0+2} \frac{1}{N} + \frac{12\alpha_0^2}{(\alpha_0+1)(\alpha_0+2)}\frac{(N-1)}{N(N-2)}= O(\frac{1}{N})$.

\begin{proof}
Following a similar setting as in \ref{sslda_proof} we have the two moments $\tlda$ and $\tlda'$ generated from two neighboring corpora. First we note that the expression of $\tlda$ and $\ntlda$ have the following form: $\frac{1}{N} \sum_{n=1}^{N} \tilde{\tilde{M}}_3^n  + \mathbf{B}_1 + \mathbf{B}_2 + \mathbf{B}_3 + \mathbf{b}$. Effectively there are three kinds of terms: \textbf{(a)} $\frac{1}{N} \sum_{n=1}^{N} \tilde{\tilde{M}}$, \textbf{(b)} $\mathbf{B}_1 $, and \textbf{(c)}$\mathbf{b}$. Since $\mathbf{B}_2$ and $\mathbf{B}_3$are permuted versions of $\mathbf{B}_1$ they have a similar behavior 
\paragraph{(a) $\frac{1}{N} \sum_{n=1}^{N} \tilde{\tilde{M}}$:}
The first term difference between $\tlda$ and $\ntlda$ would result in $\frac{1}{N} (\mpt{N} - {\mpt{N}}')$. \begin{align*}
\frac{1}{N} \norm{\mpt{N} - {\mpt{N}}'}_1 \leq \frac{1}{N} \Bigg(\norm{\mpt{N}}_1+\norm{{\mpt{N}}'}_1 \Bigg) \leq \frac{2}{N}
\end{align*}
Note that both ${\mpt{N}}$ and ${\mpt{N}}'$ are probability tensors.

\paragraph{(b) $\mathbf{B}_1$:} Based on the minimized expression, the $\mathbf{B}_1$ term difference between $\tlda$ and $\tlda'$ is equal to:
\begin{align*} 
\mathbf{B}_1 - \mathbf{B'}_1 = &\frac{b}{2\binom{N}{2}}   \Bigg[  \mps{N}  \otimes \Big(\sum\limits_{n=1}^{N-1} \mpf{n} \Big)  + \Big(\sum\limits_{n=1}^{N-1} \mps{n}  \Big) \otimes \mpf{N} \\
& - {\mps{N}}'  \otimes \Big(\sum\limits_{n=1}^{N-1} \mpf{n} \Big)  - \Big(\sum\limits_{n=1}^{N-1} \mps{n}  \Big) \otimes {\mpf{N}}'  \Bigg] \\
=& \frac{b}{N}\Bigg[ \Big(\mps{N}-{\mps{N}}'\Big)  \otimes \Big(\frac{1}{N-1}\sum\limits_{n=1}^{N-1} \mpf{n} \Big)  + \Big(\frac{1}{N-1}\sum\limits_{n=1}^{N-1} \mps{n}  \Big) \otimes \Big(\mpf{N}-{\mps{N}}\Big)'\Bigg] 
\end{align*} 
Note that $\frac{1}{N-1}\sum\limits_{n=1}^{N-1} \mpf{n}$ and $\frac{1}{N-1}\sum\limits_{n=1}^{N-1} \mps{n}$ are probability vectors and matrices, respectively. Thus lemma \ref{pstruc} can be used to upper bound the $l_1$ norm, leading to the following:
\begin{align*} 
\norm{\mathbf{B}_1 - \mathbf{B'}_1}_1 \leq \frac{|b|}{N} \big(2+2\big) = \frac{4|b|}{N} = \frac{4\alpha_0}{\alpha_0+2} \frac{1}{N}
\end{align*}

\paragraph{(c) $\mathbf{b}$:} Based on the minimized expression, the $\mathbf{b}$ term difference between $\tlda$ and $\tlda'$ is equal to:
\begin{align*} 
\mathbf{b}-{\mathbf{b}}' = & \frac{c}{6\binom{N}{3}} \Bigg[ \Bigg(\mpf{N} \otimes \big(\sum_{\substack{m=1,p=1 \\ \text{distinct}}}^{N-1} \mpf{m} \otimes \mpf{p}  \big)  +  \big(\sum_{\substack{n=1,p=1 \\ \text{distinct}}}^{N-1} \mpf{n} \otimes \mpf{N} \otimes \mpf{p}\big) \\ &  + \big(\sum_{\substack{n=1,m=1 \\ \text{distinct}}}^{N-1} \mpf{n} \otimes \mpf{m} \big) \otimes \mpf{N} \Bigg)- \Bigg({\mpf{N}}' \otimes \big(\sum_{\substack{m=1,p=1 \\ \text{distinct}}}^{N-1} \mpf{m} \otimes \mpf{p}  \big)  \\&-  \big(\sum_{\substack{n=1,p=1 \\ \text{distinct}}}^{N-1} \mpf{n} \otimes {\mpf{N}}' \otimes \mpf{p}\big)  - \big(\sum_{\substack{n=1,m=1 \\ \text{distinct}}}^{N-1} \mpf{n} \otimes \mpf{m} \big) \otimes \mpf{N} \Bigg) \Bigg] \\
=& \frac{c(N-1)}{N(N-2)} \Bigg[ \big(\mpf{N}-{\mpf{N}}'\big)  \otimes \big(\frac{1}{(N-1)^2}\sum_{\substack{m=1,p=1 \\ \text{distinct}}}^{N-1} \mpf{m} \otimes \mpf{p}  \big)  \\&+  \big(\frac{1}{(N-1)^2}\sum_{\substack{n=1,p=1 \\ \text{distinct}}}^{N-1} \mpf{n} \otimes \big(\mpf{N}-{\mpf{N}}'\big) \otimes \mpf{p}\big) \\ &  + \big(\frac{1}{(N-1)^2}\sum_{\substack{n=1,m=1 \\ \text{distinct}}}^{N-1} \mpf{n} \otimes \mpf{m} \big) \otimes \big(\mpf{N}-{\mpf{N}}'\big) \Bigg]
\end{align*} 
Similarly, we have probability tensors so we use lemma \ref{pstruc} to bound the $l_1$ norm. This results in:
\begin{align*} 
\norm{\mathbf{b}-{\mathbf{b}}'}_1\leq\frac{c(N-1)}{N(N-2)} (2+2+2) = \frac{6c(N-1)}{N(N-2)} = \frac{12\alpha_0^2}{(\alpha_0+1)(\alpha_0+2)}\frac{(N-1)}{N(N-2)}
\end{align*} 

Combing the results from \textbf{(a), (b)} and \textbf{(c)}, we have the following bound:
\begin{align*} 
\stlda \leq \frac{2}{N} + \frac{4\alpha_0}{\alpha_0+2} \frac{1}{N} + \frac{12\alpha_0^2}{(\alpha_0+1)(\alpha_0+2)}\frac{(N-1)}{N(N-2)} = O(\frac{1}{N})
\end{align*} 

\end{proof}

\subsection{Proof for Theorem \ref{swt} (sensitivity for $\wt$ )} As explained before, the whitened tensor is denoted as $\wtshort$ for simplicity. Therefore we denote the sensitivity of $\wt$ as $\swt$.  Theorem~\ref{swt} states that $\swt= O(\frac{k^{3/2}}{N\sgm{k}^{3/2}})$.

We need the following Lemma to prove Theorem~\ref{swt}. 

\begin{lemma} \label{wdist}
$\norm{\nw-\w}_F \leq \frac{\sqrt{2k} \sslda}{{\sigma_k(\slda) \sqrt{\frac{1}{2} (\sgm{k}+\sgm{k+1})}}} $
\end{lemma}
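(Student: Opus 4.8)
The plan is to treat the whitening matrix as (essentially) the inverse square root of $\slda$ restricted to its top-$k$ eigenspace, and to bound its perturbation through the \emph{smooth} behaviour of the matrix square root on the retained part of the spectrum, thereby avoiding any dependence on the eigengap $\sigma_k(\slda)-\sigma_{k+1}(\slda)$. This gap-free feature is exactly what makes the stated bound (and Theorem~\ref{swt}) plausible, since a naive Davis--Kahan/Wedin argument for the singular vectors would instead produce a $\bigl(\sigma_k(\slda)-\sigma_{k+1}(\slda)\bigr)^{-1}$ factor.

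First I would set $E := \nslda - \slda$ and control its size: since $\norm{\cdot}_2 \le \norm{\cdot}_F \le \norm{\cdot}_1$ and $E$ differs only in one document, the definition of the $\ell_1$ sensitivity gives $\norm{E}_2 \le \norm{E}_F \le \sslda$, and Theorem~\ref{m23} yields $\sslda \le \tfrac{2}{N}$. By Weyl's inequality, $\lvert \sigma_i(\nslda)-\sigma_i(\slda)\rvert \le \norm{E}_2$ for every $i$, so as long as $\norm{E}_2$ is small compared with the threshold $\tau := \tfrac12\bigl(\sigma_k(\slda)+\sigma_{k+1}(\slda)\bigr)$, no singular value crosses $\tau$, the top-$k$ subspaces are unambiguous, and the smallest retained singular value of either matrix stays above $\tau$.

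The key step is the square-root identity $A^{-1/2}-B^{-1/2} = A^{-1/2}\bigl(B^{1/2}-A^{1/2}\bigr)B^{-1/2}$ applied to the top-$k$ restrictions $A,B$ of $\slda,\nslda$, combined with the operator-Lipschitz bound $\norm{A^{1/2}-B^{1/2}}_F \le \norm{A-B}_F / \bigl(\sqrt{\lambda_{\min}(A)}+\sqrt{\lambda_{\min}(B)}\bigr)$ for the matrix square root. Submultiplicativity then gives $\norm{\nw - \w}_F \le \norm{A^{-1/2}}_2\,\norm{B^{-1/2}}_2\,\norm{A^{1/2}-B^{1/2}}_F$, where the two inverse-square-root factors each contribute $\sigma_k(\slda)^{-1/2}$ (for a combined $\sigma_k(\slda)^{-1}$) and the Lipschitz constant of the square root contributes $\bigl(2\sqrt{\tau}\bigr)^{-1}$. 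Tracking the $\sqrt{k}$ that arises from measuring the $d\times k$ object $\w$ in Frobenius norm (one factor being bounded in Frobenius rather than spectral norm, costing $\sqrt{k}$) then assembles into $\norm{\nw-\w}_F \le \frac{\sqrt{2k}\,\sslda}{\sigma_k(\slda)\sqrt{\tfrac12(\sigma_k(\slda)+\sigma_{k+1}(\slda))}}$, matching the claim up to the absorbed constants.

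The main obstacle is the truncation to the top-$k$ subspace without reintroducing the gap: the identity and the Lipschitz bound above are cleanest for full-support PSD matrices, whereas here I must work only on the retained $k$-dimensional eigenspaces of $\slda$ and $\nslda$, which are themselves slightly different. I would handle this by fixing the orthogonal alignment of the two whitenings (whitening is determined only up to a rotation, so $\nw$ should be compared to its best-aligned representative) and by using the guarantee from the previous step that $\norm{E}_2$ lies below the threshold, so that the truncated inverse square root stays in the regime where its Lipschitz constant is governed solely by the magnitude $\tau$ of the smallest retained eigenvalue rather than by the separation $\sigma_k(\slda)-\sigma_{k+1}(\slda)$. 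Verifying that this alignment together with the bound on $\norm{E}$ keeps every factor inside the smooth regime is the delicate part of the argument.
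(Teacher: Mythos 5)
Your plan is a genuinely different route from the paper's (operator--Lipschitz perturbation of the matrix square root, versus the paper's explicit construction of an aligned whitening in a common frame), but it has a real gap at exactly the step you flag as ``delicate,'' and that step is the entire content of the lemma. The identity $A^{-1/2}-B^{-1/2}=A^{-1/2}\bigl(B^{1/2}-A^{1/2}\bigr)B^{-1/2}$ and the bound $\norm{A^{1/2}-B^{1/2}}_F \le \norm{A-B}_F/\bigl(\sqrt{\lambda_{\min}(A)}+\sqrt{\lambda_{\min}(B)}\bigr)$ are valid for \emph{full-rank} PSD matrices acting on a \emph{common} space. Here $A$ and $B$ are the rank-$k$ truncations of $\slda$ and $\nslda$, whose column spaces differ. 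With pseudo-inverses the identity degenerates: $A^{\dagger/2}\bigl(B^{1/2}-A^{1/2}\bigr)B^{\dagger/2} = A^{\dagger/2}P_B - P_A B^{\dagger/2}$, where $P_A,P_B$ are the range projections, so you are left with residual terms $A^{\dagger/2}(I-P_B)$ and $(I-P_A)B^{\dagger/2}$ that measure the rotation between the two top-$k$ subspaces. Controlling those requires Wedin/Davis--Kahan and reintroduces the factor $1/(\sgm{k}-\sgm{k+1})$ your approach was designed to avoid. ``Fixing the orthogonal alignment'' cannot repair this: the whitening ambiguity is right-multiplication by a rotation, which preserves the column space, while the obstruction is that the column spaces themselves differ. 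Moreover, the Lipschitz bound you invoke is simply false for rank-deficient matrices with different ranges (take $A=e_1e_1^\top$, $B=e_2e_2^\top$: then $\norm{A^{1/2}-B^{1/2}}_F=\norm{A-B}_F=\sqrt{2}$, not at most $\sqrt{2}/2$). A smaller slip: the condition preventing singular values from crossing the threshold $\tau$ is $\norm{E}_2\le\frac{1}{2}(\sgm{k}-\sgm{k+1})$, i.e.\ half the \emph{gap}, not smallness of $\norm{E}_2$ relative to $\tau$; this half-gap condition is implicitly assumed by the paper as well (it is how $\sigma_k(\nslda)\ge\frac{1}{2}(\sgm{k}+\sgm{k+1})$ is obtained).

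The missing idea---and the device the paper's proof is built on---is to pass to a common frame \emph{before} taking any square roots, rather than truncating first. The paper conjugates the neighboring moment by the original whitening, writes $\w^T \nsldak \w = ADA^T$ (a full-rank $k\times k$ matrix under the half-gap condition), and defines the aligned representative $\nw := \w A D^{-1/2}A^T$; note this is \emph{not} a rotation of the SVD whitening of $\nslda$, which is why it can share the column space of $\w$. Then $\w = \nw AD^{1/2}A^T$, hence $\norm{\nw-\w}\le\norm{\nw}\,\norm{I-D^{1/2}}$, and the only square-root perturbation fact needed is the scalar inequality $|1-\sqrt{x}|\le|1-x|$ applied to the eigenvalues of $D$, giving $\norm{I-D^{1/2}}\le\norm{I-D}=\norm{\w^T(\sldak-\nsldak)\w}\le\norm{\w}^2\norm{E}_2\le \sslda/\sgm{k}$. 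Combining with $\norm{\nw}\le 1/\sqrt{\sigma_k(\nslda)}\le 1/\sqrt{\frac{1}{2}(\sgm{k}+\sgm{k+1})}$ (Weyl plus the half-gap condition) and $\norm{\cdot}_F\le\sqrt{2k}\,\norm{\cdot}_2$ yields the claim with no subspace comparison at all, which is how the gap stays out of the final bound. If you want to keep your square-root--Lipschitz viewpoint, apply it to the pair $(D,I)$ in this conjugated frame---at which point you will have reconstructed the paper's argument.
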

\begin{proof}
We follow an analysis similar to \cite{anandkumar2012spectral}. Note that the whitening matrix $\w$ is defined such that:
 $$\w^T \sldak \w = I.$$ 
Analogously for the neighboring corpus,
$$\nw^T \nsldak \nw = I.$$

Let 
$E_{M_2}$ denote the perturbation introduced to $\slda$ by changing a single record.

Because the spectral gap of the perturbation introduced by modifying a single record is small according to the condition, applying the original whitening matrix to the neighboring data base moment $\nslda$ would lead to a rank $k$ matrix of size $k\times k$. 

Therefore, $\w^T \nsldak \w$ is a rank $k$ matrix of size $k\times k$, which can be factorized as: 
$$\w^T \nsldak \w = A D A^{T}$$
where $A$ are the singular vectors of $\w^T \nsldak \w$, and $D$ is a diagonal matrix of the corresponding singular values of $\w^T \nsldak \w$.
This also leads to $\nw = \w A D^{\frac{-1}{2}} A^{T}$. Using this, we observe: 
\begin{align}
    \norm{\nw-\w}
    & = \norm{\nw - \nw A D^{\frac{1}{2}} A^{T}} \\
    & = \norm{\nw ( I - A D^{\frac{1}{2}} A^{T})} \nn \\
    & \leq  \norm{\nw} \norm{I - A D^{\frac{1}{2}} A^{T}} \nn \\
\end{align} 
Now we bound $\norm{I - A D^{\frac{1}{2}} A^{T}}$:
\begin{align}
    \norm{I - A D^{\frac{1}{2}} A^{T}}
    & = \norm{A^{T} A - \nw A D^{\frac{1}{2}} A^{T}} \\
    & = \norm{I-D^{\frac{1}{2}}} \nn \\
    & \leq  \norm{(I-D^{\frac{1}{2}})(I+D^{\frac{1}{2}})} \nn \\
    & \leq  \norm{(I-D)} \nn \\
    & = \norm{I-A D A ^{T}} \nn \\ 
    & = \norm{\w^T \sldak \w  - \nw^T \nsldak \nw  } \nn \\ 
    & \leq \norm{\w}^2 \norm{\sldak-\nsldak} \nn \\ 
    & \leq \norm{\w}^2  \norm{E_{M_2}} 
\end{align} 
We know that
\begin{align}
\norm{\w}^2 & \le \frac{1}{\sigma_k(\slda)}\\ 
\norm{\nw} & \le \frac{1}{\sqrt{\sigma_k(\nslda)}} \leq  \frac{1}{\sqrt{\sgm{k}-\norm{E_{M_2}}_2}} \leq \frac{1}{\sigma_k(\slda) \sqrt{\frac{1}{2} (\sgm{k}+\sgm{k+1})}}  
\label{eq:w_hat_prime_norm}
\end{align}
Weyl's theorem was used in the last bound in Equation~\eqref{eq:w_hat_prime_norm}. Bounding the Frobenius norm, would result in the following: 
$$\norm{\nw-\w}_F \leq \sqrt{2k} \norm{\nw -\w} \leq \frac{\sqrt{2k} \norm{E_{M_2}}}{\sigma_k(\slda) \sqrt{\sigma_k(\slda)-\norm{E_{M_2}}}} \leq \frac{\sqrt{2k} \sslda}{\sigma_k(\slda) \sqrt{\frac{1}{2} \sgm{k}+\sgm{k+1}}},$$
where we have used the fact that the $l_1$ norm upper bounds the spectral norm of a matrix, since it upper bounds the Frobenius. 
\end{proof}

Now we are ready to prove Theorem~\ref{swt}.
\begin{proof} $\ntlda = \tlda + E_3$.
\begin{align}
\norm{\wt - \nwt}_F  & = \lVert\wt - \hat{M}_3^{LDA}(\hat{W'},\hat{W'},\hat{W'}) \nn\\
&- E_3(\hat{W'},\hat{W'},\hat{W'}) \rVert_F \\
& \leq \norm{\hat{M}_3^{LDA}(\hat{W-W'},\hat{W-W'},\hat{W-W'})}_F \nn\\
&+ \norm{E_3(\hat{W'},\hat{W'},\hat{W'})}_F \\
& \leq \norm{\tlda}_F \norm{\w - \nw}_F^3 + \norm{\stlda}_F \norm{\nw}_F^3 
\end{align}

We have used the fact that the Frobenius norm of the difference between the tensors is bounded above by the $l_1$ norm of the difference $\stlda$. {To bound the $l_1$ norm of $\tlda$ we use  an analysis similar to calculating $\stlda$. Again we note that the $l_1$ norm upper bounds the Frobenius norm:}
\begin{align}
    \norm{\tlda}_F \leq \norm{\slda}_1 = 1 + \frac{6 \alpha_0}{\alpha_0+2} \frac{N}{N-1} + \frac{6 \alpha_0^2}{(\alpha_0+1)(\alpha_0+2)} \frac{N^3}{N(N-1)(N-2)}
\end{align}
Combining all the expressions we get:
\begin{align}
 \swt &= \norm{\wt - \nwt}_F   \\
& \leq ( 1 + \frac{6 \alpha_0}{\alpha_0+2} \frac{N}{N-1} + \frac{6 \alpha_0^2}{(\alpha_0+1)(\alpha_0+2)} \frac{N^3}{N(N-1)(N-2)})\nn \\
& \times \frac{(2k)^{3/2} (\sslda)^3}{(\sgm{k} \sqrt{\frac{1}{2} (\sgm{k}+\sgm{k+1})})^3} +  \frac{\stlda k^{3/2}}{(\frac{1}{2} (\sgm{k}+\sgm{k+1}))^{3/2}} \\
&= O(\frac{k^{3/2}}{N\sgm{k}^{3/2}})\\
\end{align}
We see that if $N$ is larger than $d^{3/2}$, then $N \sgm{k}^{3/2} \ge 1$  as $\sgm{i}$ is in the order of $1/d$.
\end{proof}

\subsection{Proof for Theorem~\ref{svbw} (sensitivity of the output of tensor decomposition $\bar{\mu_i},\bar{\alpha}_i$ )}
\label{app:svbw} 

Let $\bar{\mu}_1,\dots,\bar{\mu}_k $ and $\bar{\alpha}_1,\dots,\bar{\alpha}_k$ be the results of tensor decomposition before unwhitening.
The sensitivity of $\bar{\mu}_i$, denoted as $\svbw$, and the sensitivity of $\bar{\alpha}_i$, denoted as $\sabw$, are both upper bounded by $\svbw \leq O(\frac{k^2}{\ds N  (\sigma_k(\slda))^{3/2}})$, where $\ds=\min_{i \in [k]} \frac{\sigma_i-\sigma_{i+1}}{4}$, $\sigma_i$ is the $i^{th}$ eigenvalue of $\wt$. 

\begin{proof}
The proof follows from the result of the simultaneous tensor power method (Theorem 1 in \cite{wang2017tensor}). Replacing the original eigenvectors with those resulting from database $D$ leads to tensor $\wt$, then the tensor resulting from corpus $D'$ with one record changed yields $\nwt$ where the spectral norm of the error is upper bounded by $\epsilon$, if $\swt$ is sufficiently small $\swt \leq \frac{\ds \epsilon}{2\sqrt{k}}$ . Therefore we get 
$\norm{\bar{\mu_i}-\bar{\mu'_i}}_2 \leq \frac{2\sqrt{k}\swt}{\ds}$ and $|\bar{\alpha}_i-\bar{\alpha'}_i| \leq \frac{2\sqrt{k}\swt}{\ds}.$
\end{proof}

\subsection{Proof for Theorem~\ref{svaw} (sensitivity of the final output  $\mu_i,\alpha_i$)} \label{save}

We now prove the sensitivity of the final output  $\mu_i,\alpha_i$: $\sv= O(\frac{k^2 \sqrt{\sigma_1(\slda)}}{\ds N \sigma^{3/2}_k(\slda)}).$  

\begin{proof}
We point out a number of things. Tensor decomposition outputs are: $\bar{\mu}_i, \bar{\alpha}_i, i \in [k]$, where, $\bar{\alpha}_i=\frac{2 \sqrt{(\alpha_0+1)\alpha_0}}{(\alpha_0+2)\sqrt{\alpha_i}}$. In order to recover the desired word topic vector $\mu$, we have to ``unwhiten'' to get the $\mu_i$ and $\alpha_i$ before whitening, i.e. $\mu_i = \frac{1}{\sqrt{\alpha^r_i}} (W^T)^\dagger\bar{\mu}_i$, where $\frac{1}{\sqrt{\ari{i}}}=\frac{(\alpha_0+2)}{2\sqrt{(\alpha_0+1)\alpha_0}} \bar{\alpha}_i$. The sensitivity would be:
\begin{align*}
\max_{D,D'} \norm{\mu_i -\mu\prime_i}& \leq \max_{D,D'} \Bigg\{ \norm{ \frac{1}{\sqrt{\alpha^r_i}} (W^T)^\dagger \mbar{i} - \frac{1}{\sqrt{\alpha^{r,'}_i}}  (W^{T,'})^\dagger\mbar{i}^\prime }_2\Bigg\}  \\
& \leq \max_{D,D'} \Bigg\{ \frac{1}{\sqrt{\alpha^r_i}} \norm{(W^T)^\dagger} \norm{ \bar{\mu}_i -\bar{\mu}^\prime_i}  
+ \frac{1}{\sqrt{\alpha^r_i}} \norm{W^\dagger-(W^\prime)^\dagger} + \norm{(W^\prime)^\dagger} |\frac{1}{\sqrt{\alpha^r_i}}-\frac{1}{\sqrt{\alpha^r_{i,\prime}}}| \Bigg\} 
\end{align*}
We note the following:
\begin{enumerate}
\item $\max_{D,D'} |\frac{1}{\sqrt{\alpha^r_i}}-\frac{1}{\sqrt{\alpha^r_{i,\prime}}}| = \max_{D,D'} |\ax \abar{i}- \ax \abar{i}^\prime| \leq \ax \max_{D,D'} |\abar{i}-\abar{i}^\prime| \leq \ax \frac{2 \sqrt{k} \swt}{\ds}$,
where the above follows from the simultaneous power iteration method. 
\item $\max_{i \in [k]} \frac{1}{\sqrt{\alpha^r_i}} \leq \ax  \max_{i \in [k]} \abar{i} = \ax \sigma_1(\wtshort)$ 
\item $\max \norm{((W\prime)^T)\dagger} \leq \sqrt{\sigma_1(\nslda)} \leq \sqrt{\sgm{1}+\sslda} $ 

\item Following an analysis similar to that in \ref{wdist}, we obtain $\norm{W^\dagger-(W^{\prime})^\dagger} \leq \frac{\sqrt{\sgm{1}}}{\sgm{k}} \sslda $.
\end{enumerate}

Combining all of this together leads to the following 
\begin{align*}
\max_{D,D'} \norm{\mu_i -\mu\prime_i}&\leq 
 \ax \sigma_1(\wtshort) \sqrt{\sgm{1}} \frac{2\sqrt{k} \swt}{\ds}\nn\\
 & + 
\ax  \sigma_1(\wtshort)  \frac{\sqrt{\sgm{1}} }{\sgm{k}} \sslda \\
&+ \ax \sqrt{\sgm{1}+\sslda}  \frac{2\sqrt{k} \swt}{\ds}  \\
&= O(\frac{k^2 \sqrt{\sigma_1(\slda)}}{\ds N \sigma^{3/2}_k(\slda)})
\end{align*}

\end{proof}

\subsection{Proof for Lemma~\ref{lm:local-sensitivity}}\label{sec:local-sensitivity}
\begin{proof}
	Let $x,x'$ be two adjacent data sets and the overall output be $O := f(\text{DATA}) + Z(\epsilon_2,\delta_2, \tilde{LS})$. Let $S_1\subset \mathrm{Dom}(O), S_2\subset \mathrm{Dom}(\mathrm{LS})$ be any measurable sets. 
	
	Let $E$ be the measurable set of $\tilde{\mathrm{LS}}$ that represents the event that $\tilde{\mathrm{LS}} \geq \mathrm{LS}$.
	\begin{align*}
	&\Pr[(O, \tilde{\mathrm{LS}}) \in S_1\times S_2 |x ] \\
	=&  	\Pr[(O, \tilde{\mathrm{LS}}) \in S_1\times (S_2 \cap E) |x ] + \Pr[(O, \tilde{\mathrm{LS}}) \in S_1\times S_2 \cap E^c |x ]  \\
	\leq&     \Pr[(O, \tilde{\mathrm{LS}}) \in S_1\times (S_2 \cap E) |x ] +\delta_3\\
	\leq&  e^{\epsilon_1+\epsilon_2}  \Pr[(O, \tilde{\mathrm{LS}}) \in S_1\times (S_2 \cap E) |x' ] + \delta_1 + \delta_2 + \delta_3\\
	\leq&  e^{\epsilon_1+\epsilon_2}  \Pr[(O, \tilde{\mathrm{LS}}) \in S_1\times S_2 |x' ] + \delta_1 + \delta_2 + \delta_3
	\end{align*}
	The fourth line holds due to the fact that under event the $E$, $\tilde{\mathrm{LS}}$ is always a valid upper bound of the local sensitivity, therefore, conditioning on the $\sigma$-field induced by $E\cap S_2$ for any $S_2$, $O$ is an $(\epsilon_2,\delta_2)$-DP release. 
By the simple composition Theorem of $(\epsilon,\delta)$-DP \cite{dwork2014algorithmic}[Theorem B.1,], by taking the measurable set of interest to be $S_1\times (S_2 \cap E)$, we have that 
$$
\Pr[(O, \tilde{\mathrm{LS}}) \in S_1\times (S_2 \cap E) |x ]  \leq e^{\epsilon_1+\epsilon_2}  \Pr[(O, \tilde{\mathrm{LS}}) \in S_1\times (S_2 \cap E) |x' ] + \delta_1 + \delta_2
$$
which wraps up the proof.
\end{proof}

\subsection{Proof for Sensitivity of singular values $\sgm{k}$ (Lemma~\ref{lm:sensitivity-singularvalue-gap}) }\label{sec:global-sensitivity-m2}
\begin{proof}
	We first prove that the global sensitivity of $\sgm{k}$ is $1/n$. 
	By Weyl's lemma \cite{stewart1998perturbation}[Theorem 1], for any matrix $X$, any $i$, the singular value $|\sigma_i(X) - \sigma_i(X+E)|  \leq \|E\|_2$.  In our case, $E$ is coming from adding or removing one data point and we know that $\|E\|_2\leq \|E\|_F\leq \|E\|_{1,1}\leq 2/n$, hence the bound.

	Now we prove that the global sensitivity of $\ds =\min_{i \in [k]} \frac{\sigma_i(\wtshort)-\sigma_{i+1}(\wtshort)}{4}$. 
	For any tensor $\wtshort$, we consider a polyadic form or the so called tensor decomposition form, and denote the singular values as the amplitude of the components in the polyadic form. As shown in Section~\ref{appendix:sensitivity-tlda}, $|\sigma_i(\wtshort) - \sigma_i(\wtshort+\mytensor{E})|  \leq \|\mytensor{E}\| \le 1$, where $\mytensor{E}$ comes from adding or removing one data point. 

	
	\end{proof}

\section{J. Utility Proofs\label{app:utility}}
Before starting the utility proofs, we point out a number of things. Tensor decomposition outputs:$\bar{\mu}_i, \bar{\alpha}_i, i \in [k]$. Where, $\bar{\alpha}_i=\frac{2 \sqrt{(\alpha_0+1)\alpha_0}}{(\alpha_0+2)\sqrt{\alpha_i}}$. In order to recover the desired word topic vector $\mu$, we have to 'reverse whiten', i.e. $\mu_i = \frac{1}{\sqrt{\alpha^r_i}} (W^T)^\dagger\bar{\mu}_i$, where $\frac{1}{\sqrt{\ari{i}}}=\frac{(\alpha_0+2)}{2\sqrt{(\alpha_0+1)\alpha_0}} \bar{\alpha}_i$. We need to establish the distance between the non-differentially private output and the differentially private output, i.e. $\norm{\mu_i -\mu^{DP}_i} $. This can be upper bounded similar to \ref{save} by the following:
\begin{align}
\norm{\mu_i -\mu^{DP}_i} &\leq \frac{1}{\sqrt{\alpha^r_i}} \norm{(W^T)^\dagger} \norm{ \bar{\mu}_i -\bar{\mu}^{DP}_i}\nn  \\
&+ \frac{1}{\sqrt{\alpha^r_i}} \norm{W^\dagger-(W^{DP})^\dagger} + \norm{(W^{DP})^\dagger} |\frac{1}{\sqrt{\alpha^r_i}}-\frac{1}{\sqrt{\alpha^r_{i,DP}}}| 
\end{align} 
For this we frequently need to bound the following:
\begin{itemize}
    \item $\norm{\mbar{i}-\mbar{i}^{DP}}$
    \item $\norm{W^\dagger-(W^{DP})^\dagger}$
    \item $\norm{(W^{DP})^\dagger}$
    \item $|\frac{1}{\sqrt{\alpha^r_i}}-\frac{1}{\sqrt{\alpha^r_{i,DP}}}|$
    \item $|\abar{i}-\abar{i}^{DP}|$
\end{itemize}

We point out the following facts before preceding:
\begin{itemize}
\item $|\frac{1}{\sqrt{\alpha^r_i}}-\frac{1}{\sqrt{\alpha^r_{i,DP}}}| \leq |\frac{(\alpha_0+2)}{2\sqrt{(\alpha_0+1)\alpha_0}} \abar{i}- \frac{(\alpha_0+2)}{2\sqrt{(\alpha_0+1)\alpha_0}} \abar{i}^{DP} | \leq \frac{(\alpha_0+2)}{2\sqrt{(\alpha_0+1)\alpha_0}} |\abar{i}-\abar{i}^{DP}|$
\item $\norm{(W^T)^\dagger} \leq \sqrt{\sgm{1}}$
\item $\frac{1}{\sqrt{\alpha^r_i}} = \frac{(\alpha_0+2)}{2\sqrt{(\alpha_0+1)\alpha_0}} \abar{i} \leq \frac{(\alpha_0+2)}{2\sqrt{(\alpha_0+1)\alpha_0}} \sigma_1(\wtshort) $
\end{itemize}

\subsection{Perturbation on $\slda$ , $\tlda$  Config. 1 ($e_3,e_4,e_8$): Proof for Theorem~\ref{thm:utility-config1}}\label{sec:appendix-utility-config1}
Similar to the perturbation on ($e_6,e_8$). We have that
\begin{align}
\norm{W^\dagger-(W^{DP})^\dagger} \leq \frac{\sqrt{\sgm{1}} \norm{E_{8,G}}}{\sgm{k}}\\
\norm{(W^{DP})^\dagger} \leq \sqrt{\sgm{1}+\norm{E_{8,G}}} 
\end{align}
Now the perturbed tensor can be represented as $\tlda^{DP} = \tlda + E_{3,G}$, where $E_{3,G}$ is symmetric Gaussian noise that has been added to the original tensor. Similar to the sensitivity analysis for the whitened tensor, we have that the error $\Phi$ can be bounded as follows:
\begin{align}
\norm{\Phi}_2 &= \norm{\wt - \tlda^{DP}(W^{DP},W^{DP},W^{DP})}_2 \\ 
& \leq \norm{\tlda} \norm{W-W^{DP}}^3 + \norm{E_{3,G}} \norm{W^{DP}} 
\end{align}
Following an analysis similar to bounding  $\norm{W^\dagger-(W^{DP})^\dagger}$, we get that $\norm{W^\dagger-(W^{DP})^\dagger} \leq \frac{\norm{E_{8,G}}}{\sgm{k} \sqrt{\frac{\sgm{k}} {2}}}$.
According to \ref{gtb} we have that with high probability $\norm{E_{3,G}} = O(\sqrt{d} \stlda \tdp{3} )$. We note the following $\norm{\mbar{i}-\mbar{i}^{DP}}_2 \leq \frac{2\sqrt{k} \norm{\Phi}}{\ds}$ using the simultaneous power iteration of \cite{wang2017tensor}. Similarly we have $|\abar{i}-\abar{i}^{DP}| \leq \frac{2\sqrt{k} \norm{\Phi}}{\ds}$ and that $|\frac{1}{\sqrt{\alpha^r_i}}-\frac{1}{\sqrt{\alpha^r_{i,DP}}}| \leq \ax \frac{2\sqrt{k} \norm{\Phi}}{\ds}$. This leads to  $ \norm{\mu_i-\mu_i^{DP}}_2 \leq \ax \sigma_1(\wtshort) \sqrt{\sgm{1}} \frac{2\sqrt{k} \norm{\Phi}}{\ds} + \ax \sigma_1(\wtshort) \frac{\sqrt{\sgm{1}}}{\sgm{k}} \norm{E_{8,G}} + \sqrt{\sgm{1}+\norm{E_{8,G}}} \ax \frac{2\sqrt{k} \norm{\Phi}}{\ds}.$

Based on the bound on $\norm{\Phi}$ we have with high probability $ \norm{\mu_i-\mu_i^{DP}}_2 = O(\frac{\sqrt{\sgm{1}k}}{\ds} ((\frac{\sqrt{d}}{N \sgm{k}^{3/2}}{\tdp{4}})^3+ \frac{\sqrt{d}}{N \sgm{k}^{3/2}} \tdp{3})  + \frac{\sqrt{\sgm{1} d}}{\sgm{k} N} \tdp{8}+ \sqrt{\sgm{1}+\frac{\sqrt{d}}{N} \tdp{8}} \frac{\sqrt{k}}{\ds} \Big[ (\frac{\sqrt{d}}{N \sgm{k}} \tdp{4})^3 + \frac{\sqrt{d}}{N \sgm{k}^{3/2}} \tdp{3} \Big]).$

\subsection{Perturbation on $\wtshort$ and $\slda$ Config. 2($e_6,e_8$): Proof for Theorem~\ref{thm:utility-config2}}\label{sec:appendix-utility-config2}

This configuration has two properties: the noise level introduced is
low because the whitening step reduces the tensor dimension from
$\tlda \in \Rbb^{d\times d\times d}$ to $\wtshort = \wt \in \Rbb^{k\times k\times
  k}$.  However, even though the dimension of the tensor is reduced,
unless the whitening tensor (resulting from eigendecomposition
over $\slda$) is stable, the sensitivity of the whitened tensor is not
necessarily low.

Note that the sensitivity of $\slda$ falls with $\frac{1}{N}$
(Theorem \ref{m23}).  Therefore, we expect the sensitivity of $\wt$ to
drop with an increasing number of records. 
As Theorem \ref{swt} states, $\swt=O(\frac{k^{3/2}}{N
  \sigma^{3/2}_k(\slda)})$, if $\sslda \leq \sgs$. Thus, given the
spectral gap requirement, the sensitivity of the whitened tensor is
$\swt$.

$\slda$ is used to generate both the whitening and unwhitening matrix, and unlike input perturbation, the sensitivity
over $\slda$ and $\tlda$
falls as the dataset size increases (Theorem \ref{m23}). 
However, an issue with this configuration is that adding noise to $\tlda$ leads to higher
noise build up prior to the tensor decomposition. Note that by (\ref{gtb}) w.h.p the norm of the error is $O(\sqrt{d} \sigma)$, with $\sigma$ being the variance of the noise (this bound would be $\sqrt{k} \sigma$ if the noise is added to a symmetric tensor of size $k$). Tensor
decomposition methods, in particular \cite{wang2017tensor} require the
spectral norm of the {perturbation to the tensor} to be lower than a certain
threshold. 
Following arguments similar to \cite{wang2016online}, the spectral norm of the error is $O(\frac{\sqrt{d}}{N\epsilon_{3}})$ and should be below $\frac{\sqrt{k}}{\ds \sigma_k(\wtshort) }$. Thus $\epsilon_{3}$ should satisfy  $\epsilon_{3}=\Omega(\frac{\sqrt{kd}}{\ds \sigma_k(\wtshort) N})$  to establish utility guarantees for tensor decomposition. Following similar arguments, this time using the bound on the spectral norm of the noisy matrices, to guarantee utility, the 
differentially private whitening $W$ and pseudo-inverse $W^{\dagger}$ should be close to their non-differentially private values,
 which requires both
$\epsilon_{4}$ and $\epsilon_{8}$ to be
$\Omega(\frac{\sqrt{d}}{(\sgs N)})$. 
Although, the privacy parameters have a lower bound of $\sqrt{d}$, the bound also falls with $\frac{1}{N}$. 

The spectral norm of the noise added to $\slda$ can be bounded by \ref{gmb} to be $O(\frac{\sqrt{d}}{N}\tdp{8})$ with high probability. Now, if we have $N=\Omega(\frac{\sqrt{d}\tdp{8}}{\sgm{k}-\sgm{k+1}})$, then with w.h.p we have that $\norm{E_{8,G}} \leq \frac{\sgm{k}-\sgm{k+1}}{2}$, where $\norm{E_{8,G}}$ is the spectral norm of the Gaussian matrix. This condition enables us to bound  $\norm{W^\dagger-(W^{DP})^\dagger}$, in a manner similar to establishing the bounds between $\norm{W-W'}$ in \ref{wdist}. Following a similar analysis, given that
\begin{align}
W^T (\slda)_{k} W &= I, \\
W^{T,DP} (\slda+E_{8,G})_{k} W^{DP} &= I,\\
W^T (\slda+E_{8,G})_k W & =ADA^T,
\end{align}
we have that 
$\norm{W^\dagger-(W^{DP})^\dagger} \leq \norm{W^\dagger} \norm{I-D}$. We know that $\norm{W^\dagger} \leq \frac{1}{\sqrt{\sgm{k}}}$ and  $\norm{I-D}$ can be bounded as follows:
\begin{align*}
\norm{I-D} & \leq \norm{I-ADA^T} \\ & \leq \norm{W^T (\slda)_{k} W - W^T (\slda+E_{8,G})_k W} \\ & \leq \norm{W}^2 \norm{(\slda)_{k}-(\slda+E_{8,G})_{k}} \\ & \leq \norm{W}^2 \norm{E_{8,G}} \\ & \leq \frac{\norm{E_{8,G}}}{\sgm{k}} 
\end{align*}
This leads to
$\norm{W^\dagger-(W^{DP})^\dagger} \leq \frac{\sqrt{\sgm{1}} \norm{E_{8,G}}}{\sgm{k}}$.

Moreover, it is immediate by Weyl's theorem that $\norm{(W^{DP})^\dagger} \leq \sqrt{\sigma_1(\slda +E_{8,G})} \leq \sqrt{\sgm{1}+\norm{E_{8,G}}}$. 

Finally, by the results of simultaneous power iteration (with an argument similar to Theorem \ref{svbw}),  if $N$ is sufficiently large, we have that $\norm{\mbar{i}-\mbar{i}^{DP}} \leq \frac{2\sqrt{k}\norm{E_{6,G}}}{\ds}$  where $E_{6,G}$ is the Gaussian tensor added to the whitened tensor $\swt$. An identical bound is established for the eigenvalues, i.e. $|\abar{i}-\abar{i}^{DP}|\leq \frac{2\sqrt{k}\norm{E_{6,G}}}{\ds}$. 

Now we can state the utility:
\begin{align}
&\norm{\mu_i -\mu^{DP}_i}  \leq   \ax \sigma_1(\wtshort) \sqrt{\sgm{1}} \frac{2\sqrt{k}\norm{E_{6,G}}}{\ds}  \nn \\ 
&+ \ax \sigma_1(\wtshort) \frac{\sqrt{\sgm{1}}}{\sgm{k}} \norm{E_{8,G}}\nn \\
&+  \ax \sqrt{\sgm{1}+\norm{E_{8,G}}}  \frac{2 \sqrt{k} \norm{E_{6,G}}}{\ds}
\end{align}
We note that w.h.p we have the following bounds on spectral norms of noisy Gaussian matrix and noisy Gaussian tensor. In particular, $\norm{E_{6,G}}=O(\frac{k^2}{N \tilde{\sigma}_k^{3/2}} \tdp{6})$ and $\norm{E_{8,G}} = O(\frac{\sqrt{d}}{N} \tdp{8})$. This leads to the following utility
\begin{equation*}
\norm{\mu_i -\mu^{DP}_i} = O(\frac{\sqrt{\sigma_1(\slda) k^{2.5}}}{\ds N \tilde{\sigma}_k^{3/2}} \tdp{6} + \frac{\sqrt{\sigma_1(\slda) d}}{\sk N} \tdp{8} + \sqrt{\sgm{1} + \frac{\sqrt{d}}{N} \tdp{8}} \frac{k^{2.5} \tdp{6}}{\ds N \tilde{\sigma}_k^{3/2}}).
\end{equation*}

\subsection{Perturbation on the output of tensor decomposition $\bar{\mu_i}$,$\bar{\alpha_i}$ and $\slda$ Config. 3 ($e_7,e_8$): Proof for Theorem~\ref{thm:utility-config3}}\label{sec:appendix-utility-config3}
This configuration shares edge 8 with the previous. This enables us to borrow the same bounds for the pseudo-inverse $W^\dagger$. Specifically, we have:
\begin{align*}
\norm{W^\dagger-(W^{DP})^\dagger} & \leq \frac{\sqrt{\sgm{1}} \norm{E_{8,G}}}{\sgm{k}}\\
\norm{(W^{DP})^\dagger} & \leq \sqrt{\sgm{1}+\norm{E_{8,G}}}
\end{align*}
In this method, noise is added directly to the eigenvectors and eigenvalues resulting from the tensor decomposition. Therefore, we have: 
\begin{align*}
\mbar{i}^{DP} &= \mbar{i} + Y,   & Y \sim  \mathcal{N}(0,\Delta^2_{\epsilon,\delta} I_k)\\
\abar{i}^{DP} &= \abar{i} + n_i, & n_i \sim  \mathcal{N}(0,\Delta^2_{\epsilon,\delta})
\end{align*}
where $\Delta_{\epsilon,\delta} = \frac{\sqrt{2k} \swt}{\ds} \tdp{7}$ with $\tdp{7} = \frac{\sqrt{2ln(1.25/\delta_7)}}{\epsilon_7}$. This leads to the following bound:
\begin{align*}
\norm{\mu_i-\mu_i^{DP}} \leq  & \ax \sigma_1(\wtshort) \sqrt{\sgm{1}} \norm{Y} + 
\\ &\ax \sigma_1(\wtshort) \frac{\sqrt{\sgm{1}}}{\sgm{k}} \norm{E_{8,G}} +  \\ 
&\ax \sqrt{\sgm{1}+\norm{E_{8,G}}} |n_i|
\end{align*} 
As before w.h.p $\norm{E_{6,G}}=O(\frac{\sqrt{d}}{N} \tdp{6})$.The following bounds hold on $\norm{Y}$ and $|n_i|$, because they are a Gaussian vector and variable. 
In particular, w.h.p. $\norm{Y}=O(\frac{k^{5/2}}{N \tilde{\sigma}_k^{3/2} \tilde{\ds}}\tdp{7})$ and $|n_i|=O(\frac{k^2}{N \tilde{\sigma}_k^{3/2} \tilde{\ds}}\tdp{7})$. 
This leads to the following utility: 
$O(\frac{\sqrt{\sigma_1(\slda) k^{2.5}}}{\tilde{\ds} N \tilde{\sigma}_k^{3/2}} \tdp{7} + \frac{\sqrt{\sigma_1(\slda) d}}{\sk N} \tdp{8} + \sqrt{\sgm{1} + \frac{\sqrt{d}}{N} \tdp{8}} \frac{k^{2} \tdp{7}}{\tilde{\ds} N \tilde{\sigma}_k^{3/2}})$. 


\subsection{Perturbation on the final output $\mu_i$, $\alpha_i$ Config. 4 ($e9$): Proof for Theorem~\ref{thm:utility-config4}}\label{sec:appendix-utility-config4}
In this configuration, we add noise proportional to the output's sensitive
\begin{equation*}
\mu_{i}^{DP} = \mu_{i} + Z, \text{ where }Z \sim  \mathcal{N}(0,\Delta^2_{\epsilon,\delta} I_k)\end{equation*}
where $\Delta_{\epsilon,\delta} = \sv \tdp{9}$, with $\tdp{9} = \frac{\sqrt{2ln(1.25/\delta_9)}}{\epsilon_9}$. 
Similar to the previous analysis, since $Z$ is Gaussian, then w.h.p. $\norm{Z}=O(\frac{\sqrt{d\sgm{1}} k^2}{N \tilde{\ds} \tilde{\sigma}_k^{3/2}})$. We have the utility $O(\frac{\sqrt{\sgm{1}d}k^2}{N \tilde{\ds}  \tilde{\sigma}_k^{3/2}} \tdp{9})$.


\section{K. Evaluation}\label{sec:eval_app}
Recall the grid search process used to choose the best configuration for a particular $\epsilon$. For a particular edge set, we vary the $\epsilon$ parameter across each edge (making sure they still sum to the same composite $\epsilon$) and choosing the combination that yields the maximum likelihood. We do this across every possible configuration, and choose the configuration which yields the highest likelihood to the original output. However, this approach leaks the data likelihood. In order to make the max likelihood computation differentially private, we perturb the sufficient statistics of the data. The sufficient statistic necessarily contains all information to compute the likelihood estimate, so having a differentially private sufficient statistic would guarantee a differentially private likelihood computation. 

Consider the sufficient statistics matrix $S$, which is $K\times d$ (for k topics and a vocabulary of size d). Then, for each entry $s_k^v$, the sensitivity $\triangle_{s_k^v}$ is bounded above by $\frac{d}{N}$. Then, using the gaussian mechanism, we can add noise to each entry: $\overline{s}_k^v$ = $s_k^v + G(\triangle_{s_k^v}, \epsilon, \delta)$, where $G$ denotes the gaussian mechanism. Then $S$ is $(\epsilon, \delta)$ differentially private, and can be used to compute differentially private likelihoods.


\section{L. Experiments on Wikipedia Dataset}\label{sec:wiki-result}

\begin{table}[H]
\small{
\begin{tabular}{c|l}
Topic \# & Top Words\\
\hline
1 & 'air', 'th', 'force', 'squadron', 'special', 'operations', 'test', 'august', 'december', 'mission' \\
\hline
2 & 'music', 'composition', 'kaufmann', 'works', 'dieter', 'vienna', 'acoustic', 'electro', 'des', 'president' \\
\hline
3 & 'river', 'barnstaple', 'yeo', 'taw', 'flooding', 'tributary', 'rivers', 'south', 'devon', 'flows'
\end{tabular}
}
\caption{Top words in Wikipedia Dataset recovered from our \textsf{DP} algorithm with $\epsilon = 1$. Note that the topics shown here are not cherry picked but randomly selected.}
\end{table}
%
%
%
%
%
%


\section{M. Some Useful Identities and Theorems}\label{sec:thms}
\begin{identity} [Square of Sum] \label{I1}
\begin{align*}
\Big( \sum_{i=1}^{N} a_i \Big)^2 = \sum_{i=1}^{N} a_i^2 + \sum_{\substack{i=1,j=1 \\ i\neq j}}^{N,N} a_i a_j 
\end{align*}
\end{identity}

\begin{identity} [Cube of Sum] \label{I2}
\begin{align*}
\Big( \sum_{i=1}^{N} a_i \Big)^3 = \sum_{i=1}^{N} a_i^3 + 3\sum_{\substack{i=1,j=1 \\ i\neq j}}^{N,N} a_i^2 a_j + \sum_{\substack{i=1,j=1,k=1 \\ i \neq j,j \neq k,k \neq i}}^{N,N,N} a_j a_j a_k 
\end{align*}
\end{identity}

\begin{theorem} [Weyl's theorem; Theorem 4.11, p. 204 in \cite{Stewart90matrixperturbation}]. Let $A, E$ be given $m \times n$ matrices with $m \geq n$, then
\begin{align*}
\max_{i \in [n]} |\sigma_i(A)-\sigma_i(A+E)| \leq \norm{E}_2
\end{align*}
\end{theorem}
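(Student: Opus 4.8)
The plan is to prove Weyl's inequality for singular values through the Courant–Fischer min–max variational characterization, which reduces the statement to a single uniform application of the triangle inequality on the associated quadratic forms. First I would recall that for any $m \times n$ matrix $A$ with $m \geq n$, the matrix $A^\top A$ is symmetric positive semidefinite with eigenvalues $\sigma_1(A)^2 \geq \cdots \geq \sigma_n(A)^2$. Applying the classical Courant–Fischer theorem to $A^\top A$, using the identity $x^\top A^\top A x = \norm{Ax}_2^2$ together with the nonnegativity of singular values, yields the characterization
\begin{equation*}
\sigma_i(A) = \max_{\substack{S \subseteq \mathbb{R}^n \\ \dim S = i}} \ \min_{\substack{x \in S \\ \norm{x}_2 = 1}} \norm{Ax}_2 .
\end{equation*}

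Next I would exploit this formula to transfer the perturbation bound from matrices to unit vectors. For any unit vector $x$, the triangle inequality gives $\big| \norm{(A+E)x}_2 - \norm{Ax}_2 \big| \leq \norm{Ex}_2 \leq \norm{E}_2$, where the last step is the definition of the operator norm. Fixing the subspace $S^\star$ of dimension $i$ that attains the maximum defining $\sigma_i(A)$, I would estimate
\begin{equation*}
\sigma_i(A+E) \geq \min_{\substack{x \in S^\star \\ \norm{x}_2 = 1}} \norm{(A+E)x}_2 \geq \min_{\substack{x \in S^\star \\ \norm{x}_2 = 1}} \norm{Ax}_2 - \norm{E}_2 = \sigma_i(A) - \norm{E}_2 .
\end{equation*}

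Finally I would close the argument by symmetry. Writing $A = (A+E) + (-E)$ and repeating the previous step with the roles of $A$ and $A+E$ exchanged, and noting $\norm{-E}_2 = \norm{E}_2$, gives $\sigma_i(A) \geq \sigma_i(A+E) - \norm{E}_2$. Combining the two inequalities yields $|\sigma_i(A) - \sigma_i(A+E)| \leq \norm{E}_2$ for every index $i \in [n]$, and taking the maximum over $i$ completes the proof. The only genuinely delicate point is establishing the min–max characterization with the correct indexing convention, so that the $i$-th largest singular value corresponds to maximizing over $i$-dimensional subspaces; once that is in place, the perturbation estimate is an immediate and uniform consequence of the triangle inequality, and I do not anticipate any substantive obstacle.
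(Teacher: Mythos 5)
Your proposal is correct, but there is nothing in the paper to compare it against: the paper states this Weyl-type bound as an imported tool, citing Theorem~4.11 of Stewart and Sun's matrix perturbation book, and never proves it. It is used as a black box in the sensitivity analyses (e.g., in bounding $\norm{\nw}$ in Lemma~\ref{wdist} and in the proof of Lemma~\ref{lm:sensitivity-singularvalue-gap}). Your argument is the standard textbook proof and all steps are sound: the Courant--Fischer characterization $\sigma_i(A) = \max_{\dim S = i} \min_{x \in S, \|x\|_2 = 1} \|Ax\|_2$ follows from applying min--max to $A^\top A$ and using monotonicity of the square root; the uniform bound $\bigl| \|(A+E)x\|_2 - \|Ax\|_2 \bigr| \leq \|E\|_2$ on unit vectors is the reverse triangle inequality plus the definition of the operator norm; the interchange $\min_x \bigl( \|Ax\|_2 - \|E\|_2 \bigr) = \min_x \|Ax\|_2 - \|E\|_2$ is legitimate since $\|E\|_2$ is a constant; and the symmetric application to $A = (A+E) + (-E)$ closes the two-sided bound. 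The maximizing subspace $S^\star$ exists (it is spanned by the top $i$ right singular vectors of $A$), so fixing it is harmless. In short, you have supplied a complete, self-contained proof of a fact the paper only cites, using the classical variational approach; Stewart and Sun's own derivation is essentially the same min--max argument, so your route is also the canonical one.
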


\begin{theorem}[Wedin's theorem; Theorem 4.11, p. 204 in \cite{Stewart90matrixperturbation}] 
Let $A, E \in \Rbb^{m,n}$ with $m \ge n$ and $\hat{A}=A+E$, the following be the singular value decomposition of $A$
$ \begin{bmatrix}
        U^T_1 \\   
        U^T_2 \\
        U^T_3 
\end{bmatrix}     A \begin{bmatrix}
        V_1 & V_2
\end{bmatrix}. = \begin{bmatrix}
        \Sigma_1 & 0\\   
        0 & \Sigma_2\\
        0 & 0  
\end{bmatrix}.$ 
 Let $\hat{A}$ have a similar decomposition, with $(\hat{U_1},\hat{U_2},\hat{U_3}, \hat{\Sigma_1},\hat{\Sigma_2},\hat{V_1},\hat{V_2})$. And let $\Phi$ is the matrix of canonical angels between range($U_1$) and range($\hat{U_1}$) and $\Theta$ is the matrix of canonical angels between range($V_1$) and range($\hat{V_1}$). If there exists an $\delta, \alpha >0$ such that $min_i \sigma_i(\hat{\Sigma_1}) \geq \alpha + \delta $ and $max_i \sigma_i(\Sigma_2)$, then
\begin{align*} 
max({\norm{\Phi}_2,\norm{\Theta}_2}) \leq \frac{\norm{E}_2}{\delta}.
\end{align*}
\end{theorem}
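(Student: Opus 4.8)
The plan is to prove this as the classical Wedin $\sin\Theta$ theorem, working through residuals and a coupled Sylvester equation. Write the thin SVDs as $A = U_1\Sigma_1 V_1^\top + U_2\Sigma_2 V_2^\top$ and $\hat A = \hat U_1\hat\Sigma_1\hat V_1^\top + \hat U_2\hat\Sigma_2\hat V_2^\top$. The first thing I would record is that, because $U_2$ and $V_2$ span the orthogonal complements of $\mathrm{range}(U_1)$ and $\mathrm{range}(V_1)$, the sines of the canonical angles satisfy $\norm{\Phi}_2 = \norm{U_2^\top\hat U_1}_2$ and $\norm{\Theta}_2 = \norm{V_2^\top\hat V_1}_2$. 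Hence it suffices to bound $P := U_2^\top\hat U_1$ and $Q := V_2^\top\hat V_1$ in spectral norm.

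Next I would form the residuals. From $\hat A\hat V_1 = \hat U_1\hat\Sigma_1$ together with $\hat A = A + E$ one obtains $A\hat V_1 = \hat U_1\hat\Sigma_1 - E\hat V_1$, and symmetrically $A^\top\hat U_1 = \hat V_1\hat\Sigma_1 - E^\top\hat U_1$. Left-multiplying these by $U_2^\top$ and $V_2^\top$ respectively, and using the identities $U_2^\top A = \Sigma_2 V_2^\top$ and $V_2^\top A^\top = \Sigma_2 U_2^\top$ (immediate from the SVD of $A$), I get the coupled pair
\begin{align*}
\Sigma_2 Q - P\hat\Sigma_1 &= -U_2^\top E\hat V_1,\\
\Sigma_2 P - Q\hat\Sigma_1 &= -V_2^\top E^\top\hat U_1.
\end{align*}
Both right-hand sides have spectral norm at most $\norm{E}_2$, since $U_2,V_2,\hat U_1,\hat V_1$ have orthonormal columns.

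The crux is to invert this coupled system with operator norm $1/\delta$, and here the separation hypothesis $\min_i\sigma_i(\hat\Sigma_1)\ge\alpha+\delta$, $\max_i\sigma_i(\Sigma_2)\le\alpha$ enters. Diagonalizing $\Sigma_2$ and $\hat\Sigma_1$, each entry obeys a scalar $2\times 2$ system $aq - bp = r$, $ap - bq = s$ with $a\le\alpha$ and $b\ge\alpha+\delta$; solving gives $p = (br+as)/(a^2-b^2)$ and $q = (ar+bs)/(a^2-b^2)$, and since $|a^2-b^2| = (b-a)(a+b)$ with $b-a\ge\delta$ the factor $(a+b)$ cancels, yielding $\max(|p|,|q|)\le\max(|r|,|s|)/\delta$. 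Aggregating these entrywise bounds back into the spectral norm is the delicate step and where I expect the real work to lie: naive stacking of the two equations costs a spurious $\sqrt 2$, so to land exactly at $\norm{E}_2/\delta$ I would instead pass to the symmetric Jordan--Wielandt dilation $\tilde A = \begin{bmatrix} 0 & A\\ A^\top & 0\end{bmatrix}$, whose eigenvalues are $\pm\sigma_i$ and whose invariant subspace for the group-$1$ eigenvalues encodes both left and right singular vectors. The singular-value gap $\delta$ becomes an eigenvalue gap for $\tilde A$ (the negative branch is separated even farther), so the symmetric Davis--Kahan $\sin\Theta$ theorem applies and gives $\max(\norm{\Phi}_2,\norm{\Theta}_2)\le\norm{\tilde E}_2/\delta = \norm{E}_2/\delta$, using $\norm{\tilde E}_2=\norm{E}_2$. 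The main obstacle is thus the careful bookkeeping that identifies the canonical angles of the dilated subspaces with $\Phi$ and $\Theta$ and confirms that the gap $\delta$ transfers exactly; the residual computation above then serves as the transparent, constant-loose version of the same estimate.
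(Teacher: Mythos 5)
The paper itself never proves this statement---it is imported verbatim from Stewart and Sun as a citation---so your proposal is judged against the classical proof of Wedin's $\sin\Theta$ theorem rather than against anything written in the appendix. Your opening steps are the classical ones and are essentially correct: the residual identities and the coupled system in $P=U_2^\top\hat U_1$, $Q=V_2^\top\hat V_1$ are exactly right, as is the scalar $2\times 2$ inversion with its constant-$1$ bound. The genuine gap is precisely the step to which you defer the ``real work.'' The Jordan--Wielandt dilation plus symmetric Davis--Kahan, in the form you describe (group $1$ = the \emph{positive} eigenvalues $\{+\hat\sigma_i\}_{i\le k}$, with ``the negative branch separated even farther,'' i.e.\ counted in the complement), bounds the canonical angles between $\mathcal{W}=\mathrm{span}\{(u_i;v_i)\}_{i\le k}$ and $\hat{\mathcal{W}}=\mathrm{span}\{(\hat u_i;\hat v_i)\}_{i\le k}$, where $(u;v)\in\Rbb^{m+n}$ denotes the stacked vector. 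But these dilated angles are \emph{not} $\Phi\cup\Theta$, and their sines can be strictly smaller than $\max(\norm{\Phi}_2,\norm{\Theta}_2)$, so bounding them proves nothing about the theorem's left-hand side. Concretely, take $k=1$, $A=\sigma e_1e_1^\top$, $\hat A=\hat\sigma e_2e_1^\top$, so that $u_1=e_1$, $\hat u_1=e_2$, $v_1=\hat v_1=e_1$, $\alpha=0$, $\delta=\hat\sigma$: then $\norm{\sin\Phi}_2=1$ and $\norm{\sin\Theta}_2=0$, yet $\mathcal{W}=\mathrm{span}\{(e_1;e_1)\}$ and $\hat{\mathcal{W}}=\mathrm{span}\{(e_2;e_1)\}$ meet with cosine $\tfrac{1}{2}$, i.e.\ sine $\sqrt{3}/2<1$. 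So the inequality your bookkeeping would need---max of the individual sines bounded by the dilated sine---is false even for genuine SVD pairs satisfying the gap hypothesis; the positive-branch dilation can at best recover the theorem with an extra constant factor, never the constant-$1$ statement you are proving.

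Two further points. First, a smaller gap: since $m\ge n$, the orthogonal complement of $\mathrm{range}(U_1)$ is $\mathrm{range}([U_2\ U_3])$, so $\norm{\sin\Phi}_2=\norm{[U_2\ U_3]^\top\hat U_1}_2$; your identity $\norm{\Phi}_2=\norm{U_2^\top\hat U_1}_2$ silently drops the left-null-space block $U_3$ that the statement explicitly carries. Second, the dilation idea is salvageable, but only by \emph{doubling} the subspace rather than taking the positive branch: set $\mathcal{S}=\mathrm{span}\{(u_i;v_i)\}_{i\le k}\oplus\mathrm{span}\{(u_i;-v_i)\}_{i\le k}$ and analogously $\hat{\mathcal{S}}$. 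Then $P_{\mathcal{S}}=\mathrm{diag}(U_1U_1^\top,\,V_1V_1^\top)$, so $(I-P_{\mathcal{S}})P_{\hat{\mathcal{S}}}$ is block diagonal with blocks $(I-U_1U_1^\top)\hat U_1\hat U_1^\top$ and $(I-V_1V_1^\top)\hat V_1\hat V_1^\top$, and its spectral norm is \emph{exactly} $\max(\norm{\sin\Phi}_2,\norm{\sin\Theta}_2)$ (this also absorbs $U_3$ automatically). The spectrum of $\tilde A$ on $\mathcal{S}^\perp$ lies in $[-\alpha,\alpha]$ while that of $\hat{\tilde A}$ on $\hat{\mathcal{S}}$ lies outside $(-\alpha-\delta,\alpha+\delta)$, so the interval (``inside vs.\ outside'') form of the Davis--Kahan $\sin\Theta$ theorem gives the bound $\norm{E}_2/\delta$ with constant $1$. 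Alternatively, stay with your coupled system, encode it as a single Sylvester equation with block-antidiagonal normal coefficients (padding $\Sigma_2$ with zero rows), and invoke the solvability bound for annularly separated spectra due to Bhatia--Davis--McIntosh; that is the ingredient your entrywise computation gestures at, and it is what Stewart--Sun's proof packages.
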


\begin{theorem}[Bound on the norm of a Gaussian Random Variable] 
Let $n$ be a Gaussian $\mathcal{N}(0,\sigma)$. Then $\Pbb[|n|\leq t] \ge 1-2e^{\frac{-t^2}{2\sigma^2}}$
\end{theorem}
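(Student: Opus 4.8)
The plan is to prove the equivalent two-sided tail bound $\Pbb[|n| > t] \le 2e^{-t^2/(2\sigma^2)}$, from which the stated inequality follows immediately by taking complements. First I would exploit the symmetry of the centered Gaussian: since $n$ and $-n$ are identically distributed, $\Pbb[|n| > t] = \Pbb[n > t] + \Pbb[n < -t] = 2\,\Pbb[n > t]$, so it suffices to establish the one-sided bound $\Pbb[n > t] \le e^{-t^2/(2\sigma^2)}$.

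For the one-sided bound I would use the standard Chernoff / moment-generating-function argument. For any $\lambda > 0$, Markov's inequality applied to the nonnegative random variable $e^{\lambda n}$ gives $\Pbb[n > t] = \Pbb[e^{\lambda n} > e^{\lambda t}] \le e^{-\lambda t}\,\Ebb[e^{\lambda n}]$. Plugging in the Gaussian moment generating function $\Ebb[e^{\lambda n}] = e^{\lambda^2 \sigma^2/2}$ yields $\Pbb[n > t] \le \exp(\lambda^2\sigma^2/2 - \lambda t)$. The exponent is a convex quadratic in $\lambda$, minimized at $\lambda^\star = t/\sigma^2$; substituting $\lambda^\star$ gives exponent $-t^2/(2\sigma^2)$, hence $\Pbb[n > t] \le e^{-t^2/(2\sigma^2)}$.

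Combining the two steps, $\Pbb[|n| > t] = 2\,\Pbb[n > t] \le 2e^{-t^2/(2\sigma^2)}$, and since $\{|n| \le t\}$ and $\{|n| > t\}$ partition the sample space, $\Pbb[|n| \le t] = 1 - \Pbb[|n| > t] \ge 1 - 2e^{-t^2/(2\sigma^2)}$, as claimed. There is no genuine obstacle here: the only input beyond symmetry is the closed form of the Gaussian MGF, which is elementary, and the optimization over $\lambda$ is a one-line completion of the square. The one subtlety worth flagging is the parameterization $\mathcal{N}(0,\sigma)$ in the statement: the exponent $-t^2/(2\sigma^2)$ is consistent only if $\sigma$ denotes the standard deviation (so the variance is $\sigma^2$ and the MGF is $e^{\lambda^2\sigma^2/2}$), which is the reading I assume throughout.
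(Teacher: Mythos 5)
Your proof is correct. The paper states this bound without proof, listing it among standard facts, so there is no competing argument to compare against; your symmetrization-plus-Chernoff (MGF) derivation is the canonical one and yields exactly the stated constant, and your reading of $\mathcal{N}(0,\sigma)$ with $\sigma$ as the standard deviation is indeed the only parameterization consistent with the exponent $-t^2/(2\sigma^2)$.
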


\begin{theorem}[Bound on the norm of a Gaussian Vector] 
Let $Y \sim \mathcal{N}(0,\sigma I_k)$, then $\Pbb[\norm{Y}_2^2 \ge \sigma^2(k+2\sqrt{kt}+2t) ] \le e^{-t}$.
\end{theorem}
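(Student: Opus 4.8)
The plan is to recognize this as the standard chi-squared (Laurent--Massart) tail bound and prove it by the Chernoff / moment-generating-function method. First I would normalize: under the paper's convention each coordinate $Y_i$ has standard deviation $\sigma$, so $X_i := Y_i/\sigma$ are i.i.d.\ standard normals and $Z := \norm{Y}_2^2/\sigma^2 = \sum_{i=1}^k X_i^2$ is chi-squared with $k$ degrees of freedom. The event $\{\norm{Y}_2^2 \ge \sigma^2(k + 2\sqrt{kt} + 2t)\}$ is then exactly $\{Z - k \ge u\}$ with $u := 2\sqrt{kt} + 2t$, so it suffices to bound $\Pbb[W \ge u]$ for the centered variable $W := Z - k$.

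Next I would compute the moment generating function of a single centered term. For $0 \le \lambda < 1/2$ and $X$ standard normal one has $\Ebb[e^{\lambda(X^2-1)}] = e^{-\lambda}(1-2\lambda)^{-1/2}$, so by independence
\[
\log \Ebb[e^{\lambda W}] = k\Big(-\lambda - \frac{1}{2}\log(1-2\lambda)\Big).
\]
Applying Markov's inequality to $e^{\lambda W}$ then gives, for every $\lambda \in [0,1/2)$,
\[
\Pbb[W \ge u] \le \exp\Big(-\lambda u + k\big(-\lambda - \tfrac{1}{2}\log(1-2\lambda)\big)\Big).
\]
Minimizing the exponent exactly over $\lambda$ (setting the derivative $-u-k+k/(1-2\lambda)$ to zero) yields the optimizer $\lambda^\ast = \frac{u}{2(u+k)} \in [0,1/2)$, and substituting it back collapses the bound to
\[
\Pbb[W \ge u] \le \exp\Big(-\frac{u}{2} + \frac{k}{2}\log\!\big(1 + u/k\big)\Big).
\]

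The only delicate step is verifying that this exponent is at most $-t$ for the prescribed $u = 2\sqrt{kt}+2t$. The clean route is the substitution $s := \sqrt{t/k}$, under which $u/k = 2s + 2s^2$ and $\frac{u}{2} - t = ks$; after dividing by $k$ the required inequality $\frac{k}{2}\log(1+u/k) \le \frac{u}{2} - t$ reduces to the elementary statement
\[
1 + 2s + 2s^2 \le e^{2s}, \qquad s \ge 0,
\]
which holds because $e^{2s} = 1 + 2s + 2s^2 + \frac{4}{3}s^3 + \cdots$ dominates the left-hand side term by term. This gives $\Pbb[W \ge u] \le e^{-t}$, which is the claim.

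I expect the main obstacle to be precisely this last reduction: the exact Chernoff exponent $-\frac{u}{2} + \frac{k}{2}\log(1+u/k)$ is not visibly equal to $-t$, and a crude Taylor bound on $\log(1+x)$ is not tight enough to recover the stated constants. The substitution $s=\sqrt{t/k}$ is what trivializes the comparison. An alternative that sidesteps the exact optimization is to use the bound $-\lambda - \frac{1}{2}\log(1-2\lambda) \le \frac{\lambda^2}{1-2\lambda}$ and evaluate at a suitably chosen suboptimal $\lambda$, but the exact-optimization route above produces the constants $2\sqrt{kt}$ and $2t$ with no slack.
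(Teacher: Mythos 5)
Your proof is correct, but it takes a different route from the paper. The paper does not prove this bound at all from first principles: its entire proof is a one-line citation to Theorem~2.1 of Hsu, Kakade and Zhang's tail inequality for quadratic forms of subgaussian vectors, instantiated with $A=I$ and $\mu=0$ (under which $\mathrm{tr}(\Sigma)=k$, $\mathrm{tr}(\Sigma^2)=k$, $\|\Sigma\|=1$, recovering exactly $k+2\sqrt{kt}+2t$). You instead give a self-contained Chernoff/MGF derivation -- essentially the original Laurent--Massart argument -- and every step checks out: the normalization to a $\chi^2_k$ variable is the right reading of the paper's (admittedly inconsistent) $\mathcal{N}(0,\sigma I_k)$ notation, since the conclusion scales as $\sigma^2$; the MGF $\Ebb[e^{\lambda(X^2-1)}]=e^{-\lambda}(1-2\lambda)^{-1/2}$ is right; the optimizer $\lambda^\ast=\frac{u}{2(u+k)}$ and the collapsed exponent $-\frac{u}{2}+\frac{k}{2}\log(1+u/k)$ are computed correctly; and the substitution $s=\sqrt{t/k}$, which turns the final comparison into $1+2s+2s^2\le e^{2s}$, is exactly the device needed to recover the constants $2\sqrt{kt}$ and $2t$ with no slack. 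What the paper's approach buys is brevity and generality (the cited result covers arbitrary quadratic forms $\|Ax\|^2$ of subgaussian vectors, at the cost of trusting a black box); what yours buys is a fully elementary, verifiable proof that also explains where the specific form $k+2\sqrt{kt}+2t$ comes from. Either would be acceptable here.
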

\begin{proof}
The proof is immediate from Theorem 2.1 in \cite{hsu2012tail} with $A=I, \mu=0$. 
\end{proof}

\begin{theorem}[Bound on the spectral norm of a Gaussian Matrix \cite{tao2012topics}]\label{gmb}
Let $E \in \Rbb^{d\times d}$ be a symmetric Gaussian matrix with elements sampled iid from $\mathcal{N}(0,\sigma)$, then $\Pbb[\norm{E}_2 = O(\sqrt{d} \sigma)] \ge 1-negl(d)$. 
\end{theorem}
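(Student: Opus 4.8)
The plan is to control the operator norm through the variational identity $\norm{E}_2 = \sup_{\norm{u}_2 = 1} |u^\top E u|$, which holds because $E$ is symmetric, and then to combine a Gaussian tail bound in each fixed direction with an $\epsilon$-net discretization of the unit sphere and a union bound. This is the standard route for bounding the spectral norm of a Wigner-type (GOE-like) matrix, and it delivers the high-probability statement directly.

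First I would fix a unit vector $u$ on the sphere $S^{d-1}$ and study the scalar $u^\top E u = \sum_i E_{ii} u_i^2 + 2\sum_{i<j} E_{ij} u_i u_j$. Since the entries $\{E_{ij}\}_{i \le j}$ are independent centered Gaussians with variance $\sigma^2$, this quadratic form is itself a mean-zero Gaussian, and a short computation gives its variance as $\sigma^2(2 - \sum_i u_i^4) \le 2\sigma^2$, where I use the identity $\sum_{i \ne j} u_i^2 u_j^2 = 1 - \sum_i u_i^4$ for a unit vector. The standard Gaussian tail bound then yields $\Pbb[|u^\top E u| \ge t] \le 2\exp(-t^2/(4\sigma^2))$ for every fixed $u$.

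Next I would discretize the sphere: fix a $(1/4)$-net $\mathcal{C}$ of $S^{d-1}$ with cardinality $|\mathcal{C}| \le 9^d$ by the usual volumetric bound, and invoke the standard net lemma for symmetric matrices, $\norm{E}_2 \le 2 \max_{u \in \mathcal{C}} |u^\top E u|$. A union bound over $\mathcal{C}$ combined with the per-direction tail gives $\Pbb[\norm{E}_2 \ge 2t] \le 2 \cdot 9^d \exp(-t^2/(4\sigma^2))$. Choosing $t = C\sqrt{d}\,\sigma$ for a sufficiently large absolute constant $C$ (any $C \ge 5$ works) makes the exponent $d\ln 9 - C^2 d/4$ negative and linear in $d$, so the failure probability is at most $e^{-c d} = \mathrm{negl}(d)$ for some $c > 0$; on the complementary event $\norm{E}_2 \le 2C\sqrt{d}\,\sigma = O(\sqrt{d}\,\sigma)$, as claimed.

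The calculation is mostly routine, and the two places that need care are the variance computation for the quadratic form (keeping the diagonal and the doubled off-diagonal contributions straight so no symmetry-induced factor is lost) and the choice of net resolution so that the net cardinality $9^d$ is dominated by the Gaussian tail $\exp(-C^2 d/4)$. An alternative, even shorter route would be to note that $E \mapsto \norm{E}_2$ is $1$-Lipschitz in the Frobenius norm and apply Gaussian concentration of measure to get sharp concentration around $\Ebb\norm{E}_2$, then bound $\Ebb\norm{E}_2 = O(\sqrt{d}\,\sigma)$ by the same net argument or by a Slepian/Gordon comparison; I would present the net argument as the primary proof since it is entirely self-contained.
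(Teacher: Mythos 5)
The paper never proves this statement: Theorem~\ref{gmb} is quoted as a black box from the cited random-matrix text \cite{tao2012topics}, alongside the other imported results in Appendix~M (Weyl, Wedin, the Gaussian-vector and Gaussian-tensor bounds). So there is no ``paper proof'' to match yours against; what you have written is a self-contained derivation of a fact the paper only cites. Your argument is correct. The variance computation $\mathrm{Var}(u^\top E u) = \sigma^2\bigl(\textstyle\sum_i u_i^4 + 4\sum_{i<j}u_i^2u_j^2\bigr) = \sigma^2\bigl(2-\sum_i u_i^4\bigr) \le 2\sigma^2$ is right (you kept the factor $2$ on the off-diagonal terms and the identity $\sum_{i\ne j}u_i^2u_j^2 = 1-\sum_i u_i^4$ straight), the $(1/4)$-net cardinality bound $9^d$ and the symmetric-matrix net lemma $\|E\|_2 \le 2\max_{u\in\mathcal{C}}|u^\top E u|$ are the standard ones, and the exponent bookkeeping works: one needs $C^2/4 > \ln 9$, i.e.\ $C > 2\sqrt{\ln 9}\approx 2.97$, so your $C\ge 5$ is comfortably sufficient and the failure probability $2\exp\bigl(d(\ln 9 - C^2/4)\bigr)$ is exponentially small, hence negligible. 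You also silently resolved two ambiguities in the paper's statement in the only sensible way: a ``symmetric matrix with iid entries'' must mean the entries $\{E_{ij}\}_{i\le j}$ are iid with the rest determined by symmetry, and $\mathcal{N}(0,\sigma)$ must be read so that $\sigma$ is the standard deviation (otherwise the advertised $O(\sqrt{d}\,\sigma)$ scaling is wrong). The trade-off between your route and the paper's citation is the usual one: the textbook result rests on trace/moment methods that yield the sharp constant ($\|E\|_2 \sim 2\sigma\sqrt{d}$), whereas your net-plus-union-bound argument is elementary and self-contained but loses the constant --- which is immaterial here, since the paper only ever invokes the bound up to a constant factor inside $O(\cdot)$ in the utility proofs of Appendix~J.
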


\begin{theorem}[Bound on the spectral norm of a Gaussian Tensor \cite{tomioka2014spectral}]\label{gtb}
Let $E$ be a $K^{th}$ order tensor with each $E_{i_1,\dots,i_K}$ be sampled i.i.d. from a Gaussian $\mathcal{N}(0,\sigma)$, then $\Pbb[\norm{E}_2 \leq \sqrt{8 \sigma^2 (\sum_{i=1}^{K} d_i)\ln(2K/K_0) + \ln(2/\delta) }] \ge 1-\delta $, where $K_0=\ln(3/2)$. Note by extension the bound also holds if the tensor is symmetric as well. 
\end{theorem}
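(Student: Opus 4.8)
The plan is to use the variational characterization of the tensor spectral norm together with a covering-number (epsilon-net) argument against the Gaussian tail; this is the route taken in \cite{tomioka2014spectral}, and I would reproduce its skeleton. Recall that for a $K$th order tensor $E \in \Rbb^{d_1 \times \cdots \times d_K}$,
\[
\norm{E}_2 = \sup_{u^{(1)} \in S^{d_1-1}, \ldots, u^{(K)} \in S^{d_K-1}} E(u^{(1)}, \ldots, u^{(K)}),
\]
where $E(u^{(1)}, \ldots, u^{(K)}) = \sum_{i_1, \ldots, i_K} E_{i_1 \cdots i_K} u^{(1)}_{i_1} \cdots u^{(K)}_{i_K}$ and $S^{d-1}$ is the unit sphere. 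For any fixed tuple of unit vectors this multilinear form is a linear combination of the i.i.d. Gaussian entries whose coefficient vector has squared norm $\prod_m \norm{u^{(m)}}_2^2 = 1$, so it is distributed as $\mathcal{N}(0,\sigma^2)$, and the one-dimensional Gaussian tail bound gives $\Pbb[\,|E(u^{(1)},\ldots,u^{(K)})| > t\,] \leq 2 e^{-t^2/(2\sigma^2)}$. The obstacle is that the supremum ranges over an uncountable set, so this pointwise control is not enough by itself.

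Next I would discretize. For each mode $m$ build an $\epsilon$-net $\mathcal{N}_m$ of $S^{d_m-1}$ with $|\mathcal{N}_m| \leq (1 + 2/\epsilon)^{d_m} \leq (3/\epsilon)^{d_m}$, so the product net has cardinality at most $(3/\epsilon)^{\sum_m d_m}$. A telescoping argument over the $K$ modes (replace one unit vector at a time by its nearest net point, each replacement changing the form by at most $\epsilon \norm{E}_2$) shows that
\[
\norm{E}_2 \leq (1 - K\epsilon)^{-1} \max_{\text{net}} E(u^{(1)},\ldots,u^{(K)}) \qquad (K\epsilon < 1),
\]
so for a fixed radius such as $\epsilon = 1/(2K)$ the prefactor is bounded by a universal constant. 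Then a union bound over the at most $(3/\epsilon)^{\sum_m d_m}$ net tuples, with $t = \sigma\sqrt{2\big(\sum_m d_m\big)\ln(3/\epsilon) + 2\ln(2/\delta)}$, keeps the total failure probability below $\delta$; combining this with the net-to-sphere comparison yields a high-probability bound of the stated form.

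The symmetric case follows immediately: the symmetric spectral norm is a supremum over the diagonal $u^{(1)} = \cdots = u^{(K)} = u \in S^{d-1}$, a subset of the domain above, so it is dominated by the general bound, and in any event the same single-sphere net argument applies verbatim.

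The main obstacle is the net-approximation step, namely tracking how the per-mode perturbation $\epsilon$ compounds across the $K$ modes without inflating the constant, and then optimizing the net radius $\epsilon$ to balance the approximation prefactor $(1-K\epsilon)^{-1}$ against the $\ln(3/\epsilon)$ growth of the log-covering number. This optimization is exactly what produces the sharp constant $8$ and the $\ln(2K/K_0)$ factor with $K_0 = \ln(3/2)$; rather than re-deriving these I would invoke \cite{tomioka2014spectral} to pin them down, since the probabilistic content (Gaussianity of each slice plus the union bound) is routine and only the constant bookkeeping is delicate.
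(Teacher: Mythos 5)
The paper itself gives no proof of this statement: Theorem~\ref{gtb} is imported verbatim from \cite{tomioka2014spectral} as a tool for the utility analysis, so there is no internal argument to compare against. Your reconstruction of the proof in the cited source is the correct one and matches its method: the multilinear form at any fixed tuple of unit vectors is $\mathcal{N}(0,\sigma^2)$ because the coefficient vector $u^{(1)}\otimes\cdots\otimes u^{(K)}$ has unit norm; a per-mode $\epsilon$-net of cardinality at most $(1+2/\epsilon)^{d_m}$ discretizes each sphere; the successive-substitution step gives $\norm{E}_2 \leq (1-K\epsilon)^{-1}\max_{\mathrm{net}}$; and a union bound over the product net with $t=\sigma\sqrt{2\big(\sum_m d_m\big)\ln(3/\epsilon)+2\ln(2/\delta)}$ yields a bound of the stated shape (for instance $\epsilon=1/(2K)$ makes the prefactor $2$, which after squaring produces the constant $8$). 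Deferring the exact constants $8$ and $\ln(2K/K_0)$ with $K_0=\ln(3/2)$ to the citation is legitimate, since they come only from optimizing the net radius.

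The one genuine flaw is your closing paragraph on the symmetric case. Symmetry in the theorem's final remark is a property of the noise tensor, not of the variational domain: a symmetric Gaussian tensor has entries tied together across index permutations, so its entries are \emph{not} i.i.d.\ over all of $[d]^K$ and the theorem's hypothesis fails for it; observing that $\sup_u E(u,\ldots,u)$ is a sub-supremum of $\sup_{u^{(1)},\ldots,u^{(K)}}E(u^{(1)},\ldots,u^{(K)})$ does not address this at all. The clean repair is to write the symmetric noise as the symmetrization $E^{\mathrm{sym}}=\frac{1}{K!}\sum_\pi E^\pi$ of an i.i.d.\ tensor $E$ over mode permutations $\pi$; since the spectral norm is invariant under permuting modes, the triangle inequality gives $\norm{E^{\mathrm{sym}}}_2\leq\norm{E}_2$, so the i.i.d.\ bound transfers. (If instead the symmetric tensor is generated by sampling i.i.d.\ values on sorted index tuples and copying them to all permutations, the variance of each evaluation inflates by at most $K!$, costing a $\sqrt{K!}$ factor --- a constant in this paper, which only uses $K=3$.) That is the argument the remark ``by extension the bound also holds if the tensor is symmetric'' actually requires.
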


\end{document}